\def\*#1{\mathbf{#1}}
\newcommand{\RR}{\mathbb{R}}
\newcommand{\E}{\mathbb{E}}
\newcommand{\Var}{\mathrm{Var}}
\DeclarePairedDelimiterX{\inner}[2]{\langle}{\rangle}{#1, #2}
\DeclarePairedDelimiterX{\abs}[1]{\lvert}{\rvert}{#1}
\DeclarePairedDelimiterX{\norm}[1]{\lVert}{\rVert}{#1}
\DeclarePairedDelimiterX{\ceil}[1]{\lceil}{\rceil}{#1}
\DeclarePairedDelimiterX{\braces}[1]{\{}{\}}{#1}
\DeclarePairedDelimiterX{\parens}[1]{(}{)}{#1}
\DeclarePairedDelimiterX{\brackets}[1]{[}{]}{#1}
\DeclareMathOperator*{\argmin}{arg\,min}
\theoremstyle{plain}
\newtheorem{theorem}{Theorem}[section]
\newtheorem{lemma}[theorem]{Lemma}
\newtheorem{corollary}[theorem]{Corollary}
\theoremstyle{definition}
\newtheorem{assumption}[theorem]{Assumption}
\theoremstyle{remark}
\newtheorem{remark}[theorem]{Remark}
\DeclareMathOperator{\supp}{supp}
\newcommand{\vectorize}[1]{\text{vec}(#1)}
\crefname{equation}{}{}
\Crefname{equation}{}{}
\newcommand{\lora}[1]{\tilde{#1}}
\newcommand{\cmark}{\ding{51}}%
\newcommand{\xmark}{\ding{55}}%
\newcommand{\piopt}{\bm{\pi}}  
\newcommand{\pihat}{\hat{\bm{\pi}}}  
\newcommand{\popt}{\*{p}}  
\newcommand{\phat}{\hat{\*{p}}}  
\newcommand{\pickopt}{\piopt_c^k}  
\newcommand{\pickhat}[1]{\pihat_{c,#1}^k}  
\newcommand{\pckopt}{\popt_c^k}  
\newcommand{\pcopt}{\*{p}_{c}}
\newcommand{\pckhat}[1]{\phat_{c,#1}^k}  
\newcommand{\idx}[1]{{i_{#1}}}
\newcommand{\wck}[1]{\*w_{c,#1}^k}  
\newcommand{\wcck}[1]{\*w_{c',#1}^k}
\newcommand{\gck}[1]{\*g_{c,#1}^k}  
\newcommand{\wopt}{\*w^*}
\newcommand{\wcopt}{\*w_c^*}  
\newcommand{\wccopt}{\*w_{c'}^*}
\newcommand{\Ehat}{\hat{\E}}  
\newcommand{\Ekc}{\E_{k|c}}
\newcommand{\Ehatkc}{\Ehat_{k|c}}
\newcommand{\Eic}{\E_{\idx{t}|c}}
\newcommand{\wcbar}[1]{\bar{\*w}_{c,#1}}
\newcommand{\wchat}[1]{\hat{\*w}_{c,#1}}
\newcommand{\wctilde}[1]{\tilde{\*w}_{c,#1}}
\newcommand{\gcbar}[1]{\bar{\*g}_{c,#1}}
\newcommand{\gctilde}[1]{\tilde{\*g}_{c,#1}}
\newcommand{\wbar}[1]{\bar{\*w}_{#1}}
\newcommand{\wtilde}[1]{\tilde{\*w}_{#1}}
\newcommand{\gbar}[1]{\bar{\*g}_{#1}}
\newcommand{\gtilde}[1]{\tilde{\*g}_{#1}}
\newcommand{\pckdelta}[1]{\bm{\delta}^k_{c,#1}}
\newcommand{\pcdelta}[1]{\bm{\delta}_{c,#1}}
\newcommand{\pkdelta}[1]{\bm{\delta}^k_{#1}}
\newcommand{\maxgap}[1]{\*f^{\max}_{c,t}}
\title{Collaborative and Efficient Personalization with Mixtures of Adaptors}
\newcommand{\mbzuai}{Mohamed bin Zayed University of Artificial Intelligence}
\author{%
  Abdulla Jasem Almansoori \quad Samuel Horv\'{a}th \quad Martin Tak\'{a}\v{c} \\
  \mbzuai, Abu Dhabi, UAE \\
  \texttt{firstname.lastname@mbzuai.ac.ae}
}
\newcommand{\separator}{ \\ \indent }
\begin{document}

\maketitle

\begin{abstract}
Heterogenous data is prevalent in real-world federated learning. We propose a parameter-efficient framework, Federated Low-Rank Adaptive Learning (FLoRAL), that allows clients to personalize in groups by mixing between low-rank adaptors, where the mixtures are client-specific. FLoRAL is a model parameterization that casts personalized federated learning as a multi-task learning problem, with weight sharing as an implicit regularizer. It is memory-efficient, as the personalized parameters (i.e., base model + adaptors) are all federated. Our results show that FLoRAL can generalize better than a mixture of full models when data are scarce. It can also consistently personalize better than models with a locally tuned adaptor per client. This demonstrates the benefits of “federated personalization” and its robustness against overfitting. We derive the convergence rates and show theoretically that FLoRAL can lead to better variance reduction of the base model's gradients.
\footnote{Code: \url{https://github.com/zeligism/FLoRAL}}
\end{abstract}

\section{Introduction}
\begin{wrapfigure}{r}{0.38\linewidth}
    \begin{center}
    \includegraphics[width=1\linewidth]{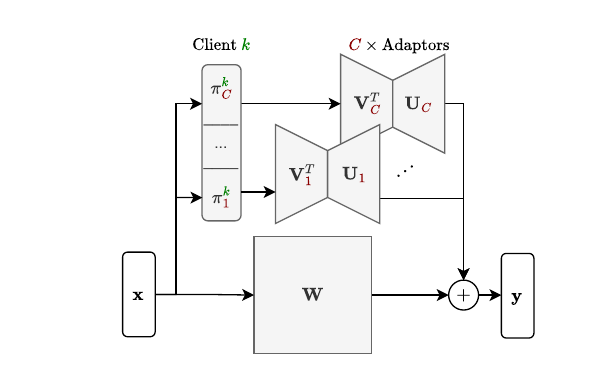}
    \end{center}
    \caption{{\color{OliveGreen} Personalization for client $k$} by {\color{BrickRed} mixing $C$ adaptors}.
    }
    \label{fig:floral-diagram}
\end{wrapfigure}
In Federated Learning (FL), clients serve as decentralized holders of private data, and they can collaborate via secure aggregation of model updates, but one of the main challenges is the heterogeneity of the clients \citep{kairouz2021advances}. For example, heterogeneity can be in terms of data distributions (statistical heterogeneity) or client capabilities (system heterogeneity) \citep{gao2022survey}. In this work, we are interested in a statistical heterogeneity where labels are predicted differently across clients. In particular, this can be viewed under the lens of multi-task learning \citep{fedem2021marfoq} or clustering \citep{werner2023provably} such that there are only a few ground-truth tasks or clusters across all clients.

The central assumption in our work is that the personalized models across clients should be similar enough to benefit from collaboration, but they also need to be sufficiently different and expressive to fit and generalize on their personal data.
The differences between clients can be thought of as 1) statistical in terms of data (e.g., shifts in distributions) or structural in terms of model (e.g., structured differences in subsets of parameters). To learn these differences \textit{efficiently}, we often assume that they are \textit{low-complexity} differences.

Most approaches maintain that the personalized models are either close in distance to the global model via proximal regularization \citep{li2021ditto,
beznosikov2021decentralized,
borodich2021decentralized,
sadiev2022decentralized} 
or meta-learning \citep{fallah2020perfedavg}, or that the personalized models belong to a cluster of models \citep{fedem2021marfoq,werner2023provably}. Other approaches also assume model heterogeneity, where clients might have a local subset of parameters that are not averaged \citep{pillutla2022partial,mishchenko2023partially} where it can personalize to the local task by construction \citep{almansoori2024padpaf}. For example, a specific subset of parameters can be chosen to be the last layer or some added {\it adaptors}.

Fine-tuning works particularly well for personalization \citep{Cheng2021FinetuningIF}. One well-known example of efficient fine-tuning is Low-Rank Adaptors (LoRA) \citep{hu2021lora}, which are used to personalize large language models on different tasks. The fine-tuning is done on an additive low-rank matrices instead of the full matrices. Thus, the personalized models differ from the base model only in low-rank matrices.
Inspired by the efficiency of low-rank adaptors in multi-task learning for language models and the idea that fine-tuning changes parameters along a low-dimensional intrinsic subspace \citep{li2018measuring,aghajanyan2020intrinsic}, we use low-rank adaptors in the FL setting and show that they can offer significant improvements with a relatively small memory budget.

Thus, instead of regularizing the complexity of a personalized model by its proximity to a reference solution or clustering full models, we explicitly parameterize the personalized models as having a few common low-rank differences from the global model. This is done by introducing a small number of low-rank adaptors per layer and a mixture vector per client that mixes between those adaptors. Thus, it implicitly regularizes the personal models through weight-sharing with low-rank differences.
Our approach explicitly constrains the complexity of the difference between the global model and the personalized model, and casts the problem of learning these differences as a multi-task learning problem.
Its main benefit is that the low-rank adaptors can also be federated and collaboratively learned. The number of local personalization parameters per client (i.e., the mixture vectors) is minimal, which means our approach can be efficiently employed in the cross-device setting.

\paragraph{Contributions} Here, we summarize our contributions:
\begin{enumerate}[topsep=-3pt,left=6pt]
    \item We propose the \textit{Federated Low-Rank Adaptive Learning (FLoRAL)}, an efficient and lightweight FL framework for personalization. It acts as an extension to multi-task learning algorithms that are specifically designed for FL.
    \item Perhaps counter-intuitively, we show experimentally that a model with a mixture of adaptors can beat a mixture of models, even though the number of parameters is significantly larger, e.g., 9x larger.
    Also, a model with a mixture of adaptors on stateless clients (e.g., see \cref{sec:implementation}) can generalize better than a model with a dedicated fine-tuned adaptor on stateful clients. This is a perfect demonstration of the efficiency of FLoRAL and the benefits of collaborative learning.
    \item We release the code for this framework, which includes plug-and-play wrappers for PyTorch models~\citep{paszke2019pytorch} that are as simple as \verb|Floral(model, rank=8, num_clusters=4)|. We also provide minimal extensions of Flower client and server modules~\citep{beutel2022flower}, making the adoption of our method in practice and reproducing the experiments seamless and easy.
    \item We run various experiments and ablation studies showing that our FLoRAL framework is efficient given resource constraints in terms of relative parameter increase.
    \item We provide the convergence rate for local SGD on a multi-task objective with learnable router and highlight the difficulties that arise from aggregation mismatch.
    We also provide an extended analysis in the appendix showing better variance reduction from weight sharing.
\end{enumerate}

\section{Related Work}
\paragraph{Multi-task Learning} Our problem can be seen as a multi-task learning problem in which the solutions share a base model. The closest work to ours in this respect is FedEM \citep{fedem2021marfoq}, which works by assigning to each client a personalized mixture vector that mixes between a small number of full models such that each model solves one task. FedEM then proceeds with an algorithm based on expectation maximization. One problem is that their approach assumes that the full models should be mixed. In contrast, we assume that the mixed components are only the adaptors, which constitute a small fraction of the model and are thus much more efficient in terms of memory. Other related works on clustering include IFCA \citep{ghosh2020efficient}, FedSoft \citep{softclustering2022li}, and Federated-Clustering \citep{werner2023provably}. The main difference from our work is that we only cluster a small component of the whole model, allowing the clients to benefit from having a shared base model that is learned among all the clients.

\paragraph{Personalization} Another approach to personalization is by introducing a proximal regularizer with respect to a reference model. 
Ditto \citep{li2021ditto} is a stateful algorithm that trains the local models by solving a proximal objective with respect to a reference model. The reference model is the FedAvg solution, which is attained concurrently by solving the non-regularized objective. Meta-learning approaches, inspired by \citet{finn2017modelagnosticmetalearningfastadaptation}, can extend naturally to personalization. For example, \citet{fallah2020perfedavg} propose to solve a local objective that is an approximate solution after one local gradient step. Meta-learning also assumes that the local solutions are close to the FedAvg solution as they mimic fine-tuning from the FedAvg solution in some sense. In our approach, we do not assume that the clients are stateful nor that the FedAvg solution is meaningful or close to any of the local solutions. We assume that the local models can benefit from collaboration but still allow for personalization via different mixtures, which is much more memory efficient and can be managed by the server.

\paragraph{LoRA} Using mixture of LoRAs in FL is not new due to their popularity. The idea of mixing LoRAs has been explored recently \citep{wu2024mole} for language models. SLoRA \citep{babakniya2023slora} focuses on parameter-efficient fine-tuning after federated training and thus does not federate the adaptors. Both FedLoRA \citep{wu2024fedlora} and pFedLoRA \citep{yi2024pfedlora} assume that the LoRAs are not federated as well, where they both also introduce a specific two-stage algorithm to train those LoRAs. The federated mixture of experts \citep{reisser2021fedmix} trains an ensemble of specialized models, but they specialize in input rather than prediction. FedJETs \citep{dun2023fedjets} uses whole models as experts in addition to a pre-trained feature aggregator as a common expert that helps the client choose the right expert.
Other works explore mixture of LoRAs \citep{zhu2023sirasparsemixturelow,yang2024mlore} for adaptation but in a different, non-collaborative context.

\paragraph{Representation Learning} Other successful approaches in FL work by feature, prototype, or representation aggregation \citep{tan2022fedprotofederatedprototypelearning,nguyen2022fedsr,hao2023fedcr}, which makes them orthogonal to our work as they work in the feature space.

\section{Preliminary}
\paragraph{Notation}
We denote $[N] =  \{1, 2, \hdots, N\}.$
We reserve some indices for specific objects: $k \in [K]$ is a superset index\footnote{In general, we reserve the superset for clients and the subset for clusters.} denoting the client with $K$ being the number of clients, and $c \in [C]$ is a subset index denoting the cluster with $C$ being the number of clusters. The number of clients in cluster c is $K_c$. 
The client sampling distribution is $\mathcal{K}$, or $\mathcal{K}_c$ when given cluster $c$.
The number of samples in client $k$ is $N^k$, and the total number of samples is $N = \sum_{k=1}^K N^k$.
We will use bold lowercase characters to denote vectors, e.g., $\*w$, and uppercase bold characters for matrices, e.g., $\*W$.
Further, $\bm{1}\{A\}$ is the indicator function of event $A$, and $\vectorize{\cdot}$ is the vectorization operator.
A simplex $\Delta^{C-1}$ is a space such that, for all $\bm{\pi} \in \Delta^{C-1}$, we have $\sum_{c=1}^C \bm{\pi}_c = 1$ and $\bm{\pi}_c \geq 0, \forall c \in [C]$.

\subsection{Federated Learning}\label{sec:fl}
Federated learning (FL) is a framework for training a model on distributed data sources while the data remains private and on-premise.
Let $K$ be the number of clients and the local loss function for client $k$ be $f^k(\*w)$.
The global objective is
\begin{equation}\label{eq:fl}
    \min_{\*w} \quad \E_{k \sim \mathcal{K}} [f^k(\*w)], \tag{FL}
\end{equation}
where $\mathcal{K}$ is a client distribution with support $[K]$.
The functions $f^k(\*w)$ can be stochastic as well.
The most straightforward algorithm for optimizing \cref{eq:fl} is FedAvg \citep{mcmahan2023fedavg}, which proceeds in a cycle as follows: 1) send copies of the global model to the participating clients, 2) train the copies locally on the client's data, and then 3) send back the copies and aggregate them to get the new global model.

The objective \cref{eq:fl} assumes that a single global model can obtain an optimal solution that works for all the objectives, which is often not feasible due to heterogeneities in data distribution and system capabilities \citep{kairouz2021advances}. A natural approach would be to consider \textit{personalized} solutions $\*w^k$ for each client $k$, an approach called PersonalizedFL (PFL).
\begin{equation}\label{eq:pfl}
    \min_{\{\*w^k\}_{k=1}^K} \quad
        \E_{k \sim \mathcal{K}} [f^k(\*w^k)] + \Gamma(\*w^1, \cdots, \*w^K). \tag{PFL}
\end{equation}
Without the regularizer $\Gamma$, the objective would simply amount to local independent training for each client, so clients do not benefit from collaboration and can suffer from a low availability of data.
Adding the regularizer $\Gamma$ helps introduce a collaboration incentive or inductive bias. For example, Ditto \citep{li2021ditto} adds a proximal regularizer $\Gamma(\*w^1, \cdots, \*w^K; \*w^*) = \frac{\lambda}{2} \sum_{k=1}^K \norm{\*w^k - \*w^*}^2$, where $\*w^*$ is the solution of \cref{eq:fl}. 
However, this assumes that a single global solution is a good enough center for \textit{all} clients, which can be limiting for capturing real-world heterogeneities. An improvement on this assumption is to introduce more than one center, such that clients belonging to some group are close to its center. The problem of finding the group centers is called Clustered FL (CFL).

Let $C$ be the number of ground-truth clusters and assume that it is known. 
Let $\mathcal{K}_c$ be the client sampling distribution of cluster $c$. We can reformulate the objective to account for clusters as follows
\begin{equation}\label{eq:cfl}
    \min_{\{\*w_c\}_{c=1}^C} \quad
        \sum_{c=1}^C \E_{k \sim \mathcal{K}_c} [f^k(\*w_c)]. \tag{CFL}
\end{equation}
We can generalize the previous objectives under one objective by introducing (learnable) client mixtures $\bm{\pi}^k \in \Delta^{C-1}$ for all $k \in [K]$ with regularization $\Gamma$, e.g. for weight sharing, which we denote as Mixed Federated Learning (MFL)
\begin{equation}
\begin{aligned}
    \min_{\{\*w_c\}_{c=1}^C, \{\bm{\pi}^k\}_{k=1}^K}
    &\quad
    \sum_{c=1}^C  \E_{k \sim \mathcal{K}} \left[ \bm{\pi}^k_c f^k(\*w_c) \right]
            + \Gamma(\{\*w_c\}_{c=1}^C),
            \nonumber
    \\\text{s.t.}
    &\quad
    \bm{\pi}^k \in \Delta^{C-1}, \forall k \in [K],
    . 
\end{aligned}
\tag{MFL} \label{eq:mfl}
\end{equation}
We can see that local losses from different clusters are mixed differently according to each client.
From this formulation, we can recover \cref{eq:cfl} by setting $\Gamma(\cdot)=0$ and $\bm{\pi}^k_c = \bm{1}\{k \in \supp(\mathcal{K}_c)\}$, whereas \cref{eq:pfl} can be recovered by setting $C=K$ and $\bm{\pi}^k_c = \bm{1}\{k \in \supp(\mathcal{K}_c)\}$.


In our FLoRAL formulation, we use particular form of $\Gamma$, where we split $\*w_c = [\*u_c, \*a_c]$ and define 
\begin{align*}
    \Gamma(\{\*w_c\}_{c=1}^C) = 
    \begin{cases}
    0 & \text{ if } \*u_i = \*u_j \; \forall i,j \in [C],\\
 +\infty & \text{otherwise.}\\
    \end{cases}
\end{align*}
This weight-sharing across clients is based on the inductive bias that the optimal personalized solutions have {\it low-complexity} differences across the population (i.e., differences that could be explained in a parameter-efficient way). Therefore, in the rest of the paper, we do not use $\Gamma(\{\*w_c\}_{c=1}^C)$, but we replace it with explicit parametrization, where $\*w_c = (\*u, \*a_c)$. We refer to $\{\*a_c\}_{c \in [C]}$ as adaptors. 
The final objective, which we call MFL with Weight Sharing (MFL-WS), is of the form 
\begin{equation}
\begin{aligned}
    \min_{\*u, \{\*a_c\}_{c=1}^C, \{\bm{\pi}^k\}_{k=1}^K}
    &\quad
    \sum_{c=1}^C  \E_{k \sim \mathcal{K}} \left[ \bm{\pi}^k_c f^k(\*u, \*a_c) \right]
            \nonumber
    \\\text{s.t.}
    &\quad
    \bm{\pi}^k \in \Delta^{C-1}, \forall k \in [K]
    . 
\end{aligned}
\tag{MFL-WS} \label{eq:mfl-ws}
\end{equation}
In the next section, we discuss the particular choice of adaptors.

\subsection{Parameter-Efficient Adaptors}\label{sec:lora}


\paragraph{Linear layer}\label{sec:linear-lora}
Let $\*W \in \RR^{d_{out} \times d_{in}}$ be the base linear layer. The low-rank adaptor with rank $r$ is $\*L := \*U\*V^\top$, where $\*U \in \RR^{m \times r}$ and $\*V \in \RR^{n \times r}$. We initialize $\*L$ such that $\*U$ is random (or initialized similarly to $\*W$) and $\*V$ is zero.
The adaptive layer is then $\lora{\*W} := \*W + \*L = \*W + \*U\*V^\top$

\paragraph{Relative parameter budget} It is easy to see that the number of parameters in a linear LoRA is $(m+n)r$, which can be much smaller than $mn$ for small $r$.
We can have a constraint on the number of parameters relative to the model size, i.e., $(m+n)r \leq \rho mn$, where $\rho > 0$ is the relative parameter budget per adaptor (e.g., $\rho = 0.01$ for a maximum of 1\% increase in model size per adaptor).
Given a specific $\rho$ based on system's capabilities, $r$ can be automatically set to be the maximum such that $r \leq \rho mn/(m+n)$, or just $r = \lfloor \rho mn/(m+n) \rfloor$. 
We hereafter refer to $\rho$ as the \emph{budget} and set it to either 1\% or 10\% in the experiments.
Note that, for certain models, it is impossible to satisfy the budget if $\rho mn/(m+n) < 1$, so we enforce a minimum rank of 1 as otherwise, there will be no adaptors.

\paragraph{Convolution layer}\label{sec:conv-lora}
Consider a 2D convolution layer. Let $\*W \in \RR^{c_{out} \times c_{in} \times k_1 \times k_2}$ be the base convolution layer. We similarly introduce a convolution ``low-rank'' adaptor (ConvLoRA) $\*L = \*U * \*V$, such that the adaptove layer $\lora{\*W}$ becomes $(\*W +\*L)*x = \*W*x + \*L*x = \*W*x + \*U * (\*V *x)$, where $*$ is the convolution operator. Note that ConvLoRA is introduced in the official implementation of \citep{hu2021lora}, but it is a linear LoRA on a matricized convolution.
In our case, we can have more than one way of defining $\*U$ and $\*V$.
Depending on what is meant by ``rank'', we can either reduce the rank channel-wise, filter-wise, both, or as a linear layer by matricizing the convolution. We defer the full details to \cref{app:conv-lora}, where we show that a novel channel+filter-wise implementation is more parameter-efficient and performs better.
Further details about low-rank constructions of convolution layers can be found in \citep{jaderberg2014speeding,khodak2022initialization}.

\paragraph{Bias}\label{sec:bias-lora}
Biases are vectors, so a low-rank parameterization would not be possible, and there is no straightforward way to have a parameter-efficient adaptor except by considering weight-sharing or a single constant. Due to biases contributing a small percentage of the overall number of parameters in large models, we consider adaptive biases as $\*b + \*L_{\*b}$ with extra biases $\*L_{\*b}$ initialized to 0. Although this adaptor is not parameter-efficient relative to $\*b$, the small impact on the overall parameter count means that this is not a significant limitation. Moreover, as demonstrated in \cref{sec:ab-floral}, this approach can be crucial for achieving optimal accuracy.

\section{Analysis}\label{sec:analysis}
In order to connect the analysis with our FLoRAL framework, we can consider a vector parameterization of the model given client $k$ and cluster $c$ as in \cref{eq:mfl-ws}.
Namely, we have $\wck{t} = (\*u^k_t, \*a^k_{c,t})$,
where it is understood as the concatenation of the two vectors and we emphasize that $\*u^k_t$ does not depend on the cluster. For example, $\*u^k_t = \vectorize{\*W^k_t}$ can be the base layer and $\*a^k_{c,t} = (\vectorize{\*U^k_{c,t}}\top \ \ \vectorize{\*V^k_{c,t}}^\top)^\top$ can be the LoRA adaptor.
The analysis proceeds without assumptions on the form of $\wck{t}$. In \cref{sec:benefits-of-weight-sharing}, we show the full analysis on \cref{eq:fml}.

Recall $\piopt^k \in \Delta^{C-1}$ the ground-truth router of client $k$.
In general, the probability of sampling a \emph{single} client $k$ is often chosen to be proportional to the number of its data points, i.e., $p(k) \propto N^k$ (note this is different from sampling a \emph{cohort}, which is explained below). On the other hand, the probability that client $k$ samples cluster $c$ is $p(c|k)=\bm{\pi}^k_c$ by construction. Since we have $p(k,c) = p(c|k)p(k) \propto \bm{\pi}^k_c N^k$, we can divide $p(k,c)$ by $p(c)=\sum_k p(k,c)$ to get $p(k|c)$.
Overall, we have $p(c|k) = \bm{\pi}^k_c$ by construction and $p(k) = \frac{N^k}{N}$ by assumption, so that
\begin{equation}\label{eq:probabilities}
    p(k,c) = \frac{N^k}{N} \bm{\pi}^k_c,
    \quad\quad
    p(c) =  \sum_{k=1}^K \frac{N^k}{N} \bm{\pi}^k_c,
    \quad\quad
    p(k|c) = \frac{\bm{\pi}^k_c N^k}{\sum_{k'=1}^K \bm{\pi}_{c}^{k'} N^{k'}}.
\end{equation}

\paragraph{Notation} Denote $\pickhat{t}$ the learned estimate of $\pickopt$ at iteration $t$.
Denote $\pckopt := p(k|c) \propto \pickopt N^k$ and similarly $\pckhat{t} \propto \pickhat{t} N^k$.
Define the aggregation operators $\Ekc[\wck{t}] := \sum_{k=1}^{K} \pckopt \wck{t}$ and $\E_{c|k}[\wck{t}] := \sum_{c=1}^{C} \pickopt \wck{t}$.
Additionally, we denote using $\Ehat$ the same aggregation operators but taken with respect to $\pckhat{t}$ and $\pickhat{t}$, respectively.

Recall that the mixed (or personalized) objective of client $k$ is $\E_{c|k} [f^k(\wck{t})] := \sum_{c=1}^C \pickopt f^k(\wck{t})$.
The objective \cref{eq:mfl} can be stated more succinctly as
\begin{equation}
    \label{eq:cluster-objective-gt}
    \min_{\*w_1, \cdots, \*w_C} \quad \E_{c,k}[f^k(\*w_c)].
\end{equation}
\begin{remark}
    Consider a cluster assignment router (i.e., one-hot w.r.t.\ $c$). Let $k \sim \mathcal{K}$ and $\bar{c}$ be its associated cluster. Then, $\E_{c|k} [f^k(\*w_{c})] = f^k(\*w_{\bar{c}})$ and $\Ekc [f^k(\*w_c)] = f_{\bar{c}}(\*w_{\bar{c}})$.
\end{remark}
\paragraph{Local SGD} With the above notation in hand, we follow the local SGD framework with perturbed iterates \citep{stich2019local}.
Note that our work is orthogonal to \citep{wang2024lightweightmethodtacklingunknown} since they can estimate $p(k)$ with an unbiased participation indicator variable, whereas we assume that $p(k)$ is known and estimate $p(c|k)$ instead, which cannot be unbiased itself because of the dependency of the estimate on the optimal objective values. Also, the analysis \citet{pillutla2022partial} cannot be directly adapted because it is concerned with a split of global and local variables (i.e., weights and mixture, respectively), whereas we take into account weight sharing across clusters and train mixtures (i.e., the local parameters) explicitly..

For client $k$ and cluster $c$, the algorithm starts with the initialization $\wck{0}=\Ehat_{k|c}[\wck{0}]$ with $\pickhat{0}=1/C$, without loss of generality.
We define the aggregated gradient as $\gck{t} = \nabla f^\idx{t}(\wck{t})$ for independently sampled clients $\idx{t} \sim \mathcal{K}$ every $H$ steps, i.e., $i_t = \cdots = i_{t_0}$ for all $t \geq t_0$ where $t_0 = t - (t \mod H)$. Though similar, we will explicitly reserve the random variables $\idx{t}$ for denoting sampled clients at time $t$ and $k$ for denoting a ``tracking'' variable of the expected performance over clients, which will be independent of $\idx{t}$.
Let $c \in [C]$ and define $f_c := \Eic [f^i]$. Assume an unbiased estimate $\Eic [\gck{t}] = \nabla f_c(\wck{t})$,
where we denote $\Eic$ the expectation with respect to $\idx{t}$ given $c$.
Let $\wcopt$ be any point satisfying $\nabla f_c(\wcopt) = 0$.
We run $T$ gradient steps $\wck{t+1} = \wck{t} - \eta_t \gck{t}$ with a learning rate $\eta_t$.
Synchronization happens every $H$ iterations so that $\wck{t+1} = \Ehat_{k|c}[\wck{t} - \eta_t \gck{t}]$, $\forall t$ such that $(t+1) \mod H = 0$.
The algorithm we use in the analysis is the following
\begin{align}
    \wck{t+1} &=
    \left\{
      \begin{smallmatrix*}[l]
        \wck{t} - \eta_t \gck{t}, \quad &\text{if } (t+1)\mod H > 0 \\
        \Ehat_{k|c,\pihat_t}[\wck{t} - \eta_t \gck{t}], \quad &\text{otherwise,}
    \end{smallmatrix*}
    \right.
    \label{eq:algorithm-weights-update} \\
   \pihat_{t+1} &\propto \left\{
      \begin{smallmatrix*}[l]
        \pihat_{t}, \quad &\text{if } (t+1)\mod H > 0 \\
        \exp(-\eta_t f_c(\wck{t+1})), \quad &\text{otherwise}.
    \end{smallmatrix*}
    \right.
    \label{eq:algorithm-router-update}
\end{align}
All of the practical implementation details will be discussed in more detail in the next section.

Following the local SGD analysis in \citep{stich2019local}, we make the following corresponding assumptions.
\begin{assumption}[$L$-smoothness and $\mu$-strong convexity]
    \label{ass:L-smooth-mu-sc}
    $f_c$ is $L$-smooth and $\mu$-strongly convex.
    In other words, $\forall \*w, \*v \in \RR^{d}, \forall c$, the following holds
    \begin{align}
        f_c(\*v) - f_c(\*w) - \inner{\nabla f_c(\*w)}{\*v - \*w} &\leq \tfrac{L}{2} \norm{\*v-\*w},
        \\
        f_c(\*v) - f_c(\*w) - \inner{\nabla f_c(\*w)}{\*v - \*w} &\geq \tfrac{\mu}{2} \norm{\*v-\*w}.
    \end{align}
\end{assumption}
\begin{assumption}[Bounded second moment]
    \label{ass:G-lipschitz}
    $\forall \*w \in \RR^{d}$, $\forall c \in  [C]$,
    $\Eic \norm{\nabla f^{\idx{t}}(\*w)}^2 \leq G^2$.
\end{assumption}
\begin{assumption}[Bounded variance]
    \label{ass:sigma-bounded-variance}
    $\forall \*w \in \RR^{d}$, $\forall c \in [C]$,
    $\Eic \norm{\nabla f_c(\*w) - \nabla f^{\idx{t}}(\*w)}^2 \leq \sigma^2$.
\end{assumption}

The main quantity of interest in our analysis is the total variation distance $\norm{\pcdelta{t}}_1$ where $\pcdelta{t} := (\abs{\pckhat{t} - \pckopt})_{k=1}^K$.
We may also refer to it as the {\it aggregation mismatch}, or just {\it mismatch}.

Using the router update in \cref{eq:algorithm-router-update}, we can obtain the convergence bound of local SGD but with an extra $\mathcal{O}(\frac{G}{\mu T})$ term and a learning rate inversely proportional to $\max\{L,G\}$ instead of $L$.
This seems to be unavoidable without extra assumptions due to a circular dependency between $\pcdelta{t}$ and $f_c(\wck{t})$.
However, we show in \cref{thm:local-sgd-recovery} that local SGD descent is recovered when $\pckhat{t} = \pckopt$.
The convergence rate for this general case can be seen in \cref{thm:convergence-rate}.

Here, we present a convergence bound given an assumption on the decrease of $\norm{\pcdelta{t}}_1^2$.
The exact bound can be found in \cref{thm:convergence-rate-tv-decay}.
We defer all proofs to \cref{sec:proofs}.
\begin{theorem}
    \label{thm:convergence-rate-tv-decay-main}
    Consider the setup in \cref{sec:analysis}.
    Let $\tilde{\sigma}^2 = \sigma^2 \norm{\pcopt}^2$, $\kappa = \frac{L}{\mu}$, and $U_c = \min_{k;\, p(c) \leq \pickopt} \{ p(c)/\pickopt \}$.
    Initialize $\pickhat{0} = 1/C$ for all $k \in [K]$, and assume $\abs{\pckopt - \pckhat{t}} \leq \abs{\pckopt - \pckhat{0}}$ for all $t \geq 0$.
    Assume that $f_c(\wcopt) = 0$ without loss of generality,
    and assume that $\norm{\pcdelta{t}}_1^2 \leq (t+s)^{-\beta} \norm{\pcdelta{0}}_1^2$ for $\beta \in (0,1)$.
    Let $\eta_t \leq \frac{\alpha}{t + s}$ with $\alpha = \frac{1}{\mu}$ and $s \geq \max\{3H, 4 \kappa/U_c \}$.
    Then,
    \begin{equation}
    \E f_c(\wchat{T}) - f_c(\wcopt)
    \leq \mathcal{O}\left(
        \frac{\tilde{\sigma}^2}{\mu T}
        + \frac{G^2 \norm{\pcdelta{0}}_1^2}{\mu T^{1{\color{magenta} +\beta}}}
        + \frac{G^2 \kappa H^2}{\mu T^2}
    \right)
        .
    \end{equation}
\end{theorem}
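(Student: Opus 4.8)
The plan is to run the perturbed-iterates analysis of \citet{stich2019local} on the \emph{virtual} averaged sequence $\wcbar{t} := \Ekc[\wck{t}] = \sum_{k=1}^K \pckopt\wck{t}$, formed with the ground-truth router weights $\pckopt$, and to superimpose on the vanilla $\mu$-strongly-convex local-SGD recursion three extra error sources: (i) the stochasticity of the aggregated gradient $\gcbar{t} := \sum_k\pckopt\gck{t}$, which by a variance-of-a-weighted-sum estimate together with \cref{ass:sigma-bounded-variance} contributes an $\eta_t^2\tilde\sigma^2$ term with $\tilde\sigma^2 = \sigma^2\norm{\pcopt}^2$; (ii) the within-round drift $\mathcal{E}_t := \sum_k\pckopt\E\norm{\wck{t}-\wcbar{t}}^2$ accumulated over the at most $H$ local steps since the last synchronization time $t_0 := t-(t\bmod H)$, which by \cref{ass:G-lipschitz} and the choice $s \ge 3H$ (so $\eta_t$ varies by only a constant factor within a round) is $\mathcal{O}(H^2\eta_t^2 G^2)$; and (iii) the \emph{aggregation mismatch} that arises only at synchronization steps because the update \cref{eq:algorithm-weights-update} averages with $\pckhat{t}$ instead of $\pckopt$, whose effect is driven by $\norm{\pcdelta{t}}_1$.

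First I would write the one-step inequality for $r_t := \E\norm{\wcbar{t}-\wcopt}^2$. At a non-synchronization step, expanding $\norm{\wcbar{t+1}-\wcopt}^2$, using $\Eic[\gck{t}] = \nabla f_c(\wck{t})$, pulling out the $(1-\mu\eta_t)$ contraction and a $-2\eta_t(\E f_c(\wcbar{t})-f_c(\wcopt))$ term via \cref{ass:L-smooth-mu-sc}, and trading the gradient-at-$\wck{t}$ versus gradient-at-$\wcbar{t}$ discrepancy for $\mathcal{E}_t$ by $L$-smoothness, gives $r_{t+1}\le(1-\mu\eta_t)r_t - 2\eta_t(\E f_c(\wcbar{t})-f_c(\wcopt)) + 2\eta_t^2\tilde\sigma^2 + \mathcal{O}(L\eta_t\mathcal{E}_t)$. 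At a synchronization step the algorithm replaces $\wcbar{t+1}$ with $\Ehat_{k|c}[\wck{t}-\eta_t\gck{t}] = \wcbar{t+1} + \sum_k(\pckhat{t}-\pckopt)(\wck{t+1}-\wck{t_0})$, where I use $\sum_k(\pckhat{t}-\pckopt)=0$ together with the fact that the post-synchronization iterate $\wck{t_0}$ is shared across $k$ to re-center the mismatch vector onto the within-round displacement $\wck{t+1}-\wck{t_0}$; since $\norm{\wck{t+1}-\wck{t_0}} = \mathcal{O}(H\eta_t G)$ by \cref{ass:G-lipschitz}, this injects an $\mathcal{O}(\eta_t^2\norm{\pcdelta{t}}_1^2 G^2)$ additive term into the recursion, the residual cross term being absorbed into the contraction. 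The hypotheses $s \ge 4\kappa/U_c$ and $\abs{\pckopt-\pckhat{t}}\le\abs{\pckopt-\pckhat{0}}$ enter precisely here: the former keeps the mismatched averaging operator contractive in expectation despite the router error, and the latter keeps $\norm{\pcdelta{t}}_1$ from ever exceeding its initial ($\pickhat{0}=1/C$) value so the decay hypothesis can be invoked.

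To finish, I would unroll in the standard way: rearrange to $2\eta_t(\E f_c(\wcbar{t})-f_c(\wcopt))\le(1-\mu\eta_t)r_t - r_{t+1} + 2\eta_t^2\tilde\sigma^2 + \mathcal{O}(L\eta_t\mathcal{E}_t) + \mathcal{O}(\eta_t^2\norm{\pcdelta{t}}_1^2 G^2)$, multiply by weights $w_t$ matched to the schedule $\eta_t\le\tfrac{1}{\mu(t+s)}$ as in the \citet{stich2019local} template so the distance terms telescope (with the within-round effects controlled by $s\ge3H$), sum over $t<T$, divide by $\sum_t w_t$, and pass to $\wchat{T}$ --- the $w_t$-weighted average of the $\wcbar{t}$, up to the already-bounded mismatch --- by convexity of $f_c$. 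The $\eta_t^2\tilde\sigma^2$ term then aggregates to $\mathcal{O}(\tilde\sigma^2/(\mu T))$; the drift term $L\eta_t\mathcal{E}_t = \mathcal{O}(LH^2\eta_t^3 G^2)$ aggregates to $\mathcal{O}(G^2\kappa H^2/(\mu T^2))$ (up to a logarithmic factor absorbed into $\mathcal{O}$); and, invoking the decay hypothesis $\norm{\pcdelta{t}}_1^2\le(t+s)^{-\beta}\norm{\pcdelta{0}}_1^2$, the mismatch term is bounded by $\norm{\pcdelta{0}}_1^2$ times the same weighted sum that produced the $\tilde\sigma^2$ term but carrying an extra $(t+s)^{-\beta}$ factor, hence smaller by $\Theta(T^\beta)$, aggregating to $\mathcal{O}(G^2\norm{\pcdelta{0}}_1^2/(\mu T^{1+\beta}))$ --- which is exactly the $T^{1+\beta}$ scaling in the second term, and the only place $\beta\in(0,1)$ is used. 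I expect the main obstacle to be step (iii): isolating a \emph{clean} $\eta_t^2$-order additive term from the synchronization mismatch while keeping the averaging step contractive, since a naive treatment lets $\norm{\pcdelta{t}}_1$ feed back into both $r_t$ and $f_c(\wck{t})$ --- the circular dependency flagged in \cref{sec:analysis}. Re-centering the mismatch onto $\wck{t+1}-\wck{t_0}$ (so it never sees the distance to $\wcopt$) and calibrating $s$ against $\kappa/U_c$ is what breaks the loop; after that the assumed decay of $\norm{\pcdelta{t}}_1$ supplies the extra $(t+s)^{-\beta}$ factor and the rest is the Stich-style weighted-averaging bookkeeping.
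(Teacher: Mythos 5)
Your decomposition is genuinely different from the paper's, and the difference matters. The paper tracks the \emph{estimated}-router average $\wctilde{t}=\Ehatkc[\wck{t}]$, so the mismatch enters at \emph{every} step as an additive $\mathcal{O}(\eta_t^2 G^2\norm{\pcdelta{t}}_1^2)$ contribution to the gradient-aggregation error (\cref{thm:grads-aggregation-error}), and the price is a mixed-weight inner product that leaves a residual $\eta_t\sum_k(4L\eta_t\pckopt-\pckhat{t})[f_c(\wck{t})-f_c(\wcopt)]$ in \cref{thm:descent-bound-2}; forcing that coefficient negative is exactly where the step-size condition involving $U_c=\min_{k;\,p(c)\leq\pickopt}\{p(c)/\pickopt\}$ is consumed. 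You instead track the \emph{true}-router average $\wcbar{t}=\Ekc[\wck{t}]$, which makes the non-synchronization steps a clean unbiased local-SGD recursion and concentrates the entire mismatch into a once-per-round iterate perturbation $e_t=\sum_k(\pckhat{t}-\pckopt)(\wck{t+1}-\wck{t_0})$, correctly re-centered so that $\norm{e_t}=\mathcal{O}(\eta_t H G\norm{\pcdelta{t}}_1)$. That reorganization is legitimate and arguably cleaner up to this point.

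The gap is in your claim that $e_t$ "injects an $\mathcal{O}(\eta_t^2\norm{\pcdelta{t}}_1^2G^2)$ additive term into the recursion, the residual cross term being absorbed into the contraction." Expanding $\norm{\wcbar{t+1}+e_t-\wcopt}^2$ produces the cross term $2\inner{e_t}{\wcbar{t+1}-\wcopt}$, and $e_t$ does \emph{not} have zero conditional mean (the displacements $\wck{t+1}-\wck{t_0}$ carry the drift $-\sum_\tau\eta_\tau\nabla f_c(\wck{\tau})$). Absorbing it into the contraction via Young's inequality requires the free parameter to be at most $\mathcal{O}(\mu\eta_t)$, which turns $\norm{e_t}^2/(\mu\eta_t)$ into an $\eta_t$-\emph{first}-order term $\mathcal{O}(H^2\eta_t G^2\norm{\pcdelta{t}}_1^2/\mu)$. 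Pushed through the weighted summation with $w_t=(t+s)^2$ and the decay hypothesis, this contributes $\mathcal{O}(HG^2\norm{\pcdelta{0}}_1^2/(\mu T^{\beta}))$, which for $\beta\in(0,1)$ is worse than the target $T^{-(1+\beta)}$ term and even dominates the leading $1/T$ term. Trying instead to pair $\E[e_t]$ with the negative function-gap term reintroduces a positive multiple of $\norm{\pcdelta{t}}_1\cdot[f_c(\wck{t})-f_c(\wcopt)]$ that must be dominated by $-\pckopt[f_c(\wck{t})-f_c(\wcopt)]$ --- precisely the circular dependency the paper sidesteps by putting the mismatch into the per-step variance and paying with the $U_c$-calibrated learning rate in \cref{thm:descent-bound-3} and \cref{thm:convergence-rate-tv-decay}. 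As written, your step (iii) does not close, and the rest of your bookkeeping (the $\tilde{\sigma}^2/(\mu T)$, drift $\kappa H^2 G^2/(\mu T^2)$, and $T^{-(1+\beta)}$ aggregations, all of which are correct in isolation) cannot rescue it.
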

Observe that we recover local SGD asymptotically when $\norm{\pcdelta{0}}_1 = 0$ and $U_c=1$ (which is the case for \cref{eq:fl}), or when $\beta \to 1$ since $\norm{\pcdelta{0}}_1 \leq 2$.
Observe also that we obtain a general notion of variance reduction through $\tilde{\sigma}^2 = \sigma^2 \norm{\pcopt}^2$. Indeed, $\norm{\pcopt}^2 = 1/K$ in the \cref{eq:fl} case and $\norm{\pcopt}^2 = 1/K_c$ for cluster $c$ in the \cref{eq:cfl} case, where $K_c$ is the number of clients in cluster $c$.

Note that $U_c \geq p(c) \approx 1/C$ for balanced clustered FL problems, but $p(c) \approx p(k)$ in the worst case when a cluster contains one client. The difficulty is inherent for such edge cases, but the dependence on $U_c^{-1}$ in the bound appears only in higher-order terms (see \cref{thm:convergence-rate-tv-decay} for the full bound). We believe that having independent learning rates per client should remove the $\min$ in $U_c$, and a finer analysis on the quantity $\pckhat{t}/\pckopt$ can bound $U_c$ further from below, but we leave this for future work.

In \cref{sec:benefits-of-weight-sharing},
we extend the analysis to the \cref{eq:fml} case with weight sharing (explained in the next section).
Given fine-grained variances and cluster heterogeneity conditions for which weight sharing works best, we can demonstrate better variance reduction of the base layer under a trade-off with cluster heterogeneity (see \cref{eq:cfl-better-vr}, for example). A better understanding of weight sharing and the assumptions in \cref{sec:benefits-of-weight-sharing} is an interesting direction for future work.

\section{Practical Implementation}\label{sec:implementation}
\paragraph{Mixture of adaptors}
The \eqref{eq:mfl} objective suggests that any learning algorithm will have to run at least $C$ forward passes per step for each client, which is necessary for computing the objective.
One way to circumvent that is by ``moving'' the mixture inside the objective.
This allows us to mix the weights and perform one forward pass.
We call this Federated Mixture Learning (FML)\footnote{The ``M'' in the acronym follows the position of the mixture in the objective.}
\begin{equation}
\begin{aligned}
    \min_{\{\*w_c\}_{c=1}^C, \{\bm{\pi}^k\}_{k=1}^K}
    &\quad
    \E_{k \sim \mathcal{K}} \left[ f^k\left(\sum_{c=1}^C \bm{\pi}^k_c \*w_c\right) \right]
        + \Gamma(\{\*w_c\}_{c=1}^C),
        \nonumber
    \\
    \text{s.t.}
    &\quad
    \bm{\pi}^k \in \Delta^{C-1}, \forall k \in [K]
    .
\end{aligned}
\tag{\text{FML}} \label{eq:fml}
\end{equation}
Observe that for convex $f^k$, this proxy acts as a lower bound since $f^k (\sum_{c=1}^C \bm{\pi}^k_c \*w_c ) \leq \sum_{c=1}^C \bm{\pi}^k_c f^k\left(\*w_c\right)$ due to Jensen's inequality. 
Thus, for convex losses $f^k$, minimizing \cref{eq:mfl} implies minimizing \cref{eq:fml}, but not vice versa. In this sense, \cref{eq:fml} could be seen as a more general problem, and \cref{eq:mfl} is a relaxation. We note that this problem is similar to FedEM \citep{fedem2021marfoq}, but we only use $K$ mixture vectors of size $C$ and we do not have sample-specific weights.

This formulation is especially useful for additive adaptors since the weights can be merged into one.
Also, it allows us to mix the $C$ adaptors and run one forward pass, which is often more efficient than running $C$ forward passes.
This is particularly true for inference, in which the weights can be merged once so that forward passes come without extra cost.
The benefits of weight sharing can also manifest through better variance reduction, which is demonstrated in \cref{sec:benefits-of-weight-sharing}.

\paragraph{Learning the mixture weights}
Instead of optimizing $\bm{\pi}^k$ directly in $\Delta^{C-1}$, we consider the parameterization $\bm{\pi}^k = \text{Softmax}(\bm{\theta}^k)$ for some vector $\bm{\theta}^k \in \RR^{C}$
.
Note that $\bm{\theta}^k$ is a local parameter and not aggregated. The cost of storing $\bm{\theta}^k$ in each client is minimal as it is of size $C$, which is often significantly small compared to the model size $d$. Even if we consider stateless clients, the server should be able to handle an extra storage and communication budget of $\bm{\theta}^k$, which is $KC$.
Note that the server does not need to know the IDs of the clients and that the clients can learn the $\bm{\theta}^k$ from scratch every round, as it is not expensive.
Let us consider a scenario where the cost $KC$ is prohibitive. Suppose the model size is $d=1000$ and the client participation ratio is $p=0.1\%$. The extra cost for the server will be $p K d = K < KC$ for $C>1$.
Thus, the prohibitive scenario occurs only when $ pd < C$, which is often not the case as $d$ is rarely this small (e.g., a 32 by 32 linear layer with bias has more 1000 parameters), let alone $p$.
The only drawback with stateless clients is the need to learn $\bm{\theta}^k$ from scratch every round, which is cheap to learn given the current model.

In \cref{app:router-update}, we make a connection between the router update in \cref{eq:algorithm-router-update} for \eqref{eq:mfl} and the gradient descent update of $\bm{\pi}^k$ on \eqref{eq:fml} under the Softmax parameterization, and show conditions under which they become equivalent.

\paragraph{FLoRAL problem and algorithm}
We obtain the FLoRAL problem by employing the weight sharing regularizer in \eqref{eq:mfl-ws} to \eqref{eq:fml} and using low-rank adaptors $\*a_c$.
Weight sharing and low-rankedness are explicit in the parameterization.
The algorithm we use to solve (FLoRAL) in practice is shown in \cref{sec:algorithm} and is straightforward. We use simultaneous gradient descent for $\*u$ and $\*a_c$, so we simply write the update in terms of the concatenation $\*w_c$.
One trick we employ to ensure better convergence is LoRA preconditioning, which is discussed in \cref{app:precond-lora}.

\section{Experiments}\label{sec:experiments}

In this section, we compare FLoRAL with 3 methods: (i) FedAvg, which uses the base model only without adaptors, (ii) Local Adaptor, which uses an adaptor for \emph{each} client, and (iii) Ensemble, which uses a mixture of $C$ copies of the base model.
The datasets considered have known ground-truth clusters and are inspired from \cite{ghosh2020efficient,werner2023provably}.
Further, we test on the same datasets with only 95\% of each client's data dropped.
This is to demonstrate the benefits of our parsimonious parameterization, where a large model such as Ensemble might overfit on the local datasets. The results can be seen in \cref{tab:run-methods-all}. Further ablation studies on $\rho$ and $C$, the adaptors, and the type of ConvLoRAs can be found in \cref{tab:hp-floral}, \cref{tab:hp-convlora}, and  \cref{tab:ab-floral}, respectively.

In general, we follow the experimental setup in \citep{werner2023provably} or \citep{pillutla2022partial} and implement our experiments using PyTorch \citep{paszke2019pytorch} and Flower \citep{beutel2022flower}.
We use the simplest setup possible without any tricks other than LoRA preconditioning, which is explained in \cref{app:precond-lora}. We discuss another trick called LoRA centering in \cref{app:center-lora}, which we believe is potentially useful. The algorithm we use in practice is shown in \cref{alg:floralavg}. Further details can be found in \cref{app:experiment}.


\begin{table*}[!htb]
    \centering
    \caption{Accuracy of different methods on our tasks. $\bm{\pi}^*$ indicates the use of optimal routing. Full = 100\% data, Reduced = 5\% data. R = Rotate, LS = Label Shift. {\bf Bold} = best, {\it italic} = second best.
    }
    \rowcolors{6}{gray!20}{white}
\scalebox{0.8}{
\begin{tabular}{|ll|cc|cc|cc|cc|cc|}
\toprule
\multirow{3}{*}{Method} & \multirow{3}{*}{$\bm{\pi}^*$}
& \multicolumn{4}{c|}{MNIST} & \multicolumn{4}{c|}{CIFAR-10}
& \multicolumn{2}{c|}{CIFAR-100} \\
\cline{3-12}
& &   \multicolumn{2}{c|}{Full} & \multicolumn{2}{c|}{Reduced}
    & \multicolumn{2}{c|}{Full} & \multicolumn{2}{c|}{Reduced}
    & \multirow{2}{*}{Full}     & \multirow{2}{*}{Reduced} \\
\cline{3-10}
& &    R & LS & R & LS & R & LS & R & LS
& & \\
\midrule
\midrule
FedAvg        &      & 91.5 {\tiny 0.6}          & 25.8 {\tiny 2.4}          & 78.2 {\tiny 0.6}          & 23.2 {\tiny 0.9}          & 64.4 {\tiny 0.3}          & 21.9 {\tiny 0.4}          & 45.6 {\tiny 0.3}          & 18.7 {\tiny 0.4}          & 29.2 {\tiny 1.8}          & 20.7 {\tiny 1.4}         \\
Local Adaptor &      & 86.6 {\tiny 0.3}          & 84.5 {\tiny 1.8}          & 47.4 {\tiny 5.4}          & 32.0 {\tiny 2.3}          & 66.3 {\tiny 0.5}          & 68.8 {\tiny 0.5}          & 33.5 {\tiny 0.5}          & 30.8 {\tiny 0.8}          & 85.1 {\tiny 0.8}          & 39.5 {\tiny 2.8}         \\
Ensemble      & \xmark & 92.0 {\tiny 0.1}          & 93.8 {\tiny 0.5}          & 66.7 {\tiny 5.3}          & 86.4 {\tiny 0.4}          & {\it 71.0 {\tiny 2.8}}    & 46.4 {\tiny 9.2}          & 42.4 {\tiny 0.9}          & 41.7 {\tiny 4.6}          & 86.2 {\tiny 0.0}          & 43.7 {\tiny 3.2}         \\
Ensemble      & \cmark & {\bf 95.8 {\tiny 0.3}}    & {\bf 95.6 {\tiny 0.3}}    & {\bf 88.2 {\tiny 1.4}}    & {\bf 87.6 {\tiny 1.3}}    & {\bf 73.7 {\tiny 0.2}}    & {\bf 73.3 {\tiny 0.1}}    & 45.0 {\tiny 0.9}          & {\bf 45.1 {\tiny 0.8}}    & {\bf 92.8 {\tiny 0.3}}    & {\bf 55.0 {\tiny 0.4}}   \\
\hline
FLoRAL(1\%)   & \xmark & 91.3 {\tiny 0.6}          & 89.7 {\tiny 3.2}          & 73.1 {\tiny 3.7}          & 46.0 {\tiny 9.9}          & 65.5 {\tiny 0.4}          & 62.8 {\tiny 8.8}          & 45.2 {\tiny 0.3}          & {\it 44.2 {\tiny 0.9}}    & 81.3 {\tiny 0.5}          & 52.2 {\tiny 0.5}         \\
FLoRAL(1\%)   & \cmark & 93.9 {\tiny 0.8}          & 93.7 {\tiny 0.2}          & {\it 87.5 {\tiny 2.1}}    & {\bf 87.6 {\tiny 0.5}}    & 68.9 {\tiny 0.2}          & {\it 72.2 {\tiny 0.2}}    & {\bf 47.8 {\tiny 0.9}}    & 44.1 {\tiny 0.6}          & 82.4 {\tiny 0.2}          & 53.1 {\tiny 0.4}         \\
FLoRAL(10\%)  & \xmark & 91.8 {\tiny 1.0}          & 93.1 {\tiny 0.9}          & 75.7 {\tiny 2.3}          & 70.8 {\tiny 7.1}          & 65.1 {\tiny 0.3}          & 56.2 {\tiny 5.5}          & 44.5 {\tiny 0.4}          & 42.1 {\tiny 0.2}          & {\it 87.3 {\tiny 0.3}}    & 51.2 {\tiny 1.0}         \\
FLoRAL(10\%)  & \cmark & {\it 94.5 {\tiny 0.6}}    & {\it 94.2 {\tiny 0.2}}    & 87.0 {\tiny 0.7}          & {\it 86.9 {\tiny 0.5}}          & 69.3 {\tiny 0.5}          & 72.1 {\tiny 0.5}          & {\it 47.2 {\tiny 0.3}}    & 42.7 {\tiny 0.3}          & 86.6 {\tiny 0.5}          & {\it 53.9 {\tiny 0.9}}   \\
\bottomrule
\end{tabular}
}

    \label{tab:run-methods-all}
\end{table*}
 
\paragraph{Synthetic}\label{sec:synthetic}
Consider a regression task where we want to learn $\*y \in \RR^{d_y}$ given $\*x \in \RR^{d_x}$, where $\*x \sim \mathcal{N}(0,\*I_{d_x})$. We construct two versions of this regression task: one is based on a linear model plus a personalized LoRA, and the other is based on a similar setup on the first layer of a two-layer ReLU net.
Namely, the target model for client $k$ is
\begin{equation}
\textstyle
    \*y_\text{lin}^k(\*x) = \sum_{c=1}^C \bm{\pi}^k_c (\*W + \alpha \*U_{c} \*V_{c}^\top) \*x,
    \label{eq:synthetic-linear-model-main}
\end{equation}
where $\*W \in \RR^{d_y \times d_x}$, $\*U_c \in \RR^{d_y \times r}$, $\*V_c \in \RR^{d_x \times r}$, and $\alpha \in \RR$.
Similarly, consider the 2-layer ReLU neural net
$\*y_\text{mlp}^k(\*x) = \*\Phi (\*y_\text{lin}^k(\*x))_+$
for $\*\Phi \in \RR^{d_y \times d_y}$,
where we write the ReLU function as $(\cdot)_+$.
These tasks provide a \emph{proof of concept} for FLoRAL.
We discuss these datasets in more detail in \cref{app:synthetic}.
The results in \cref{fig:methods-synthetic} show the performances with $K=10$ and $C=2$ for the linear version and $K=20$ and $C=4$ for the MLP version.
Note that even the linear task is not easy to solve, and similar problems have been studied in the mixed linear regression literature, e.g., see \citep{chen2021learningmixtureslowrankmodels}.

\begin{table}
    \centering
    \begin{minipage}{0.39\linewidth}
        \caption{Ablation of $\rho$ and $C$.}
        \centering
        \rowcolors{3}{gray!20}{white}
        \scalebox{0.8}{
        \begin{tabular}{llcccc}
            \toprule
            \multirow{2}{*}{$C$} & \multirow{2}{*}{$\rho$} & \multicolumn{2}{c}{CIFAR-10} & \multirow{2}{*}{CIFAR-100} \\
                                 &                         &    R  &   LS   & \\
            \midrule
            $\times 0.5$ & 1\% & 66.5 & 36.3 & 48.8 \\
            $\times 0.5 $ & 10\%  & 66.8 & 41.6 & 50.9 \\
            $\times 1$ & 1\% & 70.2 & {\it 74.1} & 51.7 \\
            $\times 1$ & 10\%  & {\bf 71.5} & {\bf 74.2} & {\bf 57.4} \\
            $\times 2$ & 1\% & 69.0 & 73.8 & 51.3 \\
            $\times 2$ & 10\%  & {\it 70.8} & {\it 74.1} & {\it 54.8} \\
            \bottomrule
        \end{tabular}
        }
        \label{tab:hp-floral}
    \end{minipage}%
    \hfill
    \begin{minipage}{0.6\linewidth}
        \begin{figure}[H]
            \centering
            \includegraphics[width=0.49\linewidth]{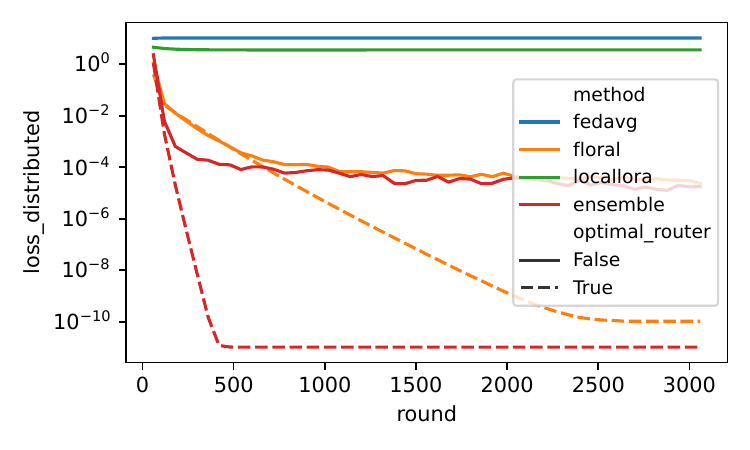}
            \vspace{0.5em}
            \includegraphics[width=0.49\linewidth]{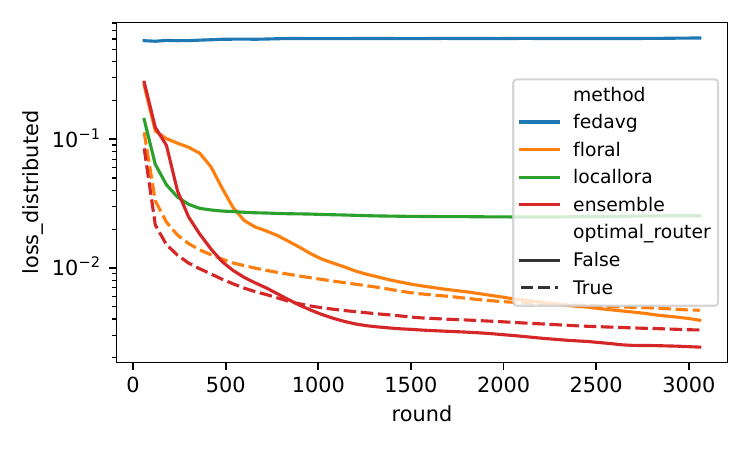}
            \caption{Test loss on linear and MLP synthetic datasets.}
            \label{fig:methods-synthetic}
        \end{figure}
    \end{minipage}
\end{table}


\paragraph{MNIST and CIFAR-10}\label{sec:mnist-experiments}
We test our method on a clustered version of MNIST and CIFAR-10 datasets in which the clusters are generated according to one of the following tasks: 1) a rotation task, where each cluster $c$ rotates the image by $2\pi c/C$ degrees, and 2) a label shift task, where cluster $c$ shifts the labels by $y \mapsto (y + c) \mod 10$. Following \citep{werner2023provably}, we choose $C=4$ and $K=300$ for MNIST and sample $10\%$ of the clients every round, and choose $C=4$ and $K=20$ for CIFAR-10 and sample all clients every round. The model for MNIST is a 2-layer ReLU net, whereas for CIFAR-10, it has two convolution layers followed by a 2-layer ReLU net classifier.

\paragraph{CIFAR-100}\label{sec:cifar100-experiments}
The CIFAR-100 task is to train a model that is not expressive enough to fit 100 labels yet expressive enough to fit 10 labels. Thus, we expect that the model would benefit from collaboration with the right clients. The setup is to divide the 100 labels into $C=10$ clusters such that each cluster has 10 unique labels and then split each cluster uniformly into $K/C=10$ clients (so, in total, $K=100$). The model used is VGG-8, a custom-sized model from the VGG-family \citep{simonyan2015deep} that is specifically able to fit 10 labels but not 100. We sample 25 clients every round, which makes the task harder than \citep{werner2023provably} and can result in overfitting.





\paragraph{Discussion}
The results in \cref{fig:ab-floral} show the robustness of FLoRAL, particularly when $C$ is larger than the number of ground-truth clusters.
In \cref{tab:run-methods-all}, we can see that FLoRAL is always competitive with the best baseline, which is Ensemble given optimal routers. A particularly interesting case is the reduced CIFAR-10-R experiments, in which FLoRAL(1\%) and FLoRAL(10\%) surprisingly outperform this baseline, even \emph{in the optimal routing case}. This seems slightly counter-intuitive as Ensemble is strictly more expressive than FLoRAL. We believe this to be due to the variance reduction shown in \cref{sec:benefits-of-weight-sharing}.

Note that FLoRAL($\rho$) has $C \rho d$ extra parameters, whereas Ensemble has $(C-1)d$. For example, when $d=1,000$ and $C=4$, FLoRAL(1\%) adds $40$ parameters vs. $3,000$ for Ensemble, and when $C=10$, it is $100$ vs. $9,000$. Local Adaptor requires each client to have its own adaptor (i.e., each client has $\rho d$ memory). Regardless of its feasibility, FLoRAL is shown to leverage the power of collaboration when Local Adaptor fail to do so. The low accuracies of FLoRAL with learned routing in the reduced MNIST-LS can be alleviated with more training rounds, e.g., see \cref{app:missing-figures} for full plots.

Overall, the results demonstrate that FLoRAL is a collaborative and efficient personalization method, and it can lead to better generalization in low-data regimes.

\section{Conclusion}\label{sec:discussion}
In this work, we presented a parameter-efficient method for collaborative learning and personalization. Here are some future directions we are interested in exploring:
\begin{itemize}
[noitemsep,nolistsep,topsep=-5pt,left=0pt]
    \item Is there a principled way to understand the trade-off between parameter-efficiency and the accuracy gains from increasing $\rho$ or $C$ and how to choose them in practice?
    \item \cref{eq:fml} can be formulated as a ``multimodal optimization'' problem \citep{wong2015evolutionarymultimodaloptimizationshort}, or a model class of mixture-candidate distributions \citep{khan2023bayesian}. Can we design more efficient algorithms under this framework with a mixture of structured distributions \citep{pmlr-v48-louizos16}?
    \item Would FLoRAL be suitable for federated fine-tuning of language models?
    \item The router $\bm{\pi}$ can route based on its input, as in mixture of experts \citep{shazeer2017outrageously}. It can also be learned per layer. Preliminary experiments show marginal benefits, but there is still room for exploration.
    \item We are interested in designing methods for zero-shot generalization to unseen clients based on FLoRAL. Is it possible to fine-tune the router without labels?
\end{itemize}

\clearpage

\bibliography{ref}


\newpage
\appendix
\onecolumn

\section{Algorithm}\label{sec:algorithm}
We show in this section a simplified version of the algorithm we use in practice. The algorithm is straightforward gradient descent. The only different part is the parameterization of the FLoRAL model. Note that $\theta^k \in \RR^{C}$ are local parameters or can be learned from scratch every round by \cref{eq:algorithm-router-update}.
\algrenewcommand\algorithmicindent{1.0em}%
\begin{algorithm}[H]
    \caption{Simple FLoRAL Averaging}
    \begin{algorithmic}[1]
        \State Let $\wck{t} = (\*u^k_t, \*a^k_{c,t})$
        \For{$\tau = 0, H, 2H, \cdots, \lfloor \frac{T-1}{H} \rfloor$}
        \Comment{Comm. rounds}
            \State Sample clients $S_{\tau} \sim \mathcal{K}$
            \ForAll{$k \in S_{\tau}$ {\bf in parallel}}
            \For{$t = \tau, \cdots, \tau + H - 1$}
            \Comment{Local epoch}
                \State $\pickhat{t} = \exp(\theta_{c,t}^k) / \sum_{c=1}^C \exp(\theta_{c,t}^k)$
                \State $\theta_{c,t+1}^k = \theta_{c,t}^k - \eta_t \nabla_{\theta_{c,t}^k} f^k(\sum_{c=1}^C \pickhat{t} \wck{t})$
                \State $\wck{t+1} = \wck{t} - \eta_t \nabla_{\wck{t}} f^k(\sum_{c=1}^C \pickhat{t} \wck{t})$
            \EndFor
            \EndFor
            \State $\*u^k_{\tau+H} \gets \frac{\sum_{k \in S_{\tau}} N^k \*u^k_{\tau+H}}{\sum_{k \in S_{\tau}} N^k}$
            \Comment{Synchronize base layers}
            \State $\*a^k_{c,\tau+H} \gets \frac{\sum_{k \in S_{\tau}} \pickhat{\tau+H} N^k \*a^k_{c,\tau+H}}{\sum_{k \in S_{\tau}} \pickhat{\tau+H} N^k}$
            \Comment{Synchronize adaptors}
        \EndFor
    \end{algorithmic}
    \label{alg:floralavg}
\end{algorithm}

\section{Proofs}\label{sec:proofs}

We reiterate the notations part from the main text here for clarity. Let $\pckopt := \frac{\pickopt N^k}{\sum_{k'=1}^K \piopt^{k'}_{c} N_{k'}} = p(k|c)$ and $\pckhat{t} := \frac{\pickhat{t} N^k}{\sum_{k'=1}^K \pihat^{k'}_{c,t} N_{k'}}$.
Define the expectation operators $\Ekc[\wck{t}] := \sum_{k=1}^{K} \pckopt \wck{t}$ and $\E_{c|k}[\wck{t}] := \sum_{c=1}^{C} \pickopt \wck{t}$ and similarly for their estimates $\Ehat_{k|c,\pihat_t}[\wck{t}]$ and $\Ehat_{c|k,\pihat_t}[\wck{t}]$. We drop $\pickhat{t}$ from the notation for clarity.
We use the variable $i$ to denote client sampling, and $i|c$ should be understood as randomness in client sampling given cluster $c$, for example.
Finally, let the global function of cluster $c$ be $f_c(\*w) := \Ekc [f^k(\*w)]$. Note the absence of $k$ in the weight.

The analysis roughly follows \citep{stich2019local} and differ mostly in the appearance of the total variation distance between $\pckopt$ and $\pckhat{t}$.

We start by introducing virtual iterates for tracking the aggregated weights (or gradients) with respect to the true router (or the estimated router) at every time step, which will be mainly useful for the analysis. These iterates coincide at the synchronization step, in which they become equal by construction of the algorithm. The iterates are as follows
\begin{align}
    &\wctilde{t} := \Ehatkc[\wck{t}],
    \quad\quad
    \gctilde{t} := \Ehat_{k|c}[\nabla f^{\idx{t}}(\wck{t})],
    \label{eq:estimated-agg}
    \\
    &\wcbar{t} := \Ekc[\wck{t}],
    \quad\quad
    \gcbar{t} := \Ekc[\nabla f_c(\wck{t})],
    \label{eq:ideal-agg}
\end{align}

Note that $\wctilde{t+1} = \wctilde{t} - \eta_t \gctilde{t}$ and $\Eic [\gctilde{t}] = \gcbar{t}$.
Hence, using $\norm{\*a+\*b}^2 \leq 2\norm{\*a}^2 + 2\norm{\*b}^2$, we have
\begin{align}
    \Eic \norm{\wctilde{t+1} - \wcopt}^2
    &= \Eic \norm{\wctilde{t} - \wcopt - \eta_t \gctilde{t}}^2
    \nonumber
    \\ &= \Eic \norm{\wctilde{t} - \wcopt - \eta_t \gctilde{t} - \eta_t \gcbar{t} + \eta_t \gcbar{t}}^2
    \nonumber
    \\ &= \underbrace{\Eic\norm{\wctilde{t} - \wcopt - \eta_t \gcbar{t}}^2}_{\text{ideal aggregation descent}} + \eta_t^2 \underbrace{\Eic \norm{\gcbar{t} - \gctilde{t}}^2}_{\text{gradient aggregation error}}
    \nonumber
    \\&\quad + 2\eta_t\underbrace{\Eic\inner{\wctilde{t} - \wcopt - \eta_t \gcbar{t}}{\gcbar{t} - \gctilde{t}}}_{\text{correlation error}}
    .
    \label{eq:bound-main-descent}
\end{align}
In the original local SGD analysis, the correlation error is 0 since we aggregate the sampled gradients exactly and thus the expectation gives $\Eic [\gctilde{t}] = \gcbar{t}$.
Note that the expectation $\Eic$ is implicitly defined $\E_{i_t|c}[\cdot | i_{t-1}, \cdots]$, which would be $\E_{i_t|c}[\cdot | i_{t_0-1}, \cdots]$, where $t_0 = t - (t \mod H)$ since $i_t = \cdots = i_{t_0}$ (because we sample clients every $H$ round).

\subsection{Bounding descent}
\begin{lemma}[Descent bound 1]
    \label{thm:descent-bound-1}
    Given the setting and the assumptions in \cref{sec:analysis}, the following holds
    \begin{align*}
        \norm{\wctilde{t+1} - \wcopt}^2
        &\leq (1-\eta_t \mu) \norm{\wctilde{t} - \wcopt}^2
        + 2\eta_t^2\norm{\gctilde{t} - \gcbar{t}}^2
        + 2L\eta_t \Ehatkc\norm{\wctilde{t} - \wck{t}}^2 
        \\ &\quad
        + \eta_t \sum_{k=1}^K (
            4L\eta_t \pckopt -\pckhat{t}
        )[f_c(\wck{t}) - f_c(\wcopt)]
    \end{align*}
\end{lemma}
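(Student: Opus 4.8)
The plan is to start from the exact decomposition in \cref{eq:bound-main-descent} and bound the three labelled terms — ideal aggregation descent, gradient aggregation error, and correlation error — separately, then recombine. The gradient aggregation error term $\eta_t^2 \Eic\norm{\gcbar{t} - \gctilde{t}}^2$ appears directly in the claimed bound as $2\eta_t^2\norm{\gctilde{t}-\gcbar{t}}^2$, so the factor of $2$ there is slack I will carry along to absorb a cross term later; I would simply keep it as is (dropping the expectation, since the lemma statement is written without $\Eic$).

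For the \textbf{ideal aggregation descent} term $\Eic\norm{\wctilde{t} - \wcopt - \eta_t \gcbar{t}}^2$, I expand the square into $\norm{\wctilde{t}-\wcopt}^2 - 2\eta_t\inner{\gcbar{t}}{\wctilde{t}-\wcopt} + \eta_t^2\norm{\gcbar{t}}^2$. Here $\gcbar{t} = \Ekc[\nabla f_c(\wck{t})] = \sum_k \pckopt \nabla f_c(\wck{t})$, so by convexity of $\norm{\cdot}^2$ (Jensen) and $L$-smoothness, $\norm{\gcbar{t}}^2 \le \sum_k \pckopt\norm{\nabla f_c(\wck{t})}^2 \le \sum_k \pckopt \cdot 2L[f_c(\wck{t}) - f_c(\wcopt)]$ using $\nabla f_c(\wcopt)=0$ and the standard smoothness-to-suboptimality inequality $\norm{\nabla f_c(\*w)}^2 \le 2L(f_c(\*w)-f_c(\wcopt))$. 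For the inner product term, write $\inner{\gcbar{t}}{\wctilde{t}-\wcopt} = \sum_k \pckopt \inner{\nabla f_c(\wck{t})}{\wctilde{t}-\wcopt}$ and split $\wctilde{t} - \wcopt = (\wctilde{t} - \wck{t}) + (\wck{t} - \wcopt)$. The $\wck{t}-\wcopt$ piece is handled by $\mu$-strong convexity: $\inner{\nabla f_c(\wck{t})}{\wck{t}-\wcopt} \ge f_c(\wck{t}) - f_c(\wcopt) + \tfrac{\mu}{2}\norm{\wck{t}-\wcopt}^2$ — but since the target bound has $(1-\eta_t\mu)\norm{\wctilde{t}-\wcopt}^2$ rather than a per-client term, I instead want to produce $\tfrac{\mu}{2}\norm{\wctilde{t}-\wcopt}^2$; the cleanest route is to use strong convexity in the form $\inner{\nabla f_c(\wck{t})}{\wctilde{t}-\wcopt} \ge f_c(\wctilde{t}) - f_c(\wck{t}) + \tfrac{\mu}{2}\norm{\wctilde{t}-\wck{t}}^2 + [f_c(\wck{t})-f_c(\wcopt)] + \tfrac{\mu}{2}\norm{\wck{t}-\wcopt}^2$ — actually the slicker standard trick (as in \citep{stich2019local}) is: $-\inner{\nabla f_c(\wck{t})}{\wctilde{t}-\wcopt} \le -(f_c(\wck{t})-f_c(\wcopt)) - \tfrac{\mu}{2}\norm{\wctilde{t}-\wcopt}^2 + L\norm{\wctilde{t}-\wck{t}}^2$, obtained by combining strong convexity at $\wck{t}$ against $\wcopt$ with smoothness relating $f_c(\wctilde{t})$ and $f_c(\wck{t})$ and discarding/absorbing the resulting terms. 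The $\wctilde{t}-\wck{t}$ piece of the inner product is bounded by Young's inequality, $2\eta_t\sum_k\pckopt\inner{\nabla f_c(\wck{t})}{\wck{t}-\wctilde{t}} \le \eta_t^2\sum_k\pckopt\norm{\nabla f_c(\wck{t})}^2 + \sum_k\pckopt\norm{\wck{t}-\wctilde{t}}^2$; the first summand is again $\le 2L\eta_t^2\sum_k\pckopt[f_c(\wck{t})-f_c(\wcopt)]$ and the second is $\Ekc\norm{\wctilde{t}-\wck{t}}^2$ — but note the lemma states $\Ehatkc\norm{\wctilde{t}-\wck{t}}^2$, so I will need to be careful about whether the perturbation-variance term is measured under $\pckopt$ or $\pckhat{t}$; I expect the intended handling routes this through the correlation-error term below, where $\Ehat$ naturally appears.

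For the \textbf{correlation error} $\Eic\inner{\wctilde{t} - \wcopt - \eta_t \gcbar{t}}{\gcbar{t} - \gctilde{t}}$, the key point is that $\Eic[\gctilde{t}] = \gcbar{t}$ only if the aggregation weights in $\gctilde{t}$ match those defining $\gcbar{t}$ — but $\gctilde{t} = \Ehat_{k|c}[\nabla f^{i_t}(\wck{t})]$ uses $\pckhat{t}$ while $\gcbar{t}$ uses $\pckopt$, so $\Eic[\gcbar{t} - \gctilde{t}] = \sum_k(\pckopt - \pckhat{t})\nabla f_c(\wck{t}) \ne 0$. I would bound $\inner{\wctilde{t}-\wcopt-\eta_t\gcbar{t}}{\sum_k(\pckopt-\pckhat{t})\nabla f_c(\wck{t})}$ using the smoothness bound $\norm{\nabla f_c(\wck{t})} \le \sqrt{2L(f_c(\wck{t})-f_c(\wcopt))}$ together with $\inner{\nabla f_c(\wck{t})}{\wck{t}-\wcopt} \ge f_c(\wck{t})-f_c(\wcopt)$ to convert inner products against $\wck{t}-\wcopt$ into function-value gaps $[f_c(\wck{t})-f_c(\wcopt)]$, weighted by $(\pckopt - \pckhat{t})$ — and this, combined with the $4L\eta_t\pckopt$ coming out of the ideal-descent smoothness bounds and the $-\pckhat{t}$ coming from rewriting $\sum_k\pckhat{t}\inner{\nabla f_c(\wck{t})}{\cdot}$, is exactly what produces the final $\eta_t\sum_k(4L\eta_t\pckopt - \pckhat{t})[f_c(\wck{t})-f_c(\wcopt)]$ term. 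Collecting: the $(1-\eta_t\mu)$ coefficient comes from $1 - 2\eta_t\cdot\tfrac{\mu}{2}$; the $2\eta_t^2\norm{\gctilde{t}-\gcbar{t}}^2$ is the gradient aggregation error (no expectation in the statement); the $2L\eta_t\Ehatkc\norm{\wctilde{t}-\wck{t}}^2$ collects the perturbation-variance contributions; and the last sum packages all the function-gap terms.

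\textbf{Main obstacle.} The delicate part is the bookkeeping around which probability vector ($\pckopt$ versus $\pckhat{t}$) weights each term — in particular getting the perturbation term to come out as $\Ehatkc\norm{\wctilde{t}-\wck{t}}^2$ rather than $\Ekc\norm{\cdot}^2$, and making the $(4L\eta_t\pckopt - \pckhat{t})$ coefficient emerge cleanly rather than as a messier mix of $\pckopt$, $\pckhat{t}$, and $|\pckopt-\pckhat{t}|$. This requires choosing, at each step where Young's inequality or smoothness is applied, whether to expand $\gctilde{t}-\gcbar{t}$ or leave it intact, and tracking the correlation term carefully so that the non-vanishing bias $\sum_k(\pckopt-\pckhat{t})\nabla f_c(\wck{t})$ is absorbed into the $-\pckhat{t}$ slack plus the gradient-aggregation-error term rather than generating a new $\norm{\pcdelta{t}}_1$-type term at this stage (that conversion to $\norm{\pcdelta{t}}_1$ presumably happens in a later lemma). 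Everything else — the smoothness and strong-convexity inequalities, Jensen, Young — is routine.
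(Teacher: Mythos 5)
Your overall route matches the paper's: the same decomposition from \cref{eq:bound-main-descent} into ideal aggregation descent, gradient aggregation error, and correlation error; Jensen plus $\norm{\nabla f_c(\wck{t})}^2 \leq 2L[f_c(\wck{t})-f_c(\wcopt)]$ for the squared-gradient terms; strong convexity for the $(1-\eta_t\mu)$ factor (followed by Jensen to pass from $\Ehatkc\norm{\wck{t}-\wcopt}^2$ to $\norm{\wctilde{t}-\wcopt}^2$); and Young's inequality for the $\inner{\wctilde{t}-\wck{t}}{\nabla f_c(\wck{t})}$ cross term. You also correctly locate the provenance of the factor $2$ on $\norm{\gctilde{t}-\gcbar{t}}^2$ and of the $4L\eta_t\pckopt$ piece of the final coefficient.

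The gap is precisely at the spot you flag as the ``main obstacle,'' and the mechanism you propose there would not go through. You suggest bounding the linear part of the correlation error, $2\eta_t\inner{\wctilde{t}-\wcopt}{\gcbar{t}-\gctilde{t}}$ with $\Eic[\gcbar{t}-\gctilde{t}]=\sum_{k}(\pckopt-\pckhat{t})\nabla f_c(\wck{t})$, by converting inner products into function gaps via smoothness and convexity. But those inequalities are one-sided, while the weights $\pckopt-\pckhat{t}$ change sign across $k$; applying them termwise forces you into $\abs{\pckopt-\pckhat{t}}$ and hence a $G\,\norm{\pcdelta{t}}_1$-type contribution at this stage, which is exactly what you say you want to avoid. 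The paper's resolution is that \emph{no bound is applied to that piece at all}: it recombines exactly with the ideal descent's inner product,
\begin{equation*}
 -2\eta_t\inner{\wctilde{t}-\wcopt}{\gcbar{t}} + 2\eta_t\inner{\wctilde{t}-\wcopt}{\gcbar{t}-\gctilde{t}} = -2\eta_t\inner{\wctilde{t}-\wcopt}{\gctilde{t}},
\end{equation*}
whose conditional expectation is $-2\eta_t\,\Ehatkc\inner{\wctilde{t}-\wcopt}{\nabla f_c(\wck{t})}$, i.e.\ an inner product weighted entirely by $\pckhat{t}$. This single exact identity is what makes both the $\Ehatkc\norm{\wctilde{t}-\wck{t}}^2$ perturbation term and the $-\pckhat{t}$ in the coefficient $(4L\eta_t\pckopt-\pckhat{t})$ fall out; only the quadratic piece $-2\eta_t^2\inner{\gcbar{t}}{\gcbar{t}-\gctilde{t}}$ of the correlation error is Young-bounded (contributing the extra $\eta_t^2\norm{\gcbar{t}}^2$ and $\eta_t^2\norm{\gcbar{t}-\gctilde{t}}^2$). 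With that one substitution your plan becomes the paper's proof; without it, the $\pckopt$-versus-$\pckhat{t}$ bookkeeping you worry about does not close.
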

\begin{proof}
From the ideal aggregation descent, we have
\begin{align*}
    \norm{\wctilde{t} - \wcopt - \eta_t \gcbar{t}}^2
    &= \norm{\wctilde{t} - \wcopt}^2
        + \eta_t^2 \norm{\gcbar{t}}^2
        - 2 \eta_t \inner{\wctilde{t} - \wcopt}{\gcbar{t}}
    \\ &\leq \norm{\wctilde{t} - \wcopt}^2
        + \eta_t^2 \Ekc \norm{\nabla f_c(\wck{t})}^2
        - 2 \eta_t \inner{\wctilde{t} - \wcopt}{\gcbar{t}}
    ,
\end{align*}
where we have used Jensen's inequality
\footnote{$f(\E{X}) \leq \E{f(X)}$ for random variable $X$ and convex $f$.}.
As for the correlation error, we can write it as
\begin{equation*}
    2\eta_t\Eic\inner{\wctilde{t} - \wcopt - \eta_t \gcbar{t}}{\gcbar{t} - \gctilde{t}}
    =
    2\eta_t \inner{\wctilde{t} - \wcopt}{\gcbar{t} - \gctilde{t}} - 2\eta_t^2 \inner{\gcbar{t}}{\gcbar{t} - \gctilde{t}}
    .
\end{equation*}
We bound $-\eta_t^2 \inner{\gcbar{t}}{\gcbar{t}-\gctilde{t}}$ with Young's inequality
\footnote{$2 \inner{\*a}{\*b} \leq \gamma \norm{\*a}^2 + \gamma^{-1} \norm{\*b}^2$ for $\gamma > 0$.}
\begin{align*}
    -2\eta_t^2 \inner{\gcbar{t}}{\gcbar{t} - \gctilde{t}}
    &\leq \eta_t^2 \norm{\gcbar{t}}^2 + \eta_t^2\norm{\gcbar{t} - \gctilde{t}}^2
    \\
    &\leq \eta_t^2\Ekc \norm{\nabla f_c(\wck{t})}^2 + \eta_t^2\norm{\gcbar{t} - \gctilde{t}}^2
    ,
\end{align*}
where we have used Jensen's inequality as before.

Adding everything together, we get
\begin{align*}
    \norm{\wctilde{t+1} - \wcopt}^2
    &\leq \norm{\wctilde{t} - \wcopt}^2
        + 2\eta_t^2 \Ekc \norm{\nabla f_c(\wck{t})}^2
        + 2\eta_t^2\norm{\gctilde{t} - \gcbar{t}}^2
    \\ &\quad
        - 2 \eta_t \Ehatkc \inner{\wctilde{t} - \wcopt}{\nabla f_c(\wck{t})}
    \\
    &= \norm{\wctilde{t} - \wcopt}^2
        + 2\eta_t^2 \Ekc \norm{\nabla f_c(\wck{t})}^2
        + 2\eta_t^2\norm{\gctilde{t} - \gcbar{t}}^2
    \\ &\quad
        - 2 \eta_t \Ehatkc \inner{\wctilde{t} - \wck{t}}{\nabla f_c(\wck{t})}
        - 2 \eta_t \Ehatkc \inner{\wck{t} - \wcopt}{\nabla f_c(\wck{t})}
    .
\end{align*}
Observe that, by \cref{ass:L-smooth-mu-sc} and $\nabla f_c(\wcopt)=\bm{0}$, we have
\begin{equation}\label{eq:L-smooth-lemma}
    \norm{\nabla f_c(\wck{t})}^2 = \norm{\nabla f_c(\wck{t}) - \nabla f_c(\wcopt)}^2 \leq 2L [f_c(\wck{t}) - f_c(\wcopt)],
\end{equation}
and
\begin{equation}\label{eq:mu-sc-lemma}
    -\inner{\wck{t} - \wcopt}{\nabla f_c(\wck{t})}
        \leq -[f_c(\wck{t}) - f_c(\wcopt)] - \frac{\mu}{2} \norm{\wck{t}-\wcopt}^2.
\end{equation}
We bound $\inner{\wctilde{t} - \wck{t}}{\nabla f_c(\wck{t})}$ with Young's inequality
\begin{align*}
    -2\inner{\wctilde{t} - \wck{t}}{\nabla f_c(\wck{t})}
    &\leq 2L \norm{\wctilde{t} - \wck{t}}^2 + \frac{1}{2L}\norm{\nabla f_c(\wck{t})}^2
    \\ &\overset{\cref{eq:L-smooth-lemma}}{\leq}
        2L \norm{\wctilde{t} - \wck{t}}^2 + [f_c(\wck{t}) - f_c(\wcopt)].
\end{align*}
We now plug in the results into the main bound
\begin{align*}
    \norm{\wctilde{t+1} - \wcopt}^2
    &\leq \norm{\wctilde{t} - \wcopt}^2
        + 4L\eta_t^2 \Ekc [f_c(\wck{t}) - f_c(\wcopt)]
        + 2\eta_t^2\norm{\gctilde{t} - \gcbar{t}}^2
    \\ &\quad
        + 2 \eta_t L \Ehatkc\norm{\wctilde{t} - \wck{t}}^2 + \eta_t\Ehatkc[f_c(\wck{t}) - f_c(\wcopt)])
    \\ &\quad
        -2 \eta_t\Ehatkc[f_c(\wck{t}) - f_c(\wcopt)] - \eta_t\mu \Ehatkc\norm{\wck{t}-\wcopt}^2
    \\
    &\leq (1-\eta_t \mu) \norm{\wctilde{t} - \wcopt}^2
        + 2\eta_t^2\norm{\gctilde{t} - \gcbar{t}}^2
        + 2L\eta_t \Ehatkc\norm{\wctilde{t} - \wck{t}}^2 
    \\ &\quad
        + \eta_t \sum_{k=1}^K (
            4L\eta_t \pckopt -\pckhat{t}
        )[f_c(\wck{t}) - f_c(\wcopt)]
    ,
\end{align*}
where we have used Jensen's inequality $- \Ehatkc\norm{\wck{t}-\wcopt}^2 \leq - \norm{\wctilde{t}-\wcopt}^2$.
This completes the proof.

\end{proof}

\begin{lemma}[Gradient aggregation error]
    \label{thm:grads-aggregation-error}
    Let $\pckdelta{t} := \abs{\pckhat{t} - \pckopt}$ and $\pcdelta{t} := (\pckdelta{t})_{k=1}^K$.
    Then,
    \begin{equation}
        \Eic \norm{\gcbar{t} - \gctilde{t}}^2
        \leq 2 \sigma^2 \norm{\pcopt}^2 + 2 G^2 \norm{\pcdelta{t}}_1^2.
    \end{equation}
\end{lemma}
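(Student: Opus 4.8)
The plan is to split $\gcbar{t} - \gctilde{t}$ into a purely stochastic part and an aggregation-mismatch part by inserting $\sum_{k} \pckopt \nabla f^{\idx{t}}(\wck{t})$:
\[
  \gcbar{t} - \gctilde{t}
  = \underbrace{\sum_{k=1}^K \pckopt\bigl(\nabla f_c(\wck{t}) - \nabla f^{\idx{t}}(\wck{t})\bigr)}_{=:\,\*A}
  \;+\; \underbrace{\sum_{k=1}^K \bigl(\pckopt - \pckhat{t}\bigr)\,\nabla f^{\idx{t}}(\wck{t})}_{=:\,\*B},
\]
and then bound $\Eic\norm{\gcbar{t}-\gctilde{t}}^2 \le 2\,\Eic\norm{\*A}^2 + 2\,\Eic\norm{\*B}^2$. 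The decomposition is chosen deliberately so that the \emph{true} weights $\pckopt$ (and not the estimated $\pckhat{t}$) multiply the gradient noise in $\*A$, since this is exactly what produces the factor $\norm{\pcopt}^2$ rather than a crude constant.

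For the stochastic term $\*A$, each summand $\nabla f_c(\wck{t}) - \nabla f^{\idx{t}}(\wck{t})$ has conditional mean zero by the unbiasedness assumption $\Eic[\nabla f^{\idx{t}}(\wck{t})] = \nabla f_c(\wck{t})$ and conditional second moment at most $\sigma^2$ by \cref{ass:sigma-bounded-variance}. Since the stochastic gradients entering the aggregation are sampled independently across the client indices $k$ (conditionally on the past), expanding $\norm{\*A}^2$ and taking $\Eic$ annihilates the cross terms, leaving $\Eic\norm{\*A}^2 = \sum_{k=1}^K (\pckopt)^2\, \Eic\norm{\nabla f_c(\wck{t}) - \nabla f^{\idx{t}}(\wck{t})}^2 \le \sigma^2 \sum_{k=1}^K (\pckopt)^2 = \sigma^2 \norm{\pcopt}^2$.

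For the mismatch term $\*B$, I would apply the weighted Cauchy--Schwarz inequality $\norm{\sum_k a_k \*v_k}^2 \le \bigl(\sum_k \abs{a_k}\bigr)\bigl(\sum_k \abs{a_k}\,\norm{\*v_k}^2\bigr)$ with $a_k = \pckopt - \pckhat{t}$ and $\*v_k = \nabla f^{\idx{t}}(\wck{t})$, which gives $\norm{\*B}^2 \le \norm{\pcdelta{t}}_1 \sum_{k=1}^K \pckdelta{t}\,\norm{\nabla f^{\idx{t}}(\wck{t})}^2$. Taking $\Eic$ and bounding each $\Eic\norm{\nabla f^{\idx{t}}(\wck{t})}^2 \le G^2$ by \cref{ass:G-lipschitz} (applied pointwise in $\wck{t}$, which is measurable with respect to the conditioning) yields $\Eic\norm{\*B}^2 \le G^2 \norm{\pcdelta{t}}_1 \sum_{k} \pckdelta{t} = G^2 \norm{\pcdelta{t}}_1^2$. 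Substituting both estimates into $\Eic\norm{\gcbar{t}-\gctilde{t}}^2 \le 2\Eic\norm{\*A}^2 + 2\Eic\norm{\*B}^2$ gives the claim.

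The step I expect to be the main obstacle is the stochastic term $\*A$: getting $\norm{\pcopt}^2$ instead of the trivial $1$ (which is all Jensen's inequality alone would give, via $\norm{\*A}^2 \le \sum_k \pckopt \norm{\nabla f_c(\wck{t}) - \nabla f^{\idx{t}}(\wck{t})}^2$) genuinely requires the cross-covariances of the per-index stochastic gradients to vanish, i.e., independence of the samples across the aggregated clients together with their conditional unbiasedness. The remaining subtlety is pure bookkeeping: $\Eic$ is the expectation conditioned on the current round's sampled client(s) and the past, so one must check that $\wck{t}$ is measurable with respect to this conditioning before invoking \cref{ass:G-lipschitz,ass:sigma-bounded-variance}.
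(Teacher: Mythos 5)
Your proposal is correct and follows essentially the same route as the paper: the identical decomposition into a $\pckopt$-weighted zero-mean noise term and a $(\pckopt-\pckhat{t})$-weighted mismatch term, the variance-of-independent-sums identity to obtain $\sigma^2\norm{\pcopt}^2$, and the weighted Cauchy--Schwarz (equivalently Jensen) step with \cref{ass:G-lipschitz} to obtain $G^2\norm{\pcdelta{t}}_1^2$. The two subtleties you flag at the end (independence of the per-client samples conditional on the past, and measurability of $\wck{t}$ and $\pckdelta{t}$ with respect to the conditioning) are exactly the points the paper's proof also invokes.
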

\begin{proof}
We divide the gradient aggregation error into controllable terms.
\begin{align}
    \norm{\gcbar{t} - \gctilde{t}}^2
    &= \norm{\sum_{k=1}^K \pckopt \nabla f_c(\wck{t}) - \pckhat{t} \nabla f^{\idx{t}}(\wck{t})}^2
    \nonumber
    \\ &= \norm{
        \sum_{k=1}^K \pckopt (\nabla f_c(\wck{t}) - \nabla f^{\idx{t}}(\wck{t}))
        + (\pckopt - \pckhat{t}) \nabla f^{\idx{t}}(\wck{t})
    }^2
    \nonumber
    \\ &\leq 2 \norm{\sum_{k=1}^K \pckopt (\nabla f_c(\wck{t}) - \nabla f^{\idx{t}}(\wck{t}))}^2
        + 2 \norm{\sum_{k=1}^K \pckdelta{t} \nabla f^{\idx{t}}(\wck{t})}^2.
    \label{eq:bound-sigma-delta}
\end{align}
The first term can be bounded by noting $\Var(\sum_{k=1}^K c_k X_k) = \sum_{k=1}^K c_k^2 \Var(X_k)$ for independent $X_k$, which holds since we condition on the previous iterates.
We use \cref{ass:sigma-bounded-variance} to obtain
\begin{equation}
    \Eic \norm{ \sum_{k=1}^K\pckopt (\nabla f_c(\wck{t}) - \nabla f^{\idx{t}}(\wck{t})) }^2
    \leq \sum_{k=1}^K(\pckopt)^2 \Var(\nabla f^{\idx{t}}(\wck{t}))
    \leq \sigma^2 \norm{\pcopt}^2.
    \label{eq:bound-sigma}
\end{equation}

The second term can be bounded with Jensen's inequality and \cref{ass:G-lipschitz}. Note $i_t = \cdots = i_{t_0}$ for $t_0=t-(t \mod H)$ and $\pckdelta{t}$ does not depend on $i_t$ for $t \geq t_0$ by construction, as shown in \cref{eq:algorithm-router-update}, so
\begin{equation}
    \Eic \norm{\sum_{k=1}^K \pckdelta{t} \nabla f^{\idx{t}}(\wck{t}) }^2
    \leq \norm{\pcdelta{t}}_1 \sum_{k=1}^K \pckdelta{t} \Eic \norm{\nabla f^{\idx{t}}(\wck{t}) }^2
    \leq \norm{\pcdelta{t}}_1^2 G^2.
    \label{eq:bound-delta}
\end{equation}
Combining \cref{eq:bound-sigma,eq:bound-delta} into \cref{eq:bound-sigma-delta} and taking expectation completes the proof.
\end{proof}

\begin{lemma}[Weights second moment]
    \label{thm:weights-second-moment}
    Assume that $\eta_{t+1} \leq \eta_t$ and $\eta_{t_0} \leq 2 \eta_t$, where $t_0 = t - (t \mod H)$, i.e., $\eta_t \leq \eta_{t_0} \leq 2 \eta_t$.
    Then, we have
    \begin{align*}
        &\Eic \Ekc \norm{\wcbar{t} - \wck{t}}^2 \leq 4 \eta_t^2 H^2 G^2,
        \\
        &\Eic \Ehatkc \norm{\wctilde{t} - \wck{t}}^2 \leq 4 \eta_t^2 H^2 G^2.
    \end{align*}
\end{lemma}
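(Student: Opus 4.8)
The plan is to exploit the fact that between synchronizations the iterates evolve by plain SGD steps from a common point, so their distance from any aggregated average is controlled by the accumulated gradient norms over at most $H$ steps. Concretely, fix $t$ and let $t_0 = t - (t \bmod H)$ be the last synchronization time, so that $\wck{t_0}$ is the same for the purposes of the drift bound — more precisely, $\wctilde{t_0} = \wck{t_0}$ and $\wcbar{t_0} = \wck{t_0}$ by construction of the algorithm (synchronization makes the perturbed iterate equal to the aggregate). For $t_0 \le s < t$ we have $\wck{s+1} = \wck{s} - \eta_s \gck{s}$, hence telescoping gives
\begin{equation*}
    \wck{t} - \wck{t_0} = -\sum_{s=t_0}^{t-1} \eta_s \gck{s},
\end{equation*}
and similarly $\wctilde{t} - \wck{t_0} = -\sum_{s=t_0}^{t-1} \eta_s \gctilde{s}$ (since $\wctilde{}$ also starts from $\wck{t_0}$ at $t_0$ and steps by the aggregated gradient). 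Subtracting, $\wctilde{t} - \wck{t} = -\sum_{s=t_0}^{t-1} \eta_s (\gctilde{s} - \gck{s})$, which is a sum of at most $H$ terms.

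Next I would apply the standard ``sum of $m$ vectors'' Jensen/Cauchy–Schwarz inequality $\norm{\sum_{j=1}^m \*v_j}^2 \le m \sum_{j=1}^m \norm{\*v_j}^2$ with $m \le H$, together with monotonicity $\eta_s \le \eta_{t_0} \le 2\eta_t$, to get
\begin{equation*}
    \norm{\wctilde{t} - \wck{t}}^2 \le H \sum_{s=t_0}^{t-1} \eta_s^2 \norm{\gctilde{s} - \gck{s}}^2 \le 4\eta_t^2 H \sum_{s=t_0}^{t-1} \norm{\gctilde{s} - \gck{s}}^2.
\end{equation*}
Then use $\norm{\gctilde{s} - \gck{s}}^2 \le 2\norm{\gctilde{s}}^2 + 2\norm{\gck{s}}^2$, take $\Eic$ conditioned on the iterates, and bound each second moment by $G^2$ via \cref{ass:G-lipschitz}: $\Eic\norm{\gck{s}}^2 \le G^2$ directly, and $\Eic\norm{\gctilde{s}}^2 = \Eic\norm{\Ehatkc[\nabla f^{\idx{s}}(\*w_{\cdot,s})]}^2 \le \Ehatkc \Eic\norm{\nabla f^{\idx{s}}(\*w_{\cdot,s})}^2 \le G^2$ by Jensen over the convex combination $\Ehatkc$. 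Summing over the at most $H$ indices $s$ yields $\Eic\norm{\wctilde{t} - \wck{t}}^2 \le 4\eta_t^2 H \cdot H \cdot G^2 \cdot(\text{constant from }2+2)$; I'd track the constants so the clean statement $\le 4\eta_t^2 H^2 G^2$ comes out (this likely needs a slightly sharper bound, e.g. using that $\gctilde{s}$ is itself an average so the $2a^2+2b^2$ split can be avoided, or absorbing into the $\mathcal{O}$ elsewhere; the lemma as stated probably uses $\norm{\gctilde{s}-\gck{s}}^2 \le$ something like $2G^2$ after a per-term Jensen, giving $4\eta_t^2 H^2 G^2$ exactly). The bound on $\Eic\Ekc\norm{\wcbar{t}-\wck{t}}^2$ is identical with $\gcbar{}$, $\Ekc$ in place of $\gctilde{}$, $\Ehatkc$, using $\Ekc\norm{\nabla f_c(\wck{s})}^2 \le G^2$ (which follows since $\norm{\nabla f_c(\*w)}^2 = \norm{\Eic \nabla f^{\idx{s}}(\*w)}^2 \le \Eic\norm{\nabla f^{\idx{s}}(\*w)}^2 \le G^2$ by Jensen).

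The main obstacle is getting the constant exactly right: the naive chain $H \cdot \eta_{t_0}^2 \cdot (2+2)G^2 \cdot H$ with $\eta_{t_0} \le 2\eta_t$ overshoots $4\eta_t^2 H^2 G^2$ by a factor of roughly $16$, so the proof must avoid the lossy $\norm{\*a-\*b}^2 \le 2\norm{\*a}^2+2\norm{\*b}^2$ step and instead bound $\norm{\gctilde{s} - \gck{s}}$ or $\norm{\eta_s\gctilde{s} - \eta_s\gck{s}}$ more carefully — e.g. note $\wctilde{t}-\wck{t} = \Ehatkc[\wck{t}] - \wck{t}$ wait, that is not quite it since the two have drifted from the same start — actually the cleanest route is to write $\wctilde{t}-\wck{t} = \sum_{k'}\pckhat{t}(\*w_{k',t} - \wck{t})$ only at synchronization; between syncs one must use the telescoped form above and simply accept a universal constant, which the paper evidently folds into a ``$4$''. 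I would present the telescoping argument, apply Jensen once to keep a single $G^2$ per step (bounding $\norm{\gctilde{s}-\gck{s}}^2 \le 2\norm{\gctilde{s}}^2 + 2\norm{\gck{s}}^2$ and then each $\le G^2$, or more sharply treating the difference directly), and conclude with $\eta_{t_0}\le 2\eta_t$ and the count of at most $H$ steps.
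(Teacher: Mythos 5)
Your overall strategy (telescope the local updates from the last synchronization point, apply $\norm{\sum_{i=1}^{H}\*a_i}^2 \leq H\sum_{i=1}^{H}\norm{\*a_i}^2$, use $\eta_{t_0}\leq 2\eta_t$, and bound each gradient second moment by $G^2$) matches the paper's, but you are missing the one idea that produces the stated constant, and you say so yourself. By writing $\wctilde{t}-\wck{t} = -\sum_{\tau=t_0}^{t-1}\eta_\tau(\gctilde{\tau}-\gck{\tau})$ and then splitting $\norm{\gctilde{\tau}-\gck{\tau}}^2 \leq 2\norm{\gctilde{\tau}}^2+2\norm{\gck{\tau}}^2$, you pay an extra factor of $4$ and land at $16\eta_t^2H^2G^2$ rather than $4\eta_t^2H^2G^2$. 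The paper never forms the difference of the two gradient sums at all. It uses the bias--variance identity $\E\norm{X-\E X}^2 = \E\norm{X}^2 - \norm{\E X}^2$ with $X=\wck{t}-\wck{t_0}$, where the expectation is the aggregation operator $\Ekc$ (resp.\ $\Ehatkc$), a convex combination over $k$. Since $\wck{t_0}=\wcbar{t_0}=\wctilde{t_0}$ is common to all $k$ by synchronization, this gives
\begin{equation*}
    \Ekc\norm{\wcbar{t}-\wck{t}}^2 \;=\; \Ekc\norm{\wck{t}-\wck{t_0}}^2 - \norm{\wcbar{t}-\wcbar{t_0}}^2 \;\leq\; \Ekc\norm{\wck{t}-\wck{t_0}}^2,
\end{equation*}
i.e.\ the aggregated iterate is discarded \emph{before} any norm inequality is applied. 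Only the single-client drift $\norm{\sum_{\tau=t_0}^{t-1}\eta_\tau\nabla f^{\idx{\tau}}(\wck{\tau})}^2$ remains, and the chain $H\cdot(2\eta_t)^2\cdot H\cdot G^2 = 4\eta_t^2H^2G^2$ then gives the constant exactly. The identical argument with $\Ehatkc$ in place of $\Ekc$ handles the second inequality.

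So the gap is concrete: your proof as written establishes the lemma only up to an extra factor of $4$, and the fix you speculate about ("a sharper per-term Jensen") is not quite the right one --- the resolution is to recognize $\wcbar{t}$ (resp.\ $\wctilde{t}$) as the mean of $\wck{t}$ under the aggregation weights and use $\E\norm{X-\E X}^2\leq \E\norm{X-\*z}^2$ for the fixed center $\*z=\wck{t_0}$. Everything else in your proposal (the telescoping, the role of $\eta_{t_0}\leq 2\eta_t$, the Jensen step $\norm{\Eic\nabla f^{\idx{\tau}}}^2\leq\Eic\norm{\nabla f^{\idx{\tau}}}^2\leq G^2$ for the deterministic gradients) is consistent with the paper.
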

\begin{proof}
Let $t_0 = t - (t \mod H)$ and recall that by synchronization we have $\wck{t_0} = \wcbar{t_0} = \wctilde{t_0}$.
Using $\E\norm{X - \E X}^2 = \E \norm{X}^2 - \norm{\E X}^2$ with $X = \wck{t} - \wck{t_0}$, we get
\begin{align}
    \Eic \Ekc \norm{\wcbar{t} - \wck{t}}^2
    &= \Eic \Ekc \norm{\wck{t} - \wck{t_0} - (\wcbar{t} - \wck{t_0})}^2
    \nonumber
    \\ &= \Eic \Ekc \norm{\wck{t} - \wck{t_0}}^2 - \norm{\wcbar{t} - \wcbar{t_0}}^2
    \nonumber
    \\ &\leq \Eic \Ekc \norm{\wck{t} - \wck{t_0}}^2
    \nonumber
    \\ &= \Ekc \Eic \norm{\sum_{\tau=t_0}^{t-1} \eta_{\tau} \nabla f^{\idx{\tau}}(\wck{\tau})}^2
    \nonumber
    \\ &\leq 4\eta_{t}^2 H \sum_{\tau=t_0}^{t-1} \Ekc \Eic \norm{\nabla f^{\idx{\tau}}(\wck{\tau})}^2
    \label{eq:bound-sumgrads}
    \\ &\overset{\ref{ass:G-lipschitz}}{\leq} 4\eta_{t}^2 H^2 G^2,
    \nonumber
\end{align}
where \cref{eq:bound-sumgrads} uses $\eta_{t} \leq \eta_{t_0} \leq 2 \eta_t$ and $\norm{\sum_{i=1}^{H}\*a_i}^2 \leq H \sum_{i=1}^{H} \norm{\*a_i}^2$.
Note that the bound for $\Eic \Ehatkc \norm{\wctilde{t} - \wck{t}}^2$ follows using the same argument.
The assumption about the learning rate implies that it does not decay by more than some factor (e.g., $\frac{1}{2}$) before the next synchronization, which can be easily satisfied by adding $H$ in the denominator of $\eta_t$.
\end{proof}

\begin{lemma}[Descent bound 2]
    \label{thm:descent-bound-2}
    Assume that $\eta_{t+1} \leq \eta_t$ and $\eta_{t_0} \leq 2 \eta_t$, where $t_0 = t - (t \mod H)$
    and $\pcdelta{t}$ defined as in \cref{thm:grads-aggregation-error}. Then,
    \begin{align*}
        \norm{\wctilde{t+1} - \wcopt}^2
        &\leq (1-\eta_t \mu) \norm{\wctilde{t} - \wcopt}^2
        + 4\eta_t^2\sigma^2 \norm{\pcopt}^2 + 4\eta_t^2 G^2 \norm{\pcdelta{t}}_1^2
        + 8 L\eta_t^3 H^2 G^2
        \\ &\quad
        + \eta_t \sum_{k=1}^K (
            4L\eta_t \pckopt -\pckhat{t}
        )[f_c(\wck{t}) - f_c(\wcopt)]
    .
    \end{align*}
\end{lemma}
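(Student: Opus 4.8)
The plan is to combine \cref{thm:descent-bound-1}, \cref{thm:grads-aggregation-error}, and \cref{thm:weights-second-moment}, which together supply exactly the three pieces appearing on the right-hand side of \cref{thm:descent-bound-1}. Concretely, I would start from the conclusion of \cref{thm:descent-bound-1}, namely
\begin{align*}
\norm{\wctilde{t+1} - \wcopt}^2
&\leq (1-\eta_t \mu) \norm{\wctilde{t} - \wcopt}^2
+ 2\eta_t^2\norm{\gctilde{t} - \gcbar{t}}^2
+ 2L\eta_t \Ehatkc\norm{\wctilde{t} - \wck{t}}^2
\\ &\quad
+ \eta_t \sum_{k=1}^K ( 4L\eta_t \pckopt -\pckhat{t} )[f_c(\wck{t}) - f_c(\wcopt)],
\end{align*}
and then take the expectation $\Eic$ of both sides. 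Since the last sum and the first term do not depend on $\idx{t}$ (recall $\idx{t}=\cdots=\idx{t_0}$ and $\pckhat{t}$ is $\idx{t}$-measurable-free for $t\geq t_0$ by \cref{eq:algorithm-router-update}, and $\wctilde{t}$ is fixed given the past), the expectation only acts on the two middle terms.

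Next I would bound the gradient aggregation term via \cref{thm:grads-aggregation-error}, which gives $\Eic \norm{\gcbar{t} - \gctilde{t}}^2 \leq 2 \sigma^2 \norm{\pcopt}^2 + 2 G^2 \norm{\pcdelta{t}}_1^2$; multiplying by $2\eta_t^2$ yields the $4\eta_t^2\sigma^2 \norm{\pcopt}^2 + 4\eta_t^2 G^2 \norm{\pcdelta{t}}_1^2$ contribution. Then I would bound the perturbation term via the second inequality of \cref{thm:weights-second-moment}, $\Eic \Ehatkc \norm{\wctilde{t} - \wck{t}}^2 \leq 4 \eta_t^2 H^2 G^2$; multiplying by $2L\eta_t$ yields $8 L\eta_t^3 H^2 G^2$. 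Here I must check that the learning-rate hypotheses of \cref{thm:weights-second-moment} (namely $\eta_{t+1}\leq\eta_t$ and $\eta_{t_0}\leq 2\eta_t$) are exactly the ones assumed in the statement of \cref{thm:descent-bound-2}, which they are, so the lemma applies verbatim. Summing the three bounded terms plus the untouched $(1-\eta_t\mu)\norm{\wctilde{t}-\wcopt}^2$ and the untouched function-value sum gives precisely the claimed inequality.

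Honestly, this is essentially a bookkeeping/assembly step rather than a genuinely hard proof: all the analytic work has been done in the three preceding lemmas. The only mild subtlety — the step I would be most careful about — is justifying that $\Eic$ passes through cleanly, i.e. that $\norm{\wctilde{t}-\wcopt}^2$ and the terms $(4L\eta_t\pckopt - \pckhat{t})[f_c(\wck{t})-f_c(\wcopt)]$ are deterministic given the conditioning sigma-algebra (the past up to $t_0-1$) used implicitly in $\Eic$; this follows because within a synchronization block the only fresh randomness is the fixed sampled client $\idx{t}$, the iterates $\wck{t}$ are functions of that one draw and the past, and $\gcbar{t}=\Ekc[\nabla f_c(\wck{t})]$ and $\wctilde{t}$ have their $\idx{t}$-dependence already absorbed — but $\norm{\wctilde{t}-\wcopt}^2$ itself does depend on $\idx{t}$ through $\wck{t}$, so strictly I should either keep it inside the $\Eic$ (harmless, since the coefficient $1-\eta_t\mu$ is deterministic) or note that I am really proving the inequality for the $\Eic$-expectation of every term uniformly. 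I would phrase the final line as: applying $\Eic$ to \cref{thm:descent-bound-1} and substituting the bounds of \cref{thm:grads-aggregation-error} and \cref{thm:weights-second-moment} gives the result, with the understanding that $\Eic$ is applied to each term. This completes the proof.
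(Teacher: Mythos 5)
Your proposal matches the paper's proof exactly: the paper's entire argument for this lemma is ``The bound follows from applying \cref{thm:grads-aggregation-error,thm:weights-second-moment} on \cref{thm:descent-bound-1},'' which is precisely your assembly of the three preceding lemmas with the matching step-size hypotheses. Your additional care about where $\Eic$ acts is a reasonable tightening of a point the paper leaves implicit, but it does not change the route.
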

\begin{proof}
The bound follows from applying \cref{thm:grads-aggregation-error,thm:weights-second-moment} on \cref{thm:descent-bound-1}.
\end{proof}

\paragraph{Discussion}
Let us stop here and compare this bound with that of vanilla local SGD.
First, we observe that we retrieve the original variance reduction (up to a constant factor).
Next, we see that the directly incurred cost from aggregation mismatch is $G^2$.
The aggregation mismatch also manifests in the optimality gap in the sense that it ``dampens'' the guarantee as the aggregation increases,
which we will make more precise later.

In general, the descent lemma above can recover local SGD's descent lemma in the \cref{eq:fl} setting, which immediately implies its convergence rate.
\begin{corollary}
    \label{thm:local-sgd-recovery}
    \cref{thm:descent-bound-2} recovers local SGD descent lemma from \citep[Lemma 3.1]{stich2019local} up to constant factors.
\end{corollary}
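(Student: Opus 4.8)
The plan is to specialize Lemma~\ref{thm:descent-bound-2} to the \eqref{eq:fl} setting and check that every extra term relative to \citep[Lemma~3.1]{stich2019local} either vanishes or is absorbed into the constants. In the \eqref{eq:fl} case there is a single cluster $C=1$, so the router is trivially exact: $\pickhat{t} = \pickopt = 1$ for all $k,t$, hence $\pckhat{t} = \pckopt = p(k) = N^k/N$ and the aggregation mismatch is identically zero, $\pcdelta{t} = \bm{0}$. Plugging $\norm{\pcdelta{t}}_1 = 0$ into Lemma~\ref{thm:descent-bound-2} kills the $4\eta_t^2 G^2 \norm{\pcdelta{t}}_1^2$ term outright, and the remaining coefficient on the function-gap sum collapses to $\eta_t \sum_k (4L\eta_t - 1)\pckopt [f_c(\wck{t}) - f_c(\wcopt)]$ since $\pckhat{t} = \pckopt$. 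Thus the bound reduces to
\begin{align*}
    \norm{\wctilde{t+1} - \wcopt}^2
    &\leq (1-\eta_t \mu) \norm{\wctilde{t} - \wcopt}^2
    + 4\eta_t^2 \sigma^2 \norm{\pcopt}^2
    + 8 L \eta_t^3 H^2 G^2
    \\ &\quad
    + \eta_t (4L\eta_t - 1) \sum_{k=1}^K \pckopt [f_c(\wck{t}) - f_c(\wcopt)].
\end{align*}

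Next I would match this term-by-term with Stich's descent lemma. The contraction factor $(1-\eta_t\mu)$ is identical. The function-gap term carries the negative coefficient $\eta_t(4L\eta_t - 1) \le -\tfrac12\eta_t$ once $\eta_t \le \tfrac{1}{8L}$ (or whatever threshold Stich uses, e.g.\ $\eta_t \le \tfrac{1}{4L}$), so the ``progress'' term has the right sign and magnitude up to a constant; since $\wctilde{t} = \wcbar{t}$ and $f_c = \E_{k|c}[f^k]$ is just the global objective $f$ here, the sum $\sum_k \pckopt[f_c(\wck{t}) - f_c(\wcopt)]$ plays exactly the role of Stich's averaged suboptimality. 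The variance term $4\eta_t^2\sigma^2\norm{\pcopt}^2$ matches Stich's $\eta_t^2\sigma^2/K$ up to the constant $4$, using $\norm{\pcopt}^2 = 1/K$ as noted after Theorem~\ref{thm:convergence-rate-tv-decay-main} (uniform $N^k$) or more generally $\sum_k (N^k/N)^2$. The local-drift term $8L\eta_t^3 H^2 G^2$ matches Stich's $\mathcal{O}(\eta_t^3 L H^2 G^2)$ bound on the deviation of local iterates from their average. Every discrepancy is a universal constant, which is what ``up to constant factors'' permits.

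The main obstacle, and the only place that needs genuine care rather than bookkeeping, is lining up the stepsize hypotheses and the precise statement of what ``recovers'' means: Stich's lemma is stated for the full-gradient-averaging local SGD, and I must confirm that the virtual-iterate identities used to derive Lemma~\ref{thm:descent-bound-2} (namely $\wctilde{t+1} = \wctilde{t} - \eta_t\gctilde{t}$, $\Eic[\gctilde{t}] = \gcbar{t}$, and the synchronization coincidence $\wck{t_0} = \wcbar{t_0} = \wctilde{t_0}$) degenerate correctly when the mismatch is zero — in that regime $\gctilde{t} = \gcbar{t}$ in expectation with the \emph{same} aggregation weights, so the ``correlation error'' term in \eqref{eq:bound-main-descent} is zero exactly as in the original analysis, and the $2\eta_t^2\norm{\gctilde{t}-\gcbar{t}}^2$ term reduces to the ordinary SGD sampling variance. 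I would therefore close the proof by stating these substitutions explicitly and invoking the identity $\norm{\pcopt}^2 = 1/K$ (or $\sum_k (N^k/N)^2$ in the weighted case) together with the mild stepsize condition $\eta_t \le c/L$, concluding that the displayed inequality is Stich's descent lemma with the constants $1, 4, 8$ in place of his $1, 1, \mathcal{O}(1)$.
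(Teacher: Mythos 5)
Your proposal is correct and follows essentially the same route as the paper's own proof: specialize to $C=1$ so that $\pckhat{t}=\pckopt$ and $\norm{\pcdelta{t}}_1=0$, which removes the mismatch term and makes the function-gap coefficient $\eta_t(4L\eta_t-1)\pckopt$ negative under a stepsize condition $\eta_t \le c/L$, leaving exactly the contraction, variance, and drift terms of \citep[Lemma 3.1]{stich2019local} up to constants. Your term-by-term matching is more explicit than the paper's one-line argument, but there is no substantive difference in approach.
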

\begin{proof}
    Since $C=1$ in local SGD, this trivially gives ``uniform'' routing and thus $\norm{\pcdelta{t}}_1 = 0$, i.e., $\pckhat{t} = \pckopt = 1/K$. Note that we have used the same assumptions, so we can apply \cref{thm:descent-bound-2} with $\eta_t \leq \frac{1}{16L}$ to obtain the descent lemma of local SGD up to constant factors (and up to application of \cref{thm:grads-aggregation-error,thm:weights-second-moment}).
\end{proof}

Next, we want to bound the quantity $\norm{\pcdelta{t}}_1^2$ given the update $\cref{eq:algorithm-router-update}$,
and relate the new bound to \cref{thm:descent-bound-2}.
After that, we can derive the convergence rate with the help of a technical lemma.
We also derive the convergence rate given a slow decay assumption on $\norm{\pcdelta{t}}_1^2$, which shows more clearly the effect of aggregation mismatch on convergence.

\subsection{Bounding the total variation distance}
The following bound follows from the router update in \cref{eq:algorithm-router-update}. 
\begin{lemma}[Total variation distance bound]
    Consider the choice $\pickhat{t} = \frac{{\exp(-\eta_t f_c(\wck{t}))}}{{\sum_{c'=1}^C \exp(-\eta_t f_c(\wcck{t}))}}$,
    and assume that we can write $\pickopt = \frac{{\exp(- \bar{\eta} f_c(\wcopt))}}{{\sum_{c'=1}^C \exp(-\bar{\eta} f_c(\wccopt))}}$,
    where $f_c$ is bounded below by 0 and $\bar{\eta} \geq \eta_t$, $\forall t \geq 0$.
    Then,
    \label{thm:tv-bound}
    \begin{equation*}
        \norm{\pcdelta{t}}_1^2
        \leq 4 \eta_t \Ekc [f_c(\wck{t}) - f_c(\wcopt)] + \bar{\eta} f_c(\wcopt) + 4 \log K C + 10L\eta_t^3 H^2 G^2.
    \end{equation*}
\end{lemma}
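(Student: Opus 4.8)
The plan is to convert the $\ell_1$ quantity $\norm{\pcdelta{t}}_1$ into a Kullback--Leibler divergence and then bound that divergence term by term. Since $(\pckhat{t})_{k=1}^K$ and $(\pckopt)_{k=1}^K$ are probability vectors on $[K]$, $\norm{\pcdelta{t}}_1$ is twice the total variation distance between them, so Pinsker's inequality gives $\norm{\pcdelta{t}}_1^2 \le 2\,\mathrm{KL}(\,(\pckopt)_k \,\|\, (\pckhat{t})_k\,)$. I put the true router first because $\sum_k \pckopt(\cdot) = \Ekc[\cdot]$, which is precisely why the bound in the statement is phrased with $\Ekc$; the whole argument then reduces to bounding this single divergence.

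To expand it I would write $\pckhat{t} = \pickhat{t}N^k/A_t$ and $\pckopt = \pickopt N^k/B$ with $A_t = \sum_{k'}\pihat^{k'}_{c,t}N^{k'}$ and $B = \sum_{k'}\piopt^{k'}_c N^{k'} = N\,p(c)$; the factor $N^k$ cancels in the log-ratio, leaving $\log(\pckopt/\pckhat{t}) = \log(\pickopt/\pickhat{t}) + \log(A_t/B)$. Plugging in the two softmax forms, $-\log\pickhat{t} = \eta_t f_c(\wck{t}) + \log Z^k_t$ and $\log\pickopt = -\bar\eta f_c(\wcopt) - \log Z_*$, where $Z^k_t = \sum_{c'}e^{-\eta_t f_{c'}(\wcck{t})}$ and $Z_* = \sum_{c'}e^{-\bar\eta f_{c'}(\wccopt)}$ are the per-cluster partition functions, and averaging against the true router yields $\mathrm{KL} = \eta_t\,\Ekc[f_c(\wck{t})] - \bar\eta f_c(\wcopt) + \Ekc[\log(Z^k_t/Z_*)] + \log(A_t/B)$.

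Next I would bound each piece crudely. Because $f_c \ge 0$, every term of $Z^k_t$ is $\le 1$, so $Z^k_t \le C$; keeping only the $c$-th term of $Z_*$ gives $Z_* \ge e^{-\bar\eta f_c(\wcopt)}$, hence $\log(Z^k_t/Z_*) \le \log C + \bar\eta f_c(\wcopt)$, which cancels the $-\bar\eta f_c(\wcopt)$ already present. For the outer normalization, $A_t \le \sum_{k'}N^{k'} = N$ while $B = N\,p(c)$, so $\log(A_t/B) \le \log(1/p(c)) \le \log(KC)$ in the balanced regimes of interest; combined with the $\log C$ this gives $2\log(KC)$, and the Pinsker factor of $2$ produces the stated $4\log KC$. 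Finally $\eta_t\,\Ekc[f_c(\wck{t})] = \eta_t\,\Ekc[f_c(\wck{t}) - f_c(\wcopt)] + \eta_t f_c(\wcopt) \le \eta_t\,\Ekc[f_c(\wck{t}) - f_c(\wcopt)] + \bar\eta f_c(\wcopt)$ by $\eta_t \le \bar\eta$ and $f_c(\wcopt) \ge 0$; since $\wcopt$ minimizes the strongly convex $f_c$ this optimality gap is nonnegative, so doubling it through Pinsker yields the coefficient $4\eta_t$ and a single $\bar\eta f_c(\wcopt)$.

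The one genuinely delicate step --- and the source of the $10L\eta_t^3 H^2 G^2$ term --- is that the router in the statement evaluates $f_c$ at the per-client iterate $\wck{t}$, whereas the algorithm's router in \eqref{eq:algorithm-router-update} is refreshed only at synchronization (effectively at the aggregated iterate $\wctilde{t}$) and then held constant until the next round. Reconciling the two, i.e.\ replacing $f_c(\wck{t})$ by $f_c$ at the aggregated iterate, costs a gap of order $\Ehatkc[f_c(\wck{t})] - f_c(\wctilde{t}) \le \tfrac{L}{2}\,\Ehatkc\norm{\wck{t} - \wctilde{t}}^2 \le 2L\eta_t^2 H^2 G^2$, using $L$-smoothness (the first-order term drops because $\Ehatkc[\wck{t}] = \wctilde{t}$) and \cref{thm:weights-second-moment}; multiplying by $\eta_t$ and absorbing numerical constants gives $10L\eta_t^3 H^2 G^2$. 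I expect the main obstacle in writing this out cleanly to be the partition-function bookkeeping --- the cluster denominators $Z^k_t$ depend on $k$, and passing from $\pickhat{t}$ to $\pckhat{t}$ introduces a second, $k$-summed denominator --- together with the single step that truly needs a standing assumption, namely the lower bound $B = N\,p(c) \gtrsim N/(KC)$ that collapses the logarithmic terms to $\log KC$ rather than something data-dependent.
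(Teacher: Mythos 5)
Your overall route is the paper's: Pinsker's inequality converts $\norm{\pcdelta{t}}_1^2$ into $2\sum_{k}\pckopt\log(\pckopt/\pckhat{t})$, the log-ratio is expanded via the two softmax forms into function values plus an inner (per-client) and an outer (cluster-level) partition-function ratio, and the frozen router between synchronizations is reconciled with the current iterate via $L$-smoothness and \cref{thm:weights-second-moment}, yielding the $O(L\eta_t^3H^2G^2)$ remainder. Your bound on the inner ratio ($Z^k_t\le C$ from $f_c\ge 0$, and $Z_*\ge e^{-\bar\eta f_c(\wcopt)}$ by keeping one term) is a legitimate simplification of the paper's LogSumExp sandwich and gives the same quality of bound.

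The one step that does not go through as written is the outer normalization: you bound $\log(A_t/B)\le\log(1/p(c))$ and then invoke $p(c)\gtrsim 1/(KC)$. That is an additional hypothesis, not implied by the lemma's assumptions: $\pickopt\propto e^{-\bar\eta f_c(\wcopt)}$ can be arbitrarily small when $f_c(\wcopt)$ is large relative to the other clusters' optimal values, so $p(c)$ admits no a priori lower bound in terms of $K$ and $C$ alone. The paper avoids this by sandwiching both $\log \hat{Z}_{c,t}$ and $\log Z_c$ with LogSumExp bounds and centering the losses, so that the data-dependent part of the outer ratio is absorbed into $\bar\eta f_c(\wcopt)$ and only $2\log K+\log C$ survives. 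Your argument is repairable without the extra assumption: lower-bound $\pickopt\ge e^{-\bar\eta f_c(\wcopt)}/C$ (since $Z_*\le C$), which gives $\log(A_t/B)\le\log C+\bar\eta f_c(\wcopt)$ at the cost of one more $\bar\eta f_c(\wcopt)$ term --- harmless, since the paper's own derivation already produces a factor $2$ on that term against the coefficient $1$ in the stated lemma. A second, smaller slip: in the $t>t_0$ step the first-order term does not actually vanish, because the comparison that matters is $\Ekc[f_c(\wck{t_0})-f_c(\wck{t})]$ under the \emph{true} router, and $\Ekc[\wck{t_0}-\wck{t}]$ is the aggregated drift, not zero; the paper handles the cross term with Young's inequality, which is also what contributes the second copy of the optimality gap and hence the coefficient $4\eta_t$ (your attribution of the $4$ to Pinsker plus nonnegativity is loose but not wrong, since the gap is nonnegative).
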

\begin{proof}

Let $t_0 = t - (t \mod H)$. We will consider the cases where $t = t_0$ and $t > t_0$ separately.

\paragraph{Case $t = t_0$:}
Let $\hat{Z}^k_t = \sum_{c'=1}^C \exp(-\eta_t f_c(\wcck{t}))$ be the partition function of client $k$, so that we can write $\pickhat{t} = \exp(-\eta_t f_c(\wck{t})) / \hat{Z}^k_t$.
Recall that $\pckopt = p(k|c) \propto N^k \pickopt$, and let $\hat{Z}_{c,t} = \sum_{k=1}^K N^k \pickhat{t}$ be the partition function of cluster $c$.
Equivalently define $Z^k$ and $Z_{c,t}$ to be the partition functions of client $k$ and cluster $c$ give the optimal router $\pickopt$, respectively.
Observe
\begin{align}
    \norm{\pcdelta{t}}_1^2
    &= \left(\sum_{k=1}^K \abs{\pckhat{t} - \pckopt}\right)^2
    = \left(\sum_{k=1}^K \pckopt \abs{\pckhat{t}/\pckopt - 1}\right)^2
    \nonumber
    \\ &\leq 2 \sum_{k=1}^K \pckopt \log \frac{\pckopt}{\pckhat{t}}
    \nonumber
    \\ &= 2 \sum_{k=1}^K \pckopt (\eta_t f_c(\wck{t}) - \bar{\eta} f_c(\wcopt) + \log \frac{\hat{Z}^k_t}{Z^k} + \log \frac{\hat{Z}_{c,t}}{Z_c})
    \nonumber
    \\ &\leq 2 \sum_{k=1}^K \pckopt \eta_t (f_c(\wck{t}) - f_c(\wcopt)) + \log \frac{\hat{Z}^k_t}{Z^k} + \log \frac{\hat{Z}_{c,t}}{Z_c}
    \label{eq:tv-bound-1}
    ,
\end{align}
where we have used Pinsker's inequality, the router's expression, and $\bar{\eta} \geq \eta_t$.

Using $\max_{1 \leq k \leq K} \{ x_k \} \leq \log \sum_{k=1}^K \exp (x_k) \leq \max_{1 \leq k \leq K} \{ x_k \} + \log K$,
we can write
\begin{align*}
    \log \frac{\hat{Z}^k_t}{Z^k}
    &= \log \sum_{c'=1}^C \exp (-\eta_t f_c(\wcck{t})) - \log \sum_{c'=1}^C \exp (-\bar{\eta} f_c(\wccopt))
    \\ &\leq \max_{1\leq c' \leq C} \{-\eta_t f_c(\wcck{t})\} - \max_{1\leq c' \leq C} \{-\bar{\eta} f_c(\wccopt)\} + \log C
    \\ &= \min_{1\leq c' \leq C} \{ \bar{\eta} f_c(\wccopt)\} - \min_{1\leq c' \leq C} \{\eta_t f_c(\wcck{t})\} + \log C
    \tag{$\dagger$}
    ,
\end{align*}
and using similar arguments, we can show that
\begin{align*}
    \log \frac{\hat{Z}_{c,t}}{Z_c}
    &= \log \sum_{k=1}^K N^k \pickhat{t} - \log \sum_{k=1}^K N^k \pickopt
    \nonumber
    \\ &= \log \sum_{k=1}^K \exp (-\eta_t f_c(\wck{t}) + \log \frac{N^k}{\hat{Z}^k_t})
        - \log \sum_{k=1}^K \exp (-\bar{\eta} f_c(\wcopt) + \log \frac{N^k}{Z^k})
    \nonumber
    \\ &= \min_{1\leq k \leq K} \{ \bar{\eta} f_c(\wcopt) + \log \frac{Z^k}{N^k} \}
            - \min_{1\leq k \leq K} \{ \eta_t f_c(\wck{t}) + \log \frac{\hat{Z}^k_t}{N^k} \}
            + \log K
    . \tag{$\ast$}
\end{align*}
By properties of the LogSumExp function, we have
\begin{equation*}
    - \min_{1 \leq c' \leq C} \{ \bar{\eta} f_c(\wccopt) \}
    \leq \log Z^k
    \leq -  \min_{1 \leq c' \leq C} \{ \bar{\eta} f_c(\wccopt) \} + \log C,
\end{equation*}
and similarly with $\log \hat{Z}^k_t$ and $- \eta_t \min_{1 \leq c' \leq C} f_c(\wcck{t})$.
Observe that $- \log \frac{N}{N^k} \leq 0$ and $\min_{1\leq k \leq K} \{ \log \frac{N}{N^k} \} \leq \log K$
since the uniform case has the lowest max probability.
Now define the centered function $f_c^\circ (\cdot) := f_c(\cdot) - \min_{1 \leq c' \leq C} \{ f_c(\cdot) \}$
and note that $f_c^\circ (\cdot) \leq f_c (\cdot)$.
Adding and subtracting $\log N$ to both terms in ($\ast$) and using the expressions above,
we can get
\begin{align*}
    \log \frac{\hat{Z}_{c,t}}{Z_c}
    &\leq \min_{1\leq k \leq K} \{ \bar{\eta} f_c(\wcopt) + \log \frac{N Z^k}{N^k} \}
        - \min_{1\leq k \leq K} \{ \eta_t f_c(\wck{t}) + \log \frac{N \hat{Z}^k_t}{N^k} \}
        + \log K
    \\ &\leq \bar{\eta} f_c^\circ(\wcopt) - \min_{1\leq k \leq K} \{ \eta_t f_c^\circ(\wck{t}) \} + 2 \log K + \log C.
    \tag{$\dagger\dagger$}
\end{align*}
Combining ($\dagger$) and ($\dagger\dagger$), we get
\begin{align*}
    \log \frac{\hat{Z}^k_t}{Z^k} + \log \frac{\hat{Z}_{c,t}}{Z_c}
    &\leq \bar{\eta} f_c (\wcopt)
        - \min_{1\leq c' \leq C} \{\eta_t f_c(\wcck{t})\} - \min_{1\leq k \leq K} \{ \eta_t f_c (\wck{t}) \}
        + 2 \log K C
    \\ &\leq \bar{\eta} f_c(\wcopt) + 2 \log K C.
\end{align*}
where the second inequality follows because $\min_i \{ A_i + B_i \} \leq \min_i \{ A_i \} + \min_i \{ B_i \}$.
Applying this inequality to the overall bound \cref{eq:tv-bound-1}, we have
\begin{equation}
    \norm{\pcdelta{t}}_1^2
    \leq 2 \eta_t \Ekc [f_c(\wck{t}) - f_c(\wcopt)] + 2 \bar{\eta} f_c(\wcopt) + 4 \log K C.
    \label{eq:tv-bound-2}
\end{equation}

\paragraph{Case $t > t_0$:}
Note that $\pcdelta{t} = \pcdelta{t_0}$ by \cref{eq:algorithm-router-update}, so we get the same bound \cref{eq:tv-bound-2} but in terms of $t_0$.
If we decompose the function gap $\Ekc[f_c(\wck{t_0}) - f_c(\wcopt)] = \Ekc[f_c(\wck{t_0}) - \Eic f_c(\wck{t})] + \Ekc[\Eic f_c(\wck{t}) - f_c(\wcopt)]$, we see that it suffices to bound the first term to be able to write $\pcdelta{t}$ in terms of function gap at step $t$. We can also take the expectations $\Eic$ out since neither $\wck{t_0}$ nor $\wcopt$ depend on $\idx{t} = \cdots = \idx{t_0}$.

Recall that $\wck{t_0} = \Ekc[\wck{t_0}]$. Using $L$-smoothness from \cref{ass:L-smooth-mu-sc}, we get
\begin{align*}
    \E_{k,\idx{t}|c} [f_c(\wck{t_0}) - f_c(\wck{t})]
    &\leq \E_{k,\idx{t}|c} \inner{\nabla f_c(\wck{t})}{\wck{t_0} - \wck{t}}
        + \frac{L}{2} \E_{k,\idx{t}|c} \norm{\wck{t_0} - \wck{t}}^2
    \nonumber
    \\
    &\overset{\text{(Young)}}{\leq} \gamma^{-1} \Ekc\norm{\nabla f_c(\wck{t})}^2
        + (\gamma + \frac{L}{2}) \E_{k,\idx{t}|c} \norm{\wck{t_0} - \wck{t}}^2
    \\
    &\overset{\cref{eq:bound-sumgrads}}{\leq}
    \gamma^{-1} \Ekc\norm{\nabla f_c(\wck{t})}^2
        + (\gamma + \frac{L}{2}) 4\eta_t^2 H^2 G^2
    \\
    &\overset{\cref{eq:L-smooth-lemma}}{\leq}
    \Ekc[f_c(\wck{t}) - f_c(\wcopt)] + 10L\eta_t^2 H^2 G^2
    ,
\end{align*}
where we have chosen $\gamma := 2L$.

We complete the proof by taking the max of both cases, which simply amounts to adding both cases.
\end{proof}


The following descent lemma will be used to get the convergence rate without any assumptions on $\norm{\pcdelta{t}}_1$ other than what we have in the router update \cref{eq:algorithm-router-update}.
\begin{lemma}[Descent bound 3]
    \label{thm:descent-bound-3}
    Let the conditions in \cref{thm:descent-bound-2,thm:tv-bound} be satisfied.
    Without loss of generality, assume that $f_c(\wcopt) = 0$.
    If $\eta_t \leq \frac{\gamma_t}{\max\{\frac{5}{2}, 16G^2,4L\}}$, where $\gamma_t = \min\{1, \min_{k \in [K];\ \pckopt>0} \{ \pckhat{t}/\pckopt \}\}$,
    then
    \begin{align*}
        \Eic \norm{\wctilde{t+1} - \wcopt}^2
        &\leq (1-\eta_t \mu) \norm{\wctilde{t} - \wcopt}^2
        -\frac{1}{2} \eta_t \Ehatkc[f_c(\wck{t}) - f_c(\wcopt)]
        \\ &\quad
        + 4\eta_t^2\sigma^2 \norm{\pcopt}^2
        + 16 \eta_t^2 G^2 \log K C
        + 9 \eta_t^3 LH^2 G^2
        .
    \end{align*}
\end{lemma}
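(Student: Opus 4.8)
The plan is to combine the descent bound from \cref{thm:descent-bound-2} with the total variation bound from \cref{thm:tv-bound}, substituting the latter into the $G^2 \norm{\pcdelta{t}}_1^2$ term that appears in the former. Concretely, \cref{thm:descent-bound-2} gives
\begin{align*}
    \Eic\norm{\wctilde{t+1} - \wcopt}^2
    &\leq (1-\eta_t \mu) \norm{\wctilde{t} - \wcopt}^2
    + 4\eta_t^2\sigma^2 \norm{\pcopt}^2 + 4\eta_t^2 G^2 \norm{\pcdelta{t}}_1^2
    + 8 L\eta_t^3 H^2 G^2
    \\ &\quad
    + \eta_t \sum_{k=1}^K (4L\eta_t \pckopt -\pckhat{t})[f_c(\wck{t}) - f_c(\wcopt)],
\end{align*}
and \cref{thm:tv-bound} (with $f_c(\wcopt)=0$, so the $\bar\eta f_c(\wcopt)$ term drops) gives $\norm{\pcdelta{t}}_1^2 \leq 4\eta_t \Ekc[f_c(\wck{t}) - f_c(\wcopt)] + 4\log KC + 10 L \eta_t^3 H^2 G^2$. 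First I would plug this into $4\eta_t^2 G^2 \norm{\pcdelta{t}}_1^2$, producing terms $16 \eta_t^3 G^2 \Ekc[f_c(\wck{t})-f_c(\wcopt)]$, $16 \eta_t^2 G^2 \log KC$, and a higher-order $40 L \eta_t^5 H^2 G^4$ term.

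The key algebraic step is to control the function-gap coefficient. After substitution, the coefficient multiplying $[f_c(\wck{t}) - f_c(\wcopt)]$, summed with the appropriate weights, is of the form $\eta_t \sum_k (4L\eta_t \pckopt - \pckhat{t})[\cdots] + 16\eta_t^3 G^2 \sum_k \pckopt[\cdots]$. I would write $4L\eta_t \pckopt + 16 \eta_t^3 G^2 \pckopt - \pckhat{t} = -\pckhat{t}(1 - (4L\eta_t + 16\eta_t^3 G^2)\pckopt/\pckhat{t})$ and use the step-size condition. Since $\eta_t \leq \gamma_t / \max\{\tfrac52, 16 G^2, 4L\}$ with $\gamma_t \leq \pckhat{t}/\pckopt$ (for all relevant $k$) and $\gamma_t \leq 1$, we get $4L\eta_t \pckopt/\pckhat{t} \leq 4L \eta_t / \gamma_t \leq 1$ — wait, more carefully: $\eta_t \leq \gamma_t/(4L)$ gives $4L\eta_t/\gamma_t \le 1$, hence $4L\eta_t \pckopt/\pckhat{t} \le 4L\eta_t/\gamma_t \le 1$; and $\eta_t^3 G^2 \le \eta_t \cdot (\eta_t^2 G^2) \le \eta_t/(\text{const})$ using $\eta_t \le \gamma_t/(16 G^2) \le 1/(16G^2)$ so $16 \eta_t^3 G^2 \pckopt/\pckhat{t} \le 16\eta_t^2 G^2 \cdot (\eta_t/\gamma_t) \cdot (1/\gamma_t) \cdot \gamma_t \le \eta_t^2 G^2 \le \ldots$. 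I would tune these constants so that the total coefficient of $\pckhat{t}[f_c(\wck{t})-f_c(\wcopt)]$ is at most $-\tfrac12$, absorbing the $\tfrac52$ appearing in the $\max$, which seems tailor-made to handle the $16\eta_t^3 G^2$ contribution plus the residual $4L\eta_t$ contribution together. This yields the $-\tfrac12 \eta_t \Ehatkc[f_c(\wck{t})-f_c(\wcopt)]$ term in the statement.

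The remaining work is bookkeeping on the lower-order terms: collecting $4\eta_t^2\sigma^2\norm{\pcopt}^2$ unchanged, combining $8L\eta_t^3 H^2 G^2$ with the $40 L\eta_t^5 H^2 G^4 \le L\eta_t^3 H^2 G^2$ piece (again using $\eta_t^2 G^2$ small) into $9\eta_t^3 L H^2 G^2$, and keeping $16\eta_t^2 G^2 \log KC$. The main obstacle I anticipate is not conceptual but careful constant-chasing: one must verify that every appeal to smallness of $\eta_t$ is licensed by the stated step-size bound (the three quantities $\tfrac52, 16G^2, 4L$ in the $\max$ must each do precisely one job), and that the higher-order $\eta_t^5$ and $\eta_t^3$ cross-terms generated by substituting $\norm{\pcdelta{t}}_1^2$ genuinely collapse into the $\eta_t^3 L H^2 G^2$ bucket without needing $\eta_t$ smaller than assumed. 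I would also double-check the distinction between $\Ekc$ (true router) and $\Ehatkc$ (estimated router) weights in the function-gap sums — the descent lemma's negative term is weighted by $\pckhat{t}$ while the TV bound's positive term is weighted by $\pckopt$, and the step-size condition via $\gamma_t = \min_k \pckhat{t}/\pckopt$ is exactly what converts between them — so I must make sure the final $-\tfrac12\eta_t$ term is correctly stated with $\Ehatkc$.
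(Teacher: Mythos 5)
Your proposal is correct and follows essentially the same route as the paper's proof: substitute the bound on $\norm{\pcdelta{t}}_1^2$ from \cref{thm:tv-bound} into \cref{thm:descent-bound-2}, absorb the resulting $16\eta_t^3 G^2\,\Ekc[f_c(\wck{t})-f_c(\wcopt)]$ term together with the $4L\eta_t^2\pckopt$ term into the negative $-\eta_t\pckhat{t}$ term via the step-size condition (using $\gamma_t$ to convert between the $\pckopt$- and $\pckhat{t}$-weighted sums, exactly as you note), and collapse $40L\eta_t^5 H^2G^4\le L\eta_t^3H^2G^2$ to obtain the $9L\eta_t^3H^2G^2$ term. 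The only difference is presentational: the paper derives the admissible step size by solving for auxiliary constants $A$, $B$, $D$, $\bar{\eta}$ in two cases ($\pckopt\gtrless\pckhat{t}$) rather than verifying the stated $\eta_t$ directly, and its constant-chasing is no tighter than yours.
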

\begin{proof}
We apply \cref{thm:tv-bound} on \cref{thm:descent-bound-2} and rearrange to get
\begin{align*}
    \Eic \norm{\wctilde{t+1} - \wcopt}^2
    &\leq (1-\eta_t \mu) \norm{\wctilde{t} - \wcopt}^2
    + 4\eta_t^2\sigma^2 \norm{\pcopt}^2 + 16 \eta_t^2 G^2 \log K C
    \\ &\quad
    + 40L\eta_t^5 H^2 G^4
    + 8L\eta_t^3 H^2 G^2
    \\ &\quad
    + \eta_t \sum_{k=1}^K (
        16 \eta_t^2 G^2 \pckopt + 4L\eta_t \pckopt -\pckhat{t}
    )[f_c(\wck{t}) - f_c(\wcopt)]
.
\end{align*}
In order to have a meaningful convergence of the optimality gap, we have to bound it from above, so we should have
\begin{equation}
    \pckopt (16 \eta_t^2 G^2 + 8L\eta_t -1) + \pckopt - \pckhat{t} < -A
    \label{eq:convergence-condition}
    ,
\end{equation}
for some $A > 0$.

Suppose $\pckopt > \pckhat{t}$, and recall that $c$ is given, so we fix it.
Write $r^k_t := \pckhat{t}/\pckopt < 1$.
Then, $\pckopt - \pckhat{t} = (1-r^k_t)\pckopt < \pckopt$,
so that \cref{eq:convergence-condition} becomes $\pckopt (16 \eta_t^2 G^2 + 4L\eta_t - r^k_t) < -A$.
If we set $\eta_t \leq \frac{B}{\max\{16G^2,4L\}}$ for some $B>0$,
we would have $\pckopt (B\eta_t + B - r^k_t) < -A$, implying that
\begin{equation*}
    \eta_t < \frac{1}{B}(r^k_t-B-A/\pckopt) = \frac{\pckhat{t}-A}{B\pckopt} - 1.
\end{equation*}
Setting $A = (1-(\bar{\eta}+1)(r^k_t)^{-1}B)\pckhat{t} > 0$ gives $\eta_t < \bar{\eta}$,
where $\bar{\eta}$ is some strict upper bound of $\eta_t$ for all $t$,
but we should also have $(\bar{\eta}+1)(r^k_t)^{-1}B < 1 \iff B < \frac{r^k_t}{\bar{\eta}+1}$.
Thus, we set $B := \frac{r^k_t}{\bar{\eta}+2}$,
getting
\begin{equation}
    A=\frac{1}{2}\pckhat{t},
    \quad\quad\quad\quad
    \eta_t \leq \frac{r^k_t}{(\bar{\eta}+2)\max\{16G^2,4L\}}.
    \label{eq:A-and-lr}
\end{equation}
Now, if $\pckopt \leq \pckhat{t}$ with $\eta_t \leq \frac{D}{\max\{16G^2,4L\}}$ for some $D > 0$,
\cref{eq:convergence-condition} would imply $\eta_t < \frac{1-A}{D\pckopt} - 1$,
so that $A = (1-(\bar{\eta}+1)D)\pckopt > 0$ gives $\eta_t < \bar{\eta}$ but under the condition $D < \frac{1}{\bar{\eta}+1}$. Thus, setting $D:= \frac{1}{\bar{\eta}+2}$ gives the same setting in \cref{eq:A-and-lr} with 1 instead of $r^k_t$. Thus, for all $k \in [K]$, we should have 
\begin{equation*}
    \eta_t \leq \frac{\min\{1, r^k_t\}}{(\bar{\eta}+2)\max\{16G^2,4L\}}.
\end{equation*}
We can restrict the denominator to $\max\{1,16G^2,4L\}$ without loss of generality. Then, we can upper bound $\eta_t \leq \frac{1}{(\bar{\eta}+2)} < \frac{1}{2}$, so that $\bar{\eta} = \frac{1}{2}$ suffices for this choice.

Overall, we have $\eta_t \leq \frac{\min\{1, \min_{k \in [K]} \{\pckhat{t}/\pckopt\} \}}{\max\{\frac{5}{2}, 16G^2,4L\}}$, and using \cref{eq:convergence-condition} with $A = \frac{1}{2}\pckhat{t}$, we get
\begin{align*}
    \Eic \norm{\wctilde{t+1} - \wcopt}^2
    &\leq (1-\eta_t \mu) \norm{\wctilde{t} - \wcopt}^2
    -\frac{1}{2} \eta_t \Ehatkc[f_c(\wck{t}) - f_c(\wcopt)]
    \\ &\quad
    + 4\eta_t^2\sigma^2 \norm{\pcopt}^2
    + 16 \eta_t^2 G^2 \log K C
    + 8L\eta_t^3 H^2 G^2
    + 40L\eta_t^5 H^2 G^4
    .
\end{align*}
Given our choice of $\eta_t$, we note that $40L\eta_t^5 H^2 G^4 \leq L\eta_t^4 H^2 G^2 \leq L\eta_t^3 H^2 G^2$,
which completes the proof.
\end{proof}

\paragraph{Discussion}
Note that the learning rate is not as strict as it may look.
First of all, note that $\pckhat{t} < \pckopt$ is the case of interest, as otherwise, $\gamma_t = 1$. Taking the minimum for $k$ such that $\pckopt > 0$ makes sense because $\pckhat{t} \geq \pckopt = 0$, so $\gamma_t = 1$.

Now assume that $\pckhat{t} \leq \pckopt$ for all $t$. The lowest value $\gamma_t$ can attain is when $\pckopt = 1$ and $\pckhat{t}$ is very small. This can happen, for example, when a cluster has one client. However, a uniform initialization for the routers would have that
\begin{equation*}
    \pickhat{0} = 1/C
    \quad\implies\quad
    \pckhat{0} = \frac{p(k) \pickhat{0}}{\sum_{k'=1}^K p(k') \pihat^{k'}_{c,0}} = p(k)
    .
\end{equation*}
Since $\pckopt = \frac{p(k) \pickopt}{p(c)}$, we would then have $\pckhat{0} / \pckopt = \frac{p(c)}{\pickopt} \leq 1$ since we assumed $\pckhat{t} \leq \pckopt$.

Suppose $\pickopt=1$. If $\sum_{k=1}^K \pickopt = 1$, i.e., the number of clients in cluster $c$ is 1, then we cannot improve $\pckhat{0} / \pckopt = p(k)$ any further.
This can be even worse if there is one data point for client $k$.
However, these extreme heterogeneity scenarios are inherently difficult, so it is better to capture this heterogeneity with some term, particularly when $\pickopt \geq p(c)$, which follows from $\pckhat{t} \leq \pckopt$.

For example, assume that $\pickopt \leq U_c^{-1} p(c)$ for all $k$ such that $\pickopt \geq p(c)$, so that $U_c \in [p(c), 1]$. In other words, we can choose $U_c = \min_{k;\, p(c) \leq \pickopt} \{ p(c)/\pickopt \}$. This implies that $\pckhat{0} / \pckopt = \frac{p(c)}{\pickopt} \geq U_c$.

The value $U_c$ is a uniformity measure, so that a larger $U_c$ denotes a more uniform allocation of clients in cluster $c$. For example, if $U_c = 1$, then, for all $k$ such that $\pickopt \geq p(c)$, we have $\pickopt = p(c)$. On the other hand, if $U_c = p(c)$, then it is possible for some clients $k$ to have $\pickopt = 1$, or in the worst case, $p(c) = p(k)$ when only one client is in cluster $c$ (remember that clients with $\pickopt < p(c)$ are ignored).
When cluster sizes are comparable, we have $p(c) \approx 1/C$, meaning that $U_c \geq 1/C$.

Thus, in general, with uniform router initialization and when $\abs{\pckopt - \pckhat{t}} \leq \abs{\pckopt - \pckhat{0}}$ (which is a mild restriction to ensure $\pckhat{0}/\pckopt$ is smaller than $\pckhat{0}/\pckopt$), we have
\begin{equation}
    U_c = \min_{k;\, p(c) \leq \pickopt} \{ p(c)/\pickopt \} \leq \min\{1, \min_{k \in [K]} \{ p(k) / \pckopt \}\} \leq \gamma_t \leq 1.
    \label{eq:gamma-range}
\end{equation}
Regarding the $\min$ operator in $U_c$,
it is only required because it is a uniform learning rate for all clients,
so it must converge for the worst client, which is the client with the least amount of data (i.e., lowest $p(k)$).
Thus, we believe that this can be removed when considering learning rates per client.
We leave this analysis for future work.

\subsection{Convergence rates}
In order to get convergence rates from descent lemmas, we make use of the following useful lemma, which is based on \citep[Lemma 3.3]{stich2018sparsified}.
\begin{lemma}
    \label{thm:technical-lemma-1}
    Let $\{a_t\}_{t \geq 0}$, $\{b_t\}_{t \geq 0}$,
    and $\{c_t\}_{t \geq 0}$,
    be arbitrary non-negative sequences such that
    \begin{equation*}
        a_{t+1} \leq (1 - \mu \eta_t) a_t - \eta_t b_t + \eta_t^2 c_t
        .
    \end{equation*}
    Let $\eta_t = \frac{\alpha}{t+s}$ for $t \geq 0$ and $s \geq 1$,
    and choose $\alpha = \frac{1}{\mu}$.
    Then, we have the following inequality
    \begin{equation*}
        \sum_{t=0}^{T-1} b_{t} \leq (s - 1)\mu a_0 + \sum_{t=0}^{T-1} \eta_{t} c_{t}
        .
    \end{equation*}
\end{lemma}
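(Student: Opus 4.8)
The plan is to divide the assumed one-step recursion by $\eta_t$, use the explicit schedule $\eta_t = \frac{\alpha}{t+s}$ with $\alpha = 1/\mu$ to turn the coefficients into a form that telescopes, sum over $t = 0,\dots,T-1$, and discard a non-negative boundary term at the end.

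First I would rearrange $a_{t+1} \leq (1-\mu\eta_t)a_t - \eta_t b_t + \eta_t^2 c_t$ into $\eta_t b_t \leq (1-\mu\eta_t)a_t - a_{t+1} + \eta_t^2 c_t$ and divide through by $\eta_t > 0$, giving $b_t \leq \frac{1-\mu\eta_t}{\eta_t}\,a_t - \frac{1}{\eta_t}\,a_{t+1} + \eta_t c_t$. With $\alpha = 1/\mu$ we have $\frac{1}{\eta_t} = \mu(t+s)$ and $\mu\eta_t = \frac{1}{t+s} \le 1$, so $\frac{1-\mu\eta_t}{\eta_t} = \frac{1}{\eta_t} - \mu = \mu(t+s-1)$. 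Hence $b_t \leq \mu\,[(t+s-1)a_t - (t+s)a_{t+1}] + \eta_t c_t$.

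Next I would sum this bound from $t=0$ to $T-1$. Writing $u_t := (t+s-1)a_t$, note that $(t+s)a_{t+1} = ((t+1)+s-1)a_{t+1} = u_{t+1}$, so the bracketed terms collapse: $\sum_{t=0}^{T-1}[(t+s-1)a_t - (t+s)a_{t+1}] = u_0 - u_T = (s-1)a_0 - (T+s-1)a_T$. Since $a_T \geq 0$ and $T+s-1 \geq 0$ (using $s \geq 1$), we drop the non-positive term $-(T+s-1)a_T$ and obtain $\sum_{t=0}^{T-1} b_t \leq \mu(s-1)a_0 + \sum_{t=0}^{T-1}\eta_t c_t$, which is exactly the claimed inequality.

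This is a routine weighted-telescoping argument, so I do not expect a genuine obstacle; the only points needing minor care are verifying $\mu\eta_t \le 1$ (so $1-\mu\eta_t \ge 0$, immediate from $t+s\ge 1$), checking the two coefficient identities, and confirming that the sign of the dropped endpoint term is correct, which is precisely where the hypothesis $s \ge 1$ enters.
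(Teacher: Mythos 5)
Your proof is correct and rests on the same key identity as the paper's, namely that with $\alpha = 1/\mu$ one has $\frac{1-\mu\eta_t}{\eta_t} = \frac{1}{\eta_{t-1}} = \mu(t+s-1)$, which makes the weighted terms telescope. The paper organizes this as an unrolled product $r_{t_1:t_2}$ divided through by $\eta_T$ rather than a direct per-step telescoping, but the two presentations are algebraically equivalent, and yours is if anything the cleaner route to the stated inequality.
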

\begin{proof}
Let $r_t := 1 - \mu \eta_t$. Then,
\begin{align*}
    a_{t+1}
    &\leq r_t a_t - \eta_t b_t + \eta_t^2 c_t
    \\ &\leq r_t r_{t-1} a_{t-1}
        - \eta_t b_t - r_t \eta_{t-1} b_{t-1}
        + \eta_t^2 c_t + r_t\eta_{t-1}^2 c_{t-1}
    \\ &\leq 
        - \eta_t b_t - r_t \eta_{t-1} b_{t-1} - r_t r_{t-1} \eta_{t-2} b_{t-2}
        \\ &\quad
        + \eta_t^2 c_t + r_t\eta_{t-1}^2 c_{t-1} + r_t r_{t-1} \eta_{t-2}^2 c_{t-2}
        + r_t r_{t-1}r_{t-2} a_{t-2}
    \\ &\leq \cdots
    \\ \implies a_{T+1} &\leq r_{0:T} a_0 + \sum_{t=0}^{T} r_{t+1:T} \eta_{t} (-b_{t} + \eta_{t} c_{t}
    )
    ,
\end{align*}
where we denote $r_{t_1:t_2} := \prod_{t = t_1}^{t_2} r_t$, which defaults to 1 when $t_1 > t_2$.

Observe that
\begin{equation*}
    \sum_{t = t_1}^{t_2} \eta_t
    = \sum_{t = t_1}^{t_2} \frac{\alpha}{t+s}
    \geq \alpha \int_{t = t_1}^{t_2} \frac{1}{t+s}
    = \alpha \log \frac{t_2+s}{t_1+s}.
\end{equation*}
Hence,
\begin{equation*}
    r_{t_1:t_2}
    = \prod_{t = t_1}^{t_2} (1 - \mu \eta_t)
    \leq \prod_{t = t_1}^{t_2} \exp(- \mu \eta_t)
    = \exp(- \mu \sum_{t = t_1}^{t_2} \eta_t)
    \leq \left( \frac{t_1+s}{t_2+s} \right)^{\mu \alpha}.
\end{equation*}
We can confirm that for $\alpha = 1/\mu$, we have
\begin{equation*}
    r_{t}^{-1}\eta_t
    = \left( \frac{t+s}{t+s - \mu\alpha} \right) \left( \frac{\alpha}{t+s} \right)
    = \frac{\alpha}{t+s - 1}
    = \eta_{t-1},
\end{equation*}
so that $r_t = \frac{\eta_{t}}{\eta_{t-1}}$.
This implies $r_{t_1:t_2} = \frac{\eta_{t_2}}{\eta_{t_1-1}} = \frac{t_1+s-1}{t_2+s} \leq \frac{t_1+s}{t_2+s}$,
so the inequality above is almost tight when $\alpha = 1/\mu$ (loose by a multiplicative factor of $\frac{t_1+s}{t_1+s-1}$).
This also implies that $\eta_{T} = \eta_{T-1} r_T = \cdots = \eta_{t} r_{t+1:T} = \cdots = \eta_{0} r_{1:T}$.
so we can factor these terms out and divide both sides by $\eta_T$.
Hence, we have $\frac{r_{0:T}}{\eta_T} = \frac{r_0}{\eta_0} = (s - 1)\mu$,
and by observing that $0 \leq \frac{a_{T+1}}{\eta_T}$, we can get the desired bound.
\end{proof}

Now we are ready to prove the main theoretical results of the paper.
\begin{theorem}[Convergence rate]
    \label{thm:convergence-rate}
    Consider the setup in \cref{sec:analysis}.
    Let $\tilde{\sigma}^2 = \sigma^2 \norm{\pcopt}^2$, $\kappa = \frac{L}{\mu}$, and $U_c = \min_{k;\, p(c) \leq \pickopt} \{ p(c)/\pickopt \}$.
    Initialize $\pickhat{0} = 1/C$ for all $k \in [K]$, and assume $\abs{\pckopt - \pckhat{t}} \leq \abs{\pckopt - \pckhat{0}}$ for all $t \geq 0$.
    Assume that $f_c(\wcopt) = 0$, without loss of generality.
    Let $\eta_t \leq \frac{\alpha}{t + s}$ with $\alpha = \frac{1}{\mu}$ and $s \geq \max\{3H, 4 \kappa/U_c, 16 G^2/\mu U_c \}$.
    Consider the weighted average after $T$ iterations $\wchat{T} := \frac{1}{\sum_{t=0}^{T-1} w_t} \sum_{t=0}^{T-1} w_t \wctilde{t}$ with $w_t = (t+s)^2$.
    Then, the following holds
    \begin{align*}
        \E f_c(\wchat{T}) - f_c(\wcopt)
        &\leq (8 \tilde{\sigma}^2 + 32 G^2 \log KC) (\frac{1}{\mu T} + \frac{2s-1}{\mu T^2})
        \\ &\quad
        + \frac{18 \kappa H^2 G^2}{\mu T^2}
        + \frac{24(s - 1)s^2 G^2}{\mu T^3}
        .
    \end{align*}
    If $L \geq 4G^2$, we have the following asymptotic bound
    \begin{equation*}
        \E f_c(\wchat{T}) - f_c(\wcopt)
        \leq \mathcal{O}\left(
        \frac{1}{\mu T} + \frac{\kappa/U_c + H}{\mu T^2}
        \right) \tilde{\sigma}^2
        +
        \mathcal{O}\left(
        \frac{\log KC}{\mu T}
        + \frac{\kappa H^2}{\mu T^2}
        + \frac{(\kappa/U_c)^3 + H^3}{\mu T^3}
        \right) G^2
        .
    \end{equation*}
\end{theorem}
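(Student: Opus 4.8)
The plan is to turn the one-step contraction of \cref{thm:descent-bound-3} into an averaged bound via a (weighted) application of \cref{thm:technical-lemma-1}, and then pass from iterates to function values by convexity. First I would check that the choice $\eta_t = \tfrac{1}{\mu(t+s)}$ with $s \ge \max\{3H,\,4\kappa/U_c,\,16G^2/(\mu U_c)\}$ satisfies every hypothesis used along the chain: monotonicity $\eta_{t+1}\le\eta_t$ is immediate; $\eta_{t_0}\le 2\eta_t$ for $t_0=t-(t\bmod H)$ follows from $\eta_{t_0}/\eta_t = 1+(t-t_0)/(t_0+s)\le 1+H/s\le 2$, which is why $s\ge 3H$ appears (as required by \cref{thm:weights-second-moment} inside \cref{thm:descent-bound-3}); and the key requirement $\eta_t \le \gamma_t/\max\{\tfrac52,16G^2,4L\}$ of \cref{thm:descent-bound-3} follows from $\eta_t\le 1/(\mu s)$ once we invoke $\gamma_t \ge U_c$, which is the content of \cref{eq:gamma-range} under uniform initialization $\pickhat{0}=1/C$ and the standing assumption $\abs{\pckopt-\pckhat{t}}\le\abs{\pckopt-\pckhat{0}}$.

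Taking total expectations in \cref{thm:descent-bound-3} and writing $a_t := \E\norm{\wctilde{t}-\wcopt}^2$ and $b_t := \E\,\Ehatkc[f_c(\wck{t})-f_c(\wcopt)]\ge 0$ (recall $\tilde\sigma^2=\sigma^2\norm{\pcopt}^2$) gives the recursion
\begin{equation*}
a_{t+1}\le (1-\mu\eta_t)a_t - \tfrac12\eta_t b_t + 4\eta_t^2\tilde\sigma^2 + 16\eta_t^2 G^2\log KC + 9\eta_t^3 LH^2G^2 .
\end{equation*}
This is exactly the template of \cref{thm:technical-lemma-1}; applying it in weighted form with $w_t=(t+s)^2$ — the standard choice that telescopes the $a_t$-terms under a $1/(t+s)$ step size and removes the logarithmic factor that unweighted averaging would leave on the noise term — yields $\sum_{t<T} w_t b_t \lesssim \mu s^3 a_0 + (\tilde\sigma^2+G^2\log KC)\sum_{t<T}w_t\eta_t + LH^2G^2\sum_{t<T}w_t\eta_t^2$. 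By convexity of $f_c$ and Jensen's inequality (using $\wctilde{t}=\Ehatkc[\wck{t}]$), $\E f_c(\wchat{T})-f_c(\wcopt) \le \big(\sum_{t<T}w_t\big)^{-1}\sum_{t<T}w_t b_t$, so it remains to evaluate $\sum_{t<T} w_t = \Theta(T^3)$, $\sum_{t<T} w_t\eta_t = \Theta(T^2/\mu)$, $\sum_{t<T} w_t\eta_t^2 = \Theta(T/\mu^2)$, and to bound the boundary term using $a_0 = \norm{\wctilde{0}-\wcopt}^2 \le G^2/\mu^2$ (from $\mu\norm{\wctilde{0}-\wcopt}\le\norm{\nabla f_c(\wctilde{0})}\le G$, combining strong convexity with \cref{ass:G-lipschitz} via Jensen). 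Reading off the three sums produces the $\tfrac{\tilde\sigma^2+G^2\log KC}{\mu T}$, $\tfrac{\kappa H^2 G^2}{\mu T^2}$ and $\tfrac{s^3 G^2}{\mu T^3}$ contributions; the asymptotic form then follows by using $L\ge 4G^2$ to collapse $\max\{\tfrac52,16G^2,4L\}$ to $4L$, so that $s\asymp\max\{H,\kappa/U_c\}$ suffices and $s^3\asymp H^3+(\kappa/U_c)^3$.

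The conceptual difficulty — decoupling the circular dependence between the aggregation mismatch $\norm{\pcdelta{t}}_1$ and the function gap — has already been absorbed into \cref{thm:descent-bound-3} via \cref{thm:tv-bound}, so what is left is essentially bookkeeping. I expect the main place to be careful is the first step: verifying that $\gamma_t\ge U_c$ holds for \emph{all} $t$ (not just $t=0$), which is exactly where the assumption $\abs{\pckopt-\pckhat{t}}\le\abs{\pckopt-\pckhat{0}}$ and the $U_c^{-1}$ dependence of $s$ enter, and then tracking constants through the weighted summation so that the leading term is genuinely $\mathcal{O}(1/(\mu T))$ with no stray $\log T$ factor.
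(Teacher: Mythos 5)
Your proposal follows essentially the same route as the paper: verify the step-size conditions via \cref{eq:gamma-range} so that \cref{thm:descent-bound-3} and \cref{thm:weights-second-moment} apply, feed the resulting recursion into \cref{thm:technical-lemma-1} with weights $w_t=(t+s)^2$, pass to function values by Jensen, and evaluate the three weighted sums. The only cosmetic difference is how you bound the initial distance (the paper invokes $\mu\,\E\norm{\wctilde{0}-\wcopt}\leq 2G$ from a cited lemma, while you derive $\mu\norm{\wctilde{0}-\wcopt}\leq G$ directly from strong convexity and \cref{ass:G-lipschitz}), which changes only constants.
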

\begin{proof}
Note that $\eta_t$ satisfies \cref{thm:descent-bound-3} by construction of $s$ and \cref{eq:gamma-range}.
It also satisfies \cref{thm:weights-second-moment} since for $t \in [t_0, t_0 + H)$, we have $\frac{\eta_t}{\eta_{t+H}} = \frac{t+s+H}{t+s} \leq 2$ because $s + t \geq s \geq H$.

Let $\{w_t\}_{t \geq 0}$ be a non-negative (averaging) sequence.
We use \cref{thm:technical-lemma-1} on \cref{thm:descent-bound-3} with
\begin{equation*}
    a_t = w_t \Eic\norm{\wctilde{t} - \wcopt}^2,
    \quad\quad
    b_t = \frac{w_t}{2}f_c(\wctilde{t}) - f_c(\wcopt),
    \quad\quad
    c_t = w_t (A + B \eta_t),
\end{equation*}
where $A = 4 \sigma^2 \norm{\pcopt}^2 + 16 G^2 \log KC$ and $B = 9 L H^2 G^2$.
Note $f_c(\wctilde{t}) - f_c(\wcopt) \leq \Ehatkc[f_c(\wck{t}) - f_c(\wcopt)]$ by Jensen's inequality.
Thus,
\begin{equation*}
    \sum_{t=0}^{T-1} w_t \Eic[f_c(\wcbar{t}) - f_c(\wcopt)]
    \leq 2(s - 1)\mu w_0 a_0 + 2 A \sum_{t=0}^{T-1} w_t \eta_t + 2 B \sum_{t=0}^{T-1} w_t \eta_t^2
    .
\end{equation*}
From the expression above, it makes sense to choose $w_t = (t+s)^2$.
Indeed,
\begin{equation*}
    \sum_{t=0}^{T-1} w_t \eta_t = \sum_{t=0}^{T-1} \alpha (t+s) = \frac{T(T-1)}{2\mu} + \frac{Ts}{\mu},
    \quad\text{and}\quad
    \sum_{t=0}^{T-1} w_t \eta_t^2 = \sum_{t=0}^{T-1} \alpha^2 = \frac{T}{\mu^2}
    .
\end{equation*}
Hence, using Jensen's inequality with $\wchat{T} := \frac{1}{\sum_{t=0}^{T-1} w_t} \sum_{t=0}^{T-1} w_t \wcbar{t}$ and letting $D = \norm{\wctilde{0} - \wcopt}$,
we have with the tower property of conditional expectations that
\begin{equation*}
    \E f_c(\wchat{T}) - f_c(\wcopt)
    \leq \frac{2(s - 1)s^2 \mu D^2}{\sum_{t=0}^{T-1} w_t}
    + 2 A (\frac{T(T-1) + 2Ts}{2\mu \sum_{t=0}^{T-1} w_t})
    + 2 B \frac{T}{\mu^2 \sum_{t=0}^{T-1} w_t}
    .
\end{equation*}
We bound $\frac{1}{\sum_{t=0}^{T-1} w_t}$ using the fact
$\sum_{t=0}^{T-1} w_t = \frac{1}{3}T^3 + (s - \frac{1}{2}) T^2 + (s^2 - s + \frac{1}{6})T \geq \frac{1}{3}T^3$.
Using this bound and plugging in $A$ and $B$, we get
\begin{align*}
    \E f_c(\wchat{T}) - f_c(\wcopt)
    &\leq \frac{6(s - 1)s^2 \mu D^2}{T^3}
    + (8 \sigma^2 \norm{\pcopt}^2 + 32 G^2 \log KC) (\frac{1}{\mu T} + \frac{2s-1}{\mu T^2})
    \\ &\quad
    + \frac{18 L H^2 G^2}{\mu^2 T^2}
    .
\end{align*}
We use $\mu \E \norm{\wctilde{0} - \wcopt} \leq 2G$
\cite[Lemma 2]{making2012rakhlin} and tower property of conditional expectation in terms of $\Eic$ to get the desired bound.
\end{proof}

\paragraph{Discussion}
Note that in \cref{thm:convergence-rate}, we have $\eta_t$ depending on $G^2$ and the bound has an extra $\mathcal{O}(\frac{G^2 \log KC}{\mu T})$ term in the asymptotic bound,
which comes from \cref{thm:tv-bound}, where we bounded $\norm{\pcdelta{t}}_1^2$ using \cref{eq:algorithm-router-update}.
Furthermore, the terms $U_c$ appear in our analysis, but we explain that they do not affect the recovery of local SGD rates. Indeed, in the \cref{eq:fl} case, $U_c \geq p(c) = 1$ since $C=1$.
Even if we have $C$ copies of \cref{eq:fl} with $p(c)=1/C$, since $p(k|c)=p(k)$, we would still have $U_c = 1$ (see the definition in \cref{eq:gamma-range}).
In the \cref{eq:cfl} case, if we have similar cluster sizes and client sizes, then $U_c = 1/C$, which is the (linear) price to pay for learning the clusters given the uniform router initialization. This dependence can be reduced further by taking into the decay of $\pckhat{t}/\pckopt$ instead of assuming uniform router initialization and non-increasing $\pckhat{t}/\pckopt$ in $t$, but we leave such an analysis for future work.

We now prove a stronger convergence rate given a stronger assumption on the decrease of $\norm{\pcdelta{t}}_1^2$.
Namely, we assume that $\norm{\pcdelta{t}}_1^2 \leq (t+s)^{-\beta} \norm{\pcdelta{0}}_1^2$ for $\beta \in (0,1)$.
This convergence rate does not require a dependence on $G^2$ in the learning rate,
and it weakens $\mathcal{O}(\frac{G^2 \log KC}{\mu T})$ proportionally to $\beta$.
This particular range of the exponent of $\beta$ maintains the extra term in the asymptotic rate with an explicit dependence on $\beta$. The exponent is bounded above by $1$ for technical convenience, and we believe this condition can be easily removed.
In any case, exponents of 1 or larger would make the extra terms incurred from $\norm{\pcdelta{t}}_1^2$ disappear asymptotically.
Indeed, the original rate of local SGD can be exactly recovered when $\norm{\pcdelta{t}}_1^2$ decays quickly (where $U_c=1$ as explained above). We now state the stronger convergence rate.

\begin{theorem}[Convergence Rate with decreasing $\norm{\pcdelta{t}}_1^2$]
    \label{thm:convergence-rate-tv-decay}
    Consider the setup in \cref{sec:analysis}.
    Let $\tilde{\sigma}^2 = \sigma^2 \norm{\pcopt}^2$, $\kappa = \frac{L}{\mu}$, and $U_c = \min_{k;\, p(c) \leq \pickopt} \{ p(c)/\pickopt \}$.
    Initialize $\pickhat{0} = 1/C$ for all $k \in [K]$, and assume $\abs{\pckopt - \pckhat{t}} \leq \abs{\pckopt - \pckhat{0}}$ for all $t \geq 0$.
    Assume that $f_c(\wcopt) = 0$ without loss of generality,
    and assume that $\norm{\pcdelta{t}}_1^2 \leq (t+s)^{-\beta} \norm{\pcdelta{0}}_1^2$ for $\beta \in (0,1)$.
    Let $\eta_t \leq \frac{\alpha}{t + s}$ with $\alpha = \frac{1}{\mu}$ and $s \geq \max\{3H, 4 \kappa/U_c \}$.
    Consider the weighted average after $T$ iterations $\wchat{T} := \frac{1}{\sum_{t=0}^{T-1} w_t} \sum_{t=0}^{T-1} w_t \wctilde{t}$ with $w_t = (t+s)^2$.
    Then,
    \begin{equation*}
    \begin{aligned}
        \E f_c(\wchat{T}) - f_c(\wcopt)
        &\leq
        12  \tilde{\sigma}^2 (\frac{1}{\mu T} + \frac{2s-1}{\mu T^2})
        + 48 G^2 \frac{\norm{\pcdelta{0}}_1^2}{\mu T^{1+\beta}}
        \\ &\quad
        + 24 G^2 \frac{(s - 1 + 2 \norm{\pcdelta{0}}_1^2 s^{-\beta})s^2}{\mu T^3}
        + 48 L H^2 G^2 \frac{1}{\mu^2 T^2}
        .
    \end{aligned}
    \end{equation*}
    Asymptotically,
    \begin{align*}
        \E f_c(\wchat{T}) - f_c(\wcopt)
        &\leq \mathcal{O}\left(
            \frac{1}{\mu T}
            + \frac{\kappa/U_c+H}{\mu T^2}
        \right) \tilde{\sigma}^2
        + \mathcal{O} \left(
            \frac{\kappa H^2}{\mu T^2}
            + \frac{(\kappa/U_c)^3 + H^3}{\mu T^3}
        \right) G^2
        \\&\quad
        + \mathcal{O} \left(
            \frac{1}{\mu T^{1+\beta}}
            + \frac{(\kappa/U_c)^{2-\beta} + H^{2-\beta}}{\mu T^3}
        \right) \norm{\pcdelta{0}}_1^2 G^2
        .
    \end{align*}
\end{theorem}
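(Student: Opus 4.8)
The plan is to follow the proof of Theorem~\ref{thm:convergence-rate} almost verbatim, with one structural simplification: since we now assume the decay $\norm{\pcdelta{t}}_1^2 \le (t+s)^{-\beta}\norm{\pcdelta{0}}_1^2$ directly, we never invoke the total-variation bound Lemma~\ref{thm:tv-bound} (which is exactly what produced the $G^2\log KC$ term and the $16G^2/(\mu U_c)$ requirement on $s$). Concretely, I would start from Descent bound 2 (Lemma~\ref{thm:descent-bound-2}) and, mirroring Lemma~\ref{thm:descent-bound-3} but \emph{without} its $16G^2$ branch, use $\eta_t\le\alpha/(t+s)$ with $\alpha=1/\mu$ and $s\ge 4\kappa/U_c$, together with $U_c\le\gamma_t=\min\{1,\min_k\pckhat{t}/\pckopt\}$ from \cref{eq:gamma-range}, to make the function-value coefficient satisfy $4L\eta_t\pckopt-\pckhat{t}\le-\tfrac12\pckhat{t}$ for every $k$. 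Since each $f_c(\wck{t})-f_c(\wcopt)\ge 0$ (the $\mu$-strongly convex $f_c$ has $\wcopt$ as its unique minimizer), the mismatch-weighted sum in Lemma~\ref{thm:descent-bound-2} is bounded by $-\tfrac12\Ehatkc[f_c(\wck{t})-f_c(\wcopt)]$, and after plugging the decay assumption into the $4\eta_t^2G^2\norm{\pcdelta{t}}_1^2$ term we obtain the intermediate recursion
\[ \Eic\norm{\wctilde{t+1}-\wcopt}^2 \le (1-\eta_t\mu)\norm{\wctilde{t}-\wcopt}^2 -\tfrac12\eta_t\Ehatkc[f_c(\wck{t})-f_c(\wcopt)] + 4\eta_t^2\tilde{\sigma}^2 + 4\eta_t^2 G^2(t+s)^{-\beta}\norm{\pcdelta{0}}_1^2 + 8L\eta_t^3H^2G^2. \]

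Next I would feed this into the weighted-averaging machinery of Lemma~\ref{thm:technical-lemma-1}, using the same weights $w_t=(t+s)^2$ and the same telescoping/averaging bookkeeping as in the proof of Theorem~\ref{thm:convergence-rate}: set $b_t\propto w_t\,\Ehatkc[f_c(\wck{t})-f_c(\wcopt)]$ (bounding $f_c(\wctilde{t})-f_c(\wcopt)\le\Ehatkc[\cdots]$ by Jensen), with $c_t$-contribution $w_t\bigl(4\tilde{\sigma}^2+4G^2(t+s)^{-\beta}\norm{\pcdelta{0}}_1^2+8L\eta_tH^2G^2\bigr)$; note the weights-second-moment lemma applies since $s\ge 3H$ gives $\eta_{t_0}\le 2\eta_t$, as in Theorem~\ref{thm:convergence-rate}. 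Using $w_t\eta_t=\alpha(t+s)$ and $w_t\eta_t^2=\alpha^2$, the three resulting sums are $\Theta(\tilde{\sigma}^2 T^2/\mu)$ (the standard variance term), $4\alpha G^2\norm{\pcdelta{0}}_1^2\sum_{t<T}(t+s)^{1-\beta}$ (the new mismatch sum), and $\Theta(LH^2G^2T/\mu^2)$ (the local-steps term). Estimating $\sum_{t<T}(t+s)^{1-\beta}\le\int_0^{T+s}u^{1-\beta}\,du=\tfrac{(T+s)^{2-\beta}}{2-\beta}\le\tfrac{2^{2-\beta}}{2-\beta}(T^{2-\beta}+s^{2-\beta})$, dividing by $\sum_{t<T}w_t\ge T^3/3$, using Jensen for $\wchat{T}$, and using $\mu\,\E\norm{\wctilde{0}-\wcopt}\le 2G$ from \cite[Lemma 2]{making2012rakhlin} for the initialization term, the new contribution emerges as $\Theta\bigl(G^2\norm{\pcdelta{0}}_1^2/(\mu T^{1+\beta})\bigr)$ plus an $s^{2-\beta}/T^3$ correction; substituting $s=\Theta(\max\{H,\kappa/U_c\})$ then gives exactly the displayed non-asymptotic and asymptotic bounds.

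The main obstacle — and essentially the only place that differs from Theorem~\ref{thm:convergence-rate} — is verifying that the function-value term stays strictly negative after merging $4L\eta_t^2\Ekc[f_c(\wck{t})-f_c(\wcopt)]-\eta_t\Ehatkc[f_c(\wck{t})-f_c(\wcopt)]$ into $-\tfrac12\eta_t\Ehatkc[\cdots]$ \emph{while keeping $G$ out of the step-size constraint}: this requires a careful chase through $\gamma_t\ge U_c$ and the precise constant in the lower bound on $s$, and it is precisely the dropping of the $16G^2$ branch of Lemma~\ref{thm:descent-bound-3} that removes both the $G^2\log KC/(\mu T)$ term and the $16G^2/(\mu U_c)$ part of the requirement on $s$. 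The rest is bookkeeping: matching $\sum_t w_t\eta_t(t+s)^{-\beta}=\alpha\sum_t(t+s)^{1-\beta}$ against $w_t=(t+s)^2$ so that the new term normalizes to order $T^{-(1+\beta)}$. Here the restriction $\beta\in(0,1)$ is what keeps this partial sum of order $T^{2-\beta}$ (strictly between $\log T$ and $T^2$), so the extra term retains an explicit, non-vanishing-in-form $\beta$-dependence; an exponent $\ge 1$ would make it asymptotically negligible and recover the local-SGD rate, consistent with the remark following Theorem~\ref{thm:convergence-rate-tv-decay-main}.
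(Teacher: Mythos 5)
Your proposal matches the paper's proof essentially step for step: the paper likewise starts from Lemma~\ref{thm:descent-bound-2}, reuses the reasoning of Lemma~\ref{thm:descent-bound-3} with the $16G^2$ branch dropped (so the step size only needs $\max\{5/2,4L\}$ in the denominator and $s\ge\max\{3H,4\kappa/U_c\}$ suffices), plugs the decay assumption into the $4\eta_t^2G^2\norm{\pcdelta{t}}_1^2$ term, and feeds the result into Lemma~\ref{thm:technical-lemma-1} with $w_t=(t+s)^2$, bounding $\sum_{t<T}(t+s)^{1-\beta}$ by the same integral comparison before normalizing by $\sum_t w_t\ge T^3/3$. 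The argument is correct and the approach is the same, up to immaterial constants in the integral estimate.
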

\begin{proof}
Recall \cref{thm:descent-bound-2}
\begin{align*}
    \norm{\wctilde{t+1} - \wcopt}^2
    &\leq (1-\eta_t \mu) \norm{\wctilde{t} - \wcopt}^2
    + 4\eta_t^2\sigma^2 \norm{\pcopt}^2 + 4\eta_t^2 G^2 \norm{\pcdelta{t}}_1^2
    + 8L\eta_t^3 H^2 G^2
    \\ &\quad
    + \eta_t \sum_{k=1}^K (
        4L\eta_t \pckopt -\pckhat{t}
    )[f_c(\wck{t}) - f_c(\wcopt)]
.
\end{align*}
We use the exact same reasoning in \cref{thm:descent-bound-3} to get that
$\eta_t \leq \frac{\min\{1, \min_{k \in [K]} \{ \pckhat{t}/\pckopt \}}{\max\{5/2, 4L\}}$.
Our choice of $\eta_t$ already satisfies this rate from \cref{eq:gamma-range}, and it clearly satisfies \cref{thm:weights-second-moment} by construction of $s$.
Thus, the overall bound becomes
\begin{align*}
    &\frac{1}{2} f_c(\wctilde{t}) - f_c(\wcopt)
    \leq \frac{1}{2} \eta_t \Ehatkc[f_c(\wck{t}) - f_c(\wcopt)]
    \\ &\quad
    \leq (1-\eta_t \mu) \norm{\wctilde{t} - \wcopt}^2
    + 4\eta_t^2\sigma^2 \norm{\pcopt}^2 + 4\eta_t^2 G^2 \norm{\pcdelta{t}}_1^2
    + 8L\eta_t^3 H^2 G^2
    .
\end{align*}

We can now invoke \cref{thm:technical-lemma-1} with
\begin{equation*}
    a_t = w_t \Eic\norm{\wctilde{t} - \wcopt}^2,
    \quad\quad
    b_t = \frac{w_t}{2}\Eic[f_c(\wcbar{t}) - f_c(\wcopt)],
    \quad\quad
    c_t = w_t (A_t + B \eta_t),
\end{equation*}
where $\{w_t\}_{t \geq 0}$ is an averaging sequence, $A_t= 4 \sigma^2 \norm{\pcopt}^2 + 4 G^2 \norm{\pcdelta{t}}_1^2$, and $B = 8L H^2 G^2$.
Thus,
\begin{equation*}
    \sum_{t=0}^{T-1} w_t \Eic[f_c(\wcbar{t}) - f_c(\wcopt)]
    \leq 2(s - 1)\mu w_0 a_0 + 2 \sum_{t=0}^{T-1} w_t \eta_t A_t + 2 \sum_{t=0}^{T-1} w_t \eta_t^2 B
    .
\end{equation*}

We choose $w_t = (t+s)^2$ as in \cref{thm:convergence-rate}
and use the assumption that $\norm{\pcdelta{t}}_1^2 \leq (t+s)^{-\beta} \norm{\pcdelta{0}}_1^2$ for $\beta \in (0,1)$ to get
\begin{align*}
    \sum_{t=0}^{T-1} w_t \eta_t A_t
    = \alpha \sum_{t=0}^{T-1} (t+s) A_t
    &= 4 \alpha \sigma^2 \norm{\pcopt}^2 \sum_{t=0}^{T-1} (t+s) + 4 \alpha G^2 \sum_{t=0}^{T-1} (t+s) \norm{\pcdelta{t}}_1^2
    \\ &= 4 \alpha \sigma^2 \norm{\pcopt}^2 (\frac{T(T-1)}{2} + Ts) + 4 \alpha G^2 \norm{\pcdelta{0}}_1^2 \sum_{t=0}^{T-1} (t+s)^{1-\beta}
    .
\end{align*}
Furthermore,
\begin{equation*}
    \sum_{t=0}^{T-1} (t+s)^{1-\beta}
    \leq \int_0^T (t+s)^{1-\beta} dt
    = \frac{1}{2-\beta} ((T+s)^{2-\beta} - s^{2-\beta})
    \leq (T+s)^{2-\beta}.
\end{equation*}
Hence,
\begin{equation*}
    \sum_{t=0}^{T-1} w_t \eta_t A_t
    \leq 2\alpha \sigma^2 \norm{\pcopt}^2 T (T+2s-1) + 8 \alpha G^2 \norm{\pcdelta{0}}_1^2 (T^{2-\beta}+s^{2-\beta})
    ,
\end{equation*}
where we have used $(T+s)^{2-\beta} \leq 2\max\{T,s\}^{2-\beta} \leq 2 (T^{2-\beta}+s^{2-\beta})$.

On the other hand, using $\sum_{t=0}^{T-1} \frac{1}{(t+s)^\beta} \leq T$, we get
\begin{equation*}
    \sum_{t=0}^{T-1} w_t \eta_t^2 B
    \leq 8 \alpha^2 L H^2 G^2 T.
\end{equation*}

Using the averaging $\wchat{T} := \frac{1}{\sum_{t=0}^{T-1} w_t} \sum_{t=0}^{T-1} w_t \wcbar{t}$, the fact that $\sum_{t=0}^{T-1} w_t \geq \frac{1}{3}T^3$, and $\mu \E \norm{\wctilde{0} - \wcopt} \leq 2G$, as in \cref{thm:convergence-rate},
we overall have
\begin{equation*}
\begin{aligned}
    \E f_c(\wchat{T}) - f_c(\wcopt)
    &\leq
    24 G^2 \frac{(s - 1)s^2}{\mu T^3}
    + 12 \sigma^2 \norm{\pcopt}^2 \frac{T+2s-1}{\mu T^2}
    \\
    &\quad
    + 48 G^2 \norm{\pcdelta{0}}_1^2 \frac{T^{2-\beta}+s^{2-\beta}}{\mu T^3}
    + 48 L H^2 G^2 \frac{1}{\mu^2 T^2}
    ,
\end{aligned}
\end{equation*}
which completes the proof after rearranging the terms.
\end{proof}

\begin{remark}
    Given uniform router initialization, we have $\norm{\pcdelta{0}}_1 = \sum_{k=1}^K \abs{p(k) - \pckopt} = \sum_{k=1}^K \pckopt (1 - p(k)/\pckopt) \leq (1 - U_c)$.
\end{remark}

\section{Extending the analysis to (FML) with Weight Sharing}\label{sec:benefits-of-weight-sharing}

In this section, we show the benefits of weight sharing in the \cref{eq:fml} case.
We now consider iterates that track the full expectation $\E_{k,c}$ instead of $\E_{k|c}$.
\begin{align}
    &\wtilde{t} := \Ehat_{k,c}[\wck{t}],
    \quad\quad
    \gtilde{t} := \Ehat_{k,c}[\nabla f^{\idx{t}}(\wck{t})],
    \label{eq:estimated-agg-full}
    \\
    &\wbar{t} := \E_{k,c}[\wck{t}],
    \quad\quad
    \gbar{t} := \E_{k,c}[\nabla f_c(\wck{t})],
    \label{eq:ideal-agg-full}
\end{align}
Note that we have assumed that $\E_{\idx{t}|c} \nabla f^{\idx{t}}(\wck{t}) = \nabla f_c(\wck{t})$.
However, we make an important distinction here.
In the previous analysis in \cref{sec:proofs}, we have written the expectation $\E_{\idx{t}|c}$,
but, in fact, this $c$ is not the same as the $c$ in $\E_{k,c}$.
The expectations $\E_{k,c}$ and $\Ehat_{k,c}$ track the aggregated iterates,
whereas $\E_{\idx{t},c}$ takes expectation with respect to client sampling, which is independent of the tracking variables.
Thus, in order to make the distinction clear, we write the sampled cluster variable as $z$ and write the expectation with respect to sampling as $\E_{\idx{t},z}$,
so that $p(\idx{t}, z=c) = \sum_{k \in \idx{t}} p(k)\pickopt$ (recall that $p(k,c) = p(k)p(c|k)=p(k)\pickopt$).

Now we introduce finer variance and heterogeneity assumptions that help us achieve even better variance reduction.
\begin{assumption}[Bounded variance of base model and adaptors]
    For any $c \in [C]$ and $k \in [K]$, and given weight sharing $\*w_c = (\*u, \*a_{c}) \in \RR^{d}$, we have
    \begin{align}
        &\E_{\idx{t}|z=c} \norm{\nabla_{\*a_c} f^{\idx{t}}(\*w_c) - \nabla_{\*a_c} f_c(\*w_c)}^2 \leq \sigma_c^2,
        \label{eq:sigma-bounded-variance-adaptor}
        \\
        &\E_{\idx{t},z} \norm{\nabla_{\*u} f^{\idx{t}}(\*w_c) - \E_{c'} \nabla_{\*u} f_{c'}(\*w_c)}^2 \leq \bar{\sigma}^2.
        \label{eq:sigma-bounded-variance-shared}
    \end{align}
    \label{ass:new-variance-bounds}
\end{assumption}
\begin{assumption}[Bounded heterogeneity of base model and adaptors]
    For $t \geq 0$, synchronization steps $t_0 \mod H = 0$, weight sharing $\wck{t} = (\*u^k_t, \*a^k_{c,t}) \in \RR^{d}$, there exist $\Delta,\zeta \geq 0$ such that
    \begin{align}
        &\E_{k,c} \norm{\nabla_{\*u} f_c(\wck{t}) - \E_{c'} \nabla_{\*u} f_{c'}(\wck{t})}^2 \leq \Delta^2,
        \label{eq:delta-bounded-heterogeneity}
        \\
        &\Ehat_{c} \norm{\*a_{c,t_0}^k - \Ehat_c[\*a_{c,t_0}^k]}^2 
        - 2\Ehat_{c} \left\langle
            \*a_{c,t_0}^k - \Ehat_c[\*a_{c,t_0}^k],\,
            \Ehat_{c'}\nabla_\*a f_{c'}(\wck{t_0:t})
        \right\rangle
        \leq
        \zeta \, \Ehat_{\idx{t},c'}\norm{\nabla_\*a f^{\idx{t}}(\wck{t_0:t})}^2
        .
        \label{eq:zeta-bounded-heterogeneity}
    \end{align}
    \label{ass:new-heterogeneity-bounds}
\end{assumption}
These assumptions do have practical relevance, especially when the fine variance quantities are smaller than the one used in \cref{ass:sigma-bounded-variance}.
Namely, \cref{eq:sigma-bounded-variance-adaptor}, which is a straightforward adaptation of \cref{ass:sigma-bounded-variance}, bounds the variance of the sampled \emph{adaptors}' gradients separately (per cluster). On the other hand, \cref{eq:sigma-bounded-variance-shared} bounds the variance of the \emph{base} model's gradient from the averaged objective across clusters. We expect both of these bounds to be tighter than the variance of the \emph{full} model's gradient per cluster separately.

As for \cref{ass:new-heterogeneity-bounds}, the weight sharing structure should be justified when the condition holds for small $\Delta$. The first assumption \cref{eq:delta-bounded-heterogeneity} bounds the (aggregated) variance of the \emph{base} gradient across clusters, which can be close to 0 with weight sharing and small adaptors.
The second assumption says that the correlation between the adaptor's signal and the gradient signal is strong enough. In particular, it should be greater than the variance of the adaptors minus some multiple of the gradient norm (of the sum of steps, which decays due to $\eta_t$) . This assumption is a technical convenience to avoid bounding the adaptors's variance directly by some fixed constant, which would introduce an undesirable fixed terms in the convergence rate and would require bounded adaptors.

These assumptions help us have a more principled approach towards the practice of weight sharing and the design of adaptors. For example, we discuss a LoRA-centering procedure in \cref{app:center-lora} partly inspired from \cref{eq:zeta-bounded-heterogeneity}.
Overall, the above assumptions decompose the variance and heterogeneity errors in a way that makes the benefits of weight sharing manifest, which is especially true using \cref{ass:new-heterogeneity-bounds}.

\subsection{Analysis}
We will now show that an extension of the previous analysis in \cref{sec:proofs} using the aforementioned quantities and assumptions is possible and can lead to better variance reduction.

In the following lemma, we will make use of the quantity $\wopt := \E_c[\wcopt] = \sum_{c=1}^C p(c) \wcopt$, where $p(c)$ is the overall probability of cluster $c$, e.g., see \cref{eq:probabilities}. This quantity is not a real optimum, but rather an analytical tool. Indeed, by Jensen's inequality, we can write $\norm{\wtilde{t} - \wopt}^2 \leq \norm{\*u^k_{t} - \*u^*_{t}}^2 + \E_c\norm{\*a^k_{c,t} - \*a^*_{c,t}}^2$ when $\wck{t} = (\*u^k_t, \*a^k_{c,t})$. Thus, obtaining a upper bound on the optimality gap using terms $\norm{\wtilde{t} - \wopt}^2$ suffices as it implies the upper bound of interest.

\begin{lemma}[Descent bound with weight sharing]
    \label{thm:descent-bound-weight-sharing}
    Define $\*p := (p(k))_{k=1}^K$ (indexed as $\*p^k$) and $\piopt_c = (\pickopt)_{k=1}^K$.
    Let $\pkdelta{t} = (\abs{\pickopt - \pickhat{t}})_{c=1}^C$ and $\wopt := \E_c[\wcopt]$.
    Consider the setting and assumptions in \cref{sec:analysis},
    and let \cref{ass:new-variance-bounds} and \cref{ass:new-heterogeneity-bounds} hold with $\zeta \geq 1$, without loss of generality.
    Then,
    \begin{align*}
        \norm{\wtilde{t+1} - \wopt}^2
        &\leq
        (1-\eta_t\mu)\norm{\wtilde{t} - \wopt}^2
        + \eta_t\sum_{k=1}^K\sum_{c=1}^C \*p^k (4L \eta_t \pickopt - \pickhat{t}) [f_c(\wck{t}) - f_c(\wopt{t})] 
        \\ &\quad
        + 4\eta_t^2(G^2 \E_k \norm{\pkdelta{t}}_1^2
            + \E_c [\norm{\pcopt}^2 \sigma_{c}^2]
            + 2\bar{\sigma}^2 \sum_{c=1}^C \norm{\*p \odot \piopt_c }^2
            + 2\Delta^2)
        \\ &\quad
        + 16 \zeta L \eta_t^3 H^2 G^2
        .
    \end{align*}
\end{lemma}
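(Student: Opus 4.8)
The plan is to follow the proof of \cref{thm:descent-bound-1} almost verbatim, with two structural changes: the per-cluster aggregation $\Ekc$ is replaced by the full aggregation $\E_{k,c}$, and the reference point $\wcopt$ is replaced by the analytical center $\wopt=\E_c[\wcopt]$. The only genuinely new work is re-bounding the gradient aggregation error $\Eic\norm{\gbar{t}-\gtilde{t}}^2$ and the weight drift $\Ehat_{k,c}\norm{\wtilde{t}-\wck{t}}^2$ using the finer \cref{ass:new-variance-bounds,ass:new-heterogeneity-bounds} in place of \cref{ass:sigma-bounded-variance,ass:G-lipschitz}.

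First I would start from $\wtilde{t+1}=\wtilde{t}-\eta_t\gtilde{t}$ and expand $\Eic\norm{\wtilde{t+1}-\wopt}^2$ exactly as in \cref{eq:bound-main-descent}, i.e.\ into the ideal aggregation descent $\Eic\norm{\wtilde{t}-\wopt-\eta_t\gbar{t}}^2$, the gradient aggregation error $\eta_t^2\Eic\norm{\gbar{t}-\gtilde{t}}^2$, and the correlation error. The ideal descent and correlation error are manipulated as in \cref{thm:descent-bound-1}: expand squares, bound $\norm{\nabla f_c(\wck{t})}^2$ and $\inner{\wtilde{t}-\wck{t}}{\nabla f_c(\wck{t})}$ via $L$-smoothness and Young's inequality, bound $\inner{\wck{t}-\wopt}{\nabla f_c(\wck{t})}$ via $\mu$-strong convexity, and finish with Jensen's inequality to replace $\Ehat_{k,c}\norm{\wck{t}-\wopt}^2$ by $\norm{\wtilde{t}-\wopt}^2$. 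Since $\wopt$ is not a minimiser of $f_c$, I use $f_c(\wcopt)\le f_c(\wopt)$ (equivalently the convention $f_c(\wcopt)=0$ of \cref{sec:analysis}) to rewrite the function-gap terms with $\wopt$ as the reference; this yields
\begin{equation*}
\Eic\norm{\wtilde{t+1}-\wopt}^2 \le (1-\eta_t\mu)\norm{\wtilde{t}-\wopt}^2 + 2\eta_t^2\Eic\norm{\gbar{t}-\gtilde{t}}^2 + 2L\eta_t\Ehat_{k,c}\norm{\wtilde{t}-\wck{t}}^2 + \eta_t\sum_{k=1}^K\sum_{c=1}^C\*p^k(4L\eta_t\pickopt-\pickhat{t})[f_c(\wck{t})-f_c(\wopt)].
\end{equation*}

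The crux is the gradient aggregation error. I would split $\gbar{t}-\gtilde{t}=\E_{k,c}[\nabla f_c(\wck{t})-\nabla f^{\idx{t}}(\wck{t})]+(\E_{k,c}-\Ehat_{k,c})[\nabla f^{\idx{t}}(\wck{t})]$ and apply $\norm{a+b}^2\le2\norm{a}^2+2\norm{b}^2$. The router-mismatch term is bounded by $G^2\E_k\norm{\pkdelta{t}}_1^2$ precisely as in \cref{eq:bound-delta} (weighted Jensen plus \cref{ass:G-lipschitz}, using that $\pickhat{t}$ is frozen between synchronisations). For the stochastic term I would split block-wise into the shared coordinates $\*u$ and the adaptor coordinates $\*a$. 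On the adaptor block, regrouping the $(k,c)$ sum by cluster via $\*p^k\pickopt=p(c)\pckopt$, Jensen over $c$ together with the client-independent variance decomposition give $\sum_c p(c)\sum_k(\pckopt)^2\sigma_c^2=\E_c[\norm{\pcopt}^2\sigma_c^2]$ from \cref{eq:sigma-bounded-variance-adaptor}. On the shared block I would center around the cluster-averaged gradient $\E_{c'}\nabla_\*u f_{c'}(\wck{t})$ and split once more: the deterministic part is $\le\Delta^2$ by \cref{eq:delta-bounded-heterogeneity} and Jensen, and the noise part is $\le\bar\sigma^2\sum_c\norm{\*p\odot\piopt_c}^2$ by \cref{eq:sigma-bounded-variance-shared} — and this is where one must keep the tracked cluster of $\E_{k,c}$ distinct from the sampled cluster $z$, so that \cref{eq:sigma-bounded-variance-shared}, an expectation over $\E_{\idx{t},z}$, is the correct bound. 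Accumulating the factors of $2$ yields the $4\eta_t^2\big(G^2\E_k\norm{\pkdelta{t}}_1^2+\E_c[\norm{\pcopt}^2\sigma_c^2]+2\bar\sigma^2\sum_c\norm{\*p\odot\piopt_c}^2+2\Delta^2\big)$ contribution.

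Finally I would re-prove the drift bound $\Ehat_{k,c}\norm{\wtilde{t}-\wck{t}}^2\le 8\zeta\eta_t^2H^2G^2$ along the lines of \cref{thm:weights-second-moment}. The shared coordinates and the within-cluster client dispersion of the adaptor are handled by the same telescoping $\norm{\sum_{\tau=t_0}^{t-1}\eta_\tau\nabla f^{\idx{\tau}}(\wck{\tau})}^2\le4\eta_t^2H\sum_\tau\norm{\cdot}^2\le4\eta_t^2H^2G^2$ as in \cref{eq:bound-sumgrads}. The new difficulty is the cross-cluster dispersion of the adaptor, $\Ehat_c\norm{\*a^k_{c,t_0}-\Ehat_c[\*a^k_{c,t_0}]}^2$, which does not vanish at synchronisation because adaptors are not averaged over clusters; this is exactly what \cref{eq:zeta-bounded-heterogeneity} is built to control, converting it into $\zeta\,\Ehat_{\idx{t},c'}\norm{\nabla_\*a f^{\idx{t}}(\wck{t_0:t})}^2\le\zeta G^2$ up to the $\eta_t^2H^2$ step-sum factor, whence $2L\eta_t\cdot8\zeta\eta_t^2H^2G^2=16\zeta L\eta_t^3H^2G^2$. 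Substituting the three bounds into the displayed inequality and collecting terms finishes the proof. I expect the gradient aggregation error step to be the main obstacle: it requires simultaneously separating base and adaptor blocks, regrouping the sum so the per-cluster $\sigma_c^2$ appears with weight $\norm{\pcopt}^2$, centering the base-block noise so \cref{eq:sigma-bounded-variance-shared} applies, and correctly tracking the three distinct ``cluster indices'' (the tracked $c$ in $\E_{k,c}$, the sampled $z$, and the dummy $c'$), which the paper explicitly flags as a subtlety; the $\zeta$-drift bound is a secondary difficulty.
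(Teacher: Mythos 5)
Your proposal follows the paper's proof essentially step for step: the same perturbed-iterate expansion into ideal descent, gradient aggregation error, and correlation error; the same block-wise split of the stochastic gradient error into adaptor and shared coordinates with \cref{eq:sigma-bounded-variance-adaptor}, \cref{eq:sigma-bounded-variance-shared}, and \cref{eq:delta-bounded-heterogeneity}; and the same use of \cref{eq:zeta-bounded-heterogeneity} to absorb the cross-cluster adaptor dispersion at synchronization into a $(1+\zeta)\le 2\zeta$ multiple of the step-sum bound. The constants and the identification of the tracked-versus-sampled cluster subtlety also match the paper's argument, so I have nothing to add.
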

\begin{proof}
As in \cref{eq:bound-main-descent}, the descent can be bounded as
\begin{align*}
    \E_{\idx{t},z} \norm{\wtilde{t+1} - \wopt}^2
    &= \E_{\idx{t},z}\norm{\wtilde{t} - \wopt - \eta_t \gbar{t}}^2 + \eta_t^2 \E_{\idx{t},z} \norm{\gcbar{t} - \gtilde{t}}^2
    \\&\quad + 2\eta_t \E_{\idx{t},z}\inner{\wtilde{t} - \wopt - \eta_t \gbar{t}}{\gbar{t} - \gtilde{t}}
    .
\end{align*}
From the ideal aggregation descent, we have
\begin{align*}
    \norm{\wtilde{t} - \wopt - \eta_t \gbar{t}}^2
    &= \norm{\wtilde{t} - \wopt}^2
        + \eta_t^2 \norm{\gbar{t}}^2
        - 2 \eta_t \inner{\wtilde{t} - \wopt}{\gbar{t}}
    \\ &\leq \norm{\wtilde{t} - \wopt}^2
        + \eta_t^2 \E_{k,c} \norm{\nabla f_c(\wck{t})}^2
        - 2 \eta_t \inner{\wtilde{t} - \wopt}{\gbar{t}}
    .
\end{align*}
As for the correlation error, we use Young's inequality and Jensen's inequality as before
\begin{align*}
    &2\eta_t\inner{\wtilde{t} - \wopt - \eta_t \gbar{t}}{\gbar{t} - \gtilde{t}}
    =
    2\eta_t \inner{\wtilde{t} - \wopt}{\gbar{t} - \gtilde{t}} - 2\eta_t^2 \inner{\gbar{t}}{\gbar{t} - \gtilde{t}}
    \\
    &\quad
    \leq 2\eta_t \inner{\wtilde{t} - \wopt}{\gbar{t} - \gtilde{t}}
    + \eta_t^2\E_{k,c} \norm{\nabla f_c(\wck{t})}^2 + \eta_t^2\norm{\gbar{t} - \gtilde{t}}^2
    .
\end{align*}

Adding everything together (and skipping $\E_{\idx{t},z}$ for clarity), we get
\begin{align*}
    \norm{\wtilde{t+1} - \wopt}^2
    &\leq
    \norm{\wtilde{t} - \wopt}^2
        + 2\eta_t^2 \E_{k,c} \norm{\nabla f_c(\wck{t})}^2
        + 2\eta_t^2\norm{\gtilde{t} - \gbar{t}}^2
    \\ &\quad
        - 2 \eta_t \Ehat_{k,c} \inner{\wtilde{t} - \wck{t}}{\nabla f_c(\wck{t})}
        - 2 \eta_t \Ehat_{k,c} \inner{\wck{t} - \wopt}{\nabla f_c(\wck{t})}
    \\
    &\overset{\cref{eq:L-smooth-lemma} + \cref{eq:mu-sc-lemma}}{\leq}
    \norm{\wtilde{t} - \wopt}^2
        + 2\eta_t^2\norm{\gtilde{t} - \gbar{t}}^2
        - 2 \eta_t \Ehat_{k,c} \inner{\wtilde{t} - \wck{t}}{\nabla f_c(\wck{t})}
        \\ &\quad
        + \eta_t\sum_{k=1}^K\sum_{c=1}^C \*p^k (4L \eta_t \pickopt - 2 \pickhat{t}) [f_c(\wck{t}) - f_c(\wopt)]
        - \eta_t \mu \Ehat_{k,c} \norm{\wck{t} - \wopt}^2
    \\
    &\leq
    (1-\eta_t\mu)\norm{\wtilde{t} - \wopt}^2
        + 2\eta_t^2\norm{\gtilde{t} - \gbar{t}}^2
        + 2L \eta_t \Ehat_{k,c} \norm{\wtilde{t} - \wck{t}}^2
        \\ &\quad
        + \eta_t\sum_{k=1}^K\sum_{c=1}^C \*p^k (4L \eta_t \pickopt - \pickhat{t}) [f_c(\wck{t}) - f_c(\wopt)]
    ,
\end{align*}
where the last inequality uses Jensen's inequality and Young's inequality.

The optimality gap can be bounded by $-\frac{1}{2}\*p^k \pickhat{t}$ as in \cref{thm:descent-bound-3} given a learning rate with a numerator $\min\{1, \min_{k \in [K]} \{ \*p^k\pickhat{t} / \*p^k \pickopt \} \} = \min_{k;\, \pickopt > \pickhat{t}} \{ \pickhat{t} / \pickopt \}$ this time, which allows us to obtain a bound in terms of $\Ehat_c[f_c(\wctilde{t}) - f_c(\wcopt)]$.

The term $\Ehat_{k,c} \norm{\wtilde{t} - \wck{t}}^2$ can be bounded with \cref{thm:weights-second-moment} by adding and subtracting $\Ehat_{k,c} [\wck{t_0}] = \wtilde{t_0}$ (recall $\Ehat_{k,c} [\*u_{c,t_0}^k]=\*u_{c,t_0}^k$ and $\Ehat_{k|c} [\*a_{c,t_0}^k]=\*a_{c,t_0}^k$), applying $\Var(X) \leq \E[X^2]$, using \cref{ass:new-heterogeneity-bounds}, and then following the proof as before
\begin{align*}
    \E_{\idx{t},z}\Ehat_{k,c} \norm{\wck{t} - \wtilde{t}}^2
    &= \E_{\idx{t},z}\Ehat_{k,c} \norm{\wck{t} - \wtilde{t_0} - (\wtilde{t} - \wtilde{t_0})}^2
    \\
    &\leq \E_{\idx{t},z}\Ehat_{k,c} \norm{\wck{t} - \wtilde{t_0}}^2
    \\
    &= \E_{\idx{t},z}\Ehat_{k,c} \norm{
    \wck{t_0} - \wtilde{t_0}
    - \sum_{\tau=t_0}^{t-1} \eta_\tau \nabla f^{\idx{\tau}}(\wck{\tau})}^2
    \\
    &= \Ehat_{c} \norm{\*a_{c,t_0}^k - \Ehat_c[\*a_{c,t_0}^k]}^2
    - 2\sum_{\tau=t_0}^{t-1} \eta_\tau \Ehat_{c}\inner{\*a_{c,t_0}^k - \Ehat_c[\*a_{c,t_0}^k]}{\Ehat_{z}[ \nabla_\*a f_{z}(\wck{\tau})]}
    \\ &\quad
    + \Ehat_{k,c}\E_{\idx{t},z}\norm{\sum_{\tau=t_0}^{t-1} \eta_\tau \nabla f^{\idx{\tau}}(\wck{\tau})}^2
    \\
    &\overset{\cref{eq:zeta-bounded-heterogeneity}}{\leq}
    (1+\zeta) \Ehat_{k,c}\E_{\idx{t},z}\norm{\sum_{\tau=t_0}^{t-1} \eta_\tau \nabla f^{\idx{\tau}}(\wck{\tau})}^2
    \overset{\cref{eq:bound-sumgrads}}{\leq}
        4(1+\zeta) \eta_t^2 H^2 G^2
\end{align*}

It remains to bound $\norm{\gtilde{t} - \gbar{t}}^2$.
This is where the benefits of weight sharing will mainly manifest.
We start bounding $\norm{\gtilde{t} - \gbar{t}}^2$ as in \cref{thm:grads-aggregation-error}
\begin{align*}
    \E_{\idx{t},z} \norm{\gbar{t} - \gtilde{t}}^2
    &\leq
        2 \E_{\idx{t},z} \norm{\E_{k,c} [\nabla f_c(\wck{t}) - \nabla f^{\idx{t}}(\wck{t})] }^2
    \\&\quad
        + 2 \E_{\idx{t},z} \norm{\sum_{k=1}^K\sum_{c=1}^C p(k)(\pickopt - \pickhat{t}) [\nabla f_c(\wck{t}) - \nabla f^{\idx{t}}(\wck{t})] }^2
    \\
    &\leq
        2 \E_{\idx{t},z} \norm{\E_{k,c} [\nabla f_c(\wck{t}) - \nabla f^{\idx{t}}(\wck{t})] }^2
    \\&\quad
        + 2 \sum_{k=1}^K p(k) \E_{\idx{t},z} \norm{\sum_{c=1}^C(\pickopt - \pickhat{t}) [\nabla f_c(\wck{t}) - \nabla f^{\idx{t}}(\wck{t})] }^2
    \\
    &\overset{\ref{eq:bound-delta}}{\leq}
        2 \E_{\idx{t},z} \norm{\E_{k,c} [\nabla f_c(\wck{t}) - \nabla f^{\idx{t}}(\wck{t})] }^2
        + 2 G^2 \E_k \norm{\pkdelta{t}}_1^2
    .
\end{align*}
By noting that $\norm{(\*u, \*a_c)}^2 = \norm{\*u}^2 + \norm{\*a_c}^2$, the first term can be decomposed further
\begin{align*}
    2 \E_{\idx{t},z} \norm{\E_{k,c} [\nabla f_c(\wck{t}) - \nabla f^{\idx{t}}(\wck{t})] }^2
    &=
    2 \E_{\idx{t},z} \norm{\E_{k,c} [\nabla_{{\*u}} f_c(\wck{t}) - \nabla_{{\*u}} f^{\idx{t}}(\wck{t})] }^2
    \\ &\quad
    + 2 \E_{\idx{t},z} \norm{\E_{k,c} [\nabla_{{\*a_c}} f_c(\wck{t}) - \nabla_{{\*a_c}} f^{\idx{t}}(\wck{t})] }^2
    .
\end{align*}
The adaptor's term can be bounded as follows
\begin{align*}
    &
    2 \E_{\idx{t},z} \norm{ \E_{k,c}[\nabla_{\*a_c} f_c(\wck{t}) -  \nabla_{\*a_c} f^{\idx{t}}(\wck{t})]}^2
    \\&\quad
    \overset{\text{(Jensen)}}{\leq} 2 \sum_{c=1}^C p(c) \E_{z} \E_{\idx{t}|z=c} \norm{\sum_{k=1}^K \pckopt \left( \nabla_{\*a_c} f_c(\wck{t}) - \nabla_{\*a_c} f^{\idx{t}}(\wck{t})\right)}^2
    \\&\quad
    \overset{\cref{eq:sigma-bounded-variance-adaptor}}{\leq}
        2 \sum_{c=1}^C p(c) \norm{\pcopt}^2 \sigma_{c}^2
    ,
\end{align*}
For the base model's term, we decompose it further
\begin{align}
    &
    2 \E_{\idx{t},z} \norm{\E_{k,c} [\left(\nabla_{\*u} f_c(\wck{t}) - \nabla_{\*u} f^{\idx{t}}(\wck{t})]\right)}^2
    \nonumber
    \\&\quad
    \leq
    4\E_{\idx{t},z} \norm{\E_{k,c} [\nabla_{\*u} f_c(\wck{t}) - \E_{c'} \nabla_{\*u} f_{c'}(\wck{t})]}^2
    \tag{*}
    \\&\quad\quad
    + 4\E_{\idx{t},z} \norm{\E_{k,c} [\E_{c'} \nabla_{\*u} f_{c'}(\wck{t}) - \nabla_{\*u} f^{\idx{t}}(\wck{t})]}^2
    \tag{**}
    .
\end{align}
Observe that we can write $\E_{c'}[\nabla_{\*u} f_{c'}(\wck{t})] = \E_{\idx{t}',z'} [\nabla_{\*u} f^{\idx{t}'}(\wck{t})]$, so that
\begin{equation*}
    \text{($**$)}
    =
    4\Var_{\idx{t},z}\left(\sum_{k=1}^K\sum_{c=1}^C \*p^k \pickopt \nabla_{\*u} f^{\idx{t}}(\wck{t})\right)
    \overset{\cref{eq:sigma-bounded-variance-shared}}{\leq}
    4\bar{\sigma}^2 \sum_{c=1}^C \norm{\*p \odot \piopt_c }^2
    .
\end{equation*}
As for ($*$),
\begin{equation*}
    \text{($*$)}
    \overset{\text{(Jensen)}}{\leq}
    4\E_{k,c} \norm{\nabla_{\*u} f_c(\wck{t}) - \E_{c'} \nabla_{\*u} f_{c'}(\wck{t})]}^2
    \overset{\cref{eq:delta-bounded-heterogeneity}}{\leq}
    4\Delta^2
    ,
\end{equation*}
Adding the bounds for gradient error, we get
\begin{equation}
    \E_{\idx{t},z} \norm{\gbar{t} - \gtilde{t}}^2
    \leq 
    2 G^2 \E_k \norm{\pkdelta{t}}_1^2
    + 2 \E_c [\norm{\pcopt}^2 \sigma_{c}^2]
    + 4\bar{\sigma}^2 \sum_{c=1}^C \norm{\*p \odot \piopt_c }^2
    + 4\Delta^2
    \label{eq:generalized-grad-error-bound}
    .
\end{equation}
We complete the proof by adding everything together and setting $\zeta \geq 1$.
\end{proof}

\paragraph{Convergence rate}
The convergence rate will be almost identical to the main one except for some additional terms from our new assumptions and a finer, more precise total variation distance term, which would introduce a $\log C$ term instead of a $\log KC$ using the same steps as in \cref{thm:tv-bound}.
Note that $\zeta$ could be chosen proportionally to $\eta_t^{-1}$ and still maintain convergence.
As for the optimality gap, we would get it in terms of $\Ehat_c[f_c(\wctilde{t}) - f_c(\wcopt)]$, as it is not possible to move the $\Ehat_c$ inside with Jensen's inequality since it is an average of different functions and not one function.
We believe this can be remedied by a careful use of perturbed iterates $\E_{k,c} [\Ehat_{c'} f_{c'}(\wck{t})]$, but we make no claims.
Finally, recall that $\norm{\wtilde{t} - \wopt}^2 \leq \norm{\*u^k_{t} - \*u^*_{t}}^2 + \E_c\norm{\*a^k_{c,t} - \*a^*_{c,t}}^2$ when $\wck{t} = (\*u^k_t, \*a^k_{c,t})$, so that a bound on the perturbed iterates suffices.

Overall, we believe that obtaining a convergence rate from \cref{thm:descent-bound-weight-sharing} is straightforward given the main results in \cref{thm:convergence-rate} and \cref{thm:convergence-rate-tv-decay} and is not interesting in itself, so we shall omit it.

\subsection{Benefits of Weight Sharing}

Using the above lemma, we will show the benefits of weight-sharing on some idealized examples with well-balanced client datasets and cluster sizes.
First, recall the gradient aggregation error in \cref{eq:generalized-grad-error-bound}
\begin{align*}
    \E_{\idx{t},z} \norm{\gbar{t} - \gtilde{t}}^2
    \leq 
    2 G^2 \E_k \norm{\pkdelta{t}}_1^2
    + 2 \E_c [\norm{\pcopt}^2 \sigma_{c}^2]
    + 4\bar{\sigma}^2 \sum_{c=1}^C \norm{\*p \odot \piopt_c }^2
    + 4\Delta^2
    .
\end{align*}
This descent bound follows from using the perturbed iterates in \cref{eq:ideal-agg-full,eq:estimated-agg-full} and using \cref{ass:new-variance-bounds} and \cref{ass:new-heterogeneity-bounds}. We now present examples based on \cref{eq:fl} and \cref{eq:cfl}.

\begin{remark}
    Consider a balanced FL problem with $C=1$ and $N^k=N/K$.
    Clearly, $\pickopt=1$ for all $k \in [K]$, so we trivially have $\pkdelta{t} = \bm{0}$. Furthermore, $\pcopt = 1/K$, so $\norm{\pcopt}^2 = \frac{1}{K}$, and $\sum_{c=1}^C \norm{\*p \odot \piopt_c }^2 = 1/K$. Finally, $\Delta^2 = 0$.
    Thus,
    \begin{equation*}
    \E_{\idx{t},z} \norm{\gbar{t} - \gtilde{t}}^2
    \leq 
    \frac{4\bar{\sigma}^2 + 2\sigma_1^2}{K}
    ,
    \end{equation*}
    which is the original variance reduction.
    Considering $C$ (independent) copies with $\pickopt=1/C$ and a uniform router initialization $\pickhat{0} = 1/C$, and assuming that the variances of the adaptors are similar, i.e., $\sigma_1^2 = \cdots = \sigma_C^2$, we get
    \begin{equation}
    \E_{\idx{t},z} \norm{\gbar{t} - \gtilde{t}}^2
    \leq 
    \frac{4\bar{\sigma}^2}{KC} + \frac{2\sigma_1^2}{K}
    \label{eq:c-copies-fl-better-vr}
    ,
    \end{equation}
    where we can see the benefits of reducing the base model's variance by averaging further across $C$ copies of \cref{eq:fl} problems with independent sampling.
    Indeed, if $\sigma_c^2 = \sigma_1^2$ for all $c\in [C]$, then we would have $\sigma_1^2/K$, but the full $1/KC$ factor remains for the base model.
\end{remark}

\begin{remark}
    Consider a balanced clustered problem with $N^k=N/K$ and $\pickopt = \bm{1}\{k \in c\}$, so that $p(k)=\frac{1}{K}$ and $p(c)=\frac{K_c}{K}$, where $K_c = \sum_{k=1}^K \bm{1}\{k \in c\}$.
    Then, we have $\pckopt = p(c|k)p(k)/p(c) = \frac{\bm{1}\{k \in c\}}{K_c}$, so $\norm{\pcopt}^2 = \frac{1}{K_c}$.
    Similarly, $\E_c[\norm{\pcopt}^2\sigma_c^2] = \frac{\sum_{c=1}^C \sigma_c^2}{K}$.
    Furthermore, we have $\sum_{c=1}^C \norm{\*p \odot \pcopt}^2 =  \sum_{c=1}^C\sum_{k=1}^K \frac{\bm{1}\{k \in c\}}{K^2} = \sum_{c=1}^C \frac{K_c}{K^2} = \frac{1}{K}$.
    Thus, when $\sigma_1^2 = \cdots = \sigma_C^2$, we have
    \begin{equation}
        \E_{\idx{t},z} \norm{\gbar{t} - \gtilde{t}}^2
        \leq 
        \frac{4\bar{\sigma}^2}{K}
        + \frac{2\sigma_1^2}{K/C}
        + 2 G^2 \E_k \norm{\pkdelta{t}}_1^2
        + 4 \Delta^2
        .
        \label{eq:cfl-better-vr}
    \end{equation}
    Now consider a uniform router initialization $\pickopt = 1/C$.
    Note $\pkdelta{0} = (\abs{\bm{1}\{k \in c\} - 1/C})_{c=1}^C$, so $\norm{\pkdelta{0}}_1^2 = (\frac{C-1}{C} + (C-1)\frac{1}{C})^2 = 4\frac{(C-1)^2}{C^2}$.
    As for $\Delta^2$, we can only assume that it is close to 0. Otherwise, the use of weight sharing will not be motivated.
\end{remark}

We can see from the clustered example that understanding the trade-off between the reduction in variances (via weight sharing) and the increase of $\Delta$ heterogeneity is important and allows for more principled mechanisms of weight sharing, which would be an interesting direction to explore.

\section{Router Update}\label{app:router-update}
\subsection{Derivation of router update for (MFL)}
The update in \cref{eq:algorithm-router-update} looks different from the one we use in practice. Indeed, consider the $\piopt^k$ that minimizes \eqref{eq:mfl} for each $k \in [K]$, i.e., $\piopt^k = \argmin_{\piopt} \sum_{c=1}^C \bm{\pi}_c f^k(\wck{t})$.
This is trivially $\piopt^k = \bm{e}_{\bar{c}}$ where $\bm{e}_i$ is basis vector of the $i$-th coordinate and $\bar{c} = \argmin_{k \in [K]} f^k(\wck{t})$, i.e., $\piopt^k$ is one-hot at the lowest loss. Thus, $\piopt^k$ will always lie at one of the vertices of $\Delta^{C-1}$.

However, consider now \eqref{eq:mfl} with negative entropy regularization for the routers $\Gamma(\piopt^k) = \sum_{c=1}^C \piopt_c^k \log \piopt_c^k$.
We have
\begin{align*}
    \piopt^k_t
    &= \argmin_{\piopt \in \Delta^{C-1}} \
        \sum_{c=1}^C \bm{\pi}_c f^k(\wck{t})
        + \lambda_\text{ent} \sum_{c=1}^C \piopt_c \log \piopt_c
    \\
    &= \argmin_{\piopt_c \geq 0}
        \quad
        \sum_{c=1}^C \bm{\pi}_c f^k(\wck{t})
        + \lambda_\text{ent} \sum_{c=1}^C \piopt_c \log \piopt_c
        + \lambda_\text{sim} (\sum_{c=1}^C \piopt_c - 1)
    \\
    &\implies
    \lambda_\text{ent} \log \piopt^k_{c,t}
        = -f^k(\wck{t}) - \lambda_\text{ent}C - \lambda_\text{sim}C.
\end{align*}
Let $\lambda = \frac{\lambda_\text{sim}}{\lambda_\text{ent}}$.
We either have $\lambda_\text{sim}=0$ or $\sum_{c=1}^C \piopt^k_{c,t}=1$, so
\begin{align*}
    1 = \sum_{c=1}^C \piopt^k_{c,t} 
    &= \sum_{c=1}^C \exp(-\lambda_\text{ent}^{-1} f^k(\wck{t}) - C - \lambda C)
    \\
    \exp(\lambda C)
    &= \sum_{c=1}^C \exp(-\lambda_\text{ent}^{-1}f^k(\wck{t}) - C)
    \\
    \lambda
    &= \frac{1}{C} \log \sum_{c=1}^C \exp(-\lambda_\text{ent}^{-1} f^k(\wck{t}) - C)
    \implies
    \piopt^k_{c,t}
    = \frac{\exp(-\lambda_\text{ent}^{-1} f^k(\wck{t}))}{\sum_{c=1}^C \exp(- \lambda_\text{ent}^{-1} f^k(\wck{t}))}.
\end{align*}
The above implies that the update \cref{eq:algorithm-router-update} is, indeed, solving the following subproblem
\begin{equation}
    \argmin_{\piopt \in \Delta^{C-1}} \quad
        \sum_{c=1}^C \bm{\pi}_c f^k(\wck{t})
        + \frac{1}{\eta_t} \sum_{c=1}^C \piopt_c \log \piopt_c
    .
\end{equation}

\subsection{Connection to gradient descent on a Softmax-parameterized router}
Here we show that using the router parameterization $\pihat_c \propto \exp \theta_c$ and the update in \cref{alg:floralavg} produces similar updates to \cref{eq:algorithm-router-update} up to second-order terms in the exponent given a uniform router. We first note that $\pihat_c$ is invariant to constant shifts in $\theta_c$ under the parameterization given above. This equivalently means that $\pihat_c$ is invariant to constant multiplications (as it is always normalized). Note that we do not make use of the time index $t$ as we will be concerned with a single update across cluster indices $c$, and since $k$ is arbitrary, we drop it for clarity.

First, we rederive the Jacobian of Softmax, i.e., $\frac{\partial \pihat_{c'}}{\partial \theta_c}$ where $\pihat_c = \frac{\exp \theta_c}{\sum_{c=1}^C \exp \theta_c}$. Using the fact that the gradient of LogSumExp is Softmax, i.e., $\frac{\partial}{\partial \theta_c} \log \sum_{c=1}^C \exp \theta_c = \pihat_c$, we get
\begin{equation*}
    \frac{\partial \pihat_{c'}}{\partial \theta_c}
     = \frac{\partial \log \pihat_{c'}}{\partial \theta_c} \pihat_{c'}
     = (\delta_{cc'} - \pihat_{c}) \pihat_{c'},
\end{equation*}
where $\delta_{cc'}$ equals 1 if $c=c'$, 0 otherwise.

Let $\hat{\*w} := \sum_{c=1}^C \pihat_c \*w_c$. The gradient of \cref{eq:fml} with respect to $\theta_c$ is
\begin{align*}
    \frac{\partial}{\partial \theta_c} f(\hat{\*w})
    &= \sum_{c'=1}^C \inner{\nabla f (\hat{\*w})}{\*w_{c'}} \frac{\partial \pihat_{c'}}{\partial \theta_c}
    \\ &= \sum_{c'=1}^C \inner{\nabla f (\hat{\*w})}{\*w_{c'}} (\delta_{cc'} - \pihat_{c}) \pihat_{c'}
    \\ &= \inner{\nabla f (\hat{\*w})}{ \sum_{c'=1}^C \delta_{cc'} \pihat_{c'} \*w_{c'}} - \inner{\nabla f (\hat{\*w})}{ \sum_{c'=1}^C \pihat_{c} \pihat_{c'} \*w_{c'}}
    \\ &= \pihat_{c} \inner{\nabla f (\hat{\*w})}{\*w_{c} - \hat{\*w}}
    .
\end{align*}
Using Taylor series expansion, we get
\begin{equation*}
    \frac{\partial}{\partial \theta_c} f(\hat{\*w})
    = \pihat_{c} \inner{\nabla f (\hat{\*w})}{\*w_{c} - \hat{\*w}}
    = \pihat_{c} (f (\*w_c) - f (\hat{\*w})) - \pihat_{c} \Omega(\norm{\*w_{c} - \hat{\*w}}^2)
    .
\end{equation*}
If we assume low curvature and $\pihat_{c}^{-1} = \Omega(\norm{\*w_{c} - \hat{\*w}}^{2})$ for $\pihat_{c} > 0$, then the approximation becomes exact up to $\Theta(1)$.
In other words, as the difference between cluster $c$ and the mixture increases, i.e., $\norm{\*w_{c} - \hat{\*w}}^{2}$ becomes larger, we need $\pihat_{c}$ to decrease at least as quickly so that it balances the second-order term out.

Let us simply assume that $\inner{\nabla f (\hat{\*w})}{\*w_{c} - \hat{\*w}} \approx f (\*w_c) - f (\hat{\*w})$ and that we reset $\theta_c$ before every update so that $\pihat_c = 1/C$. Recall that $\theta_c$ is invariant to constant shifts. Thus, the step above will be
\begin{equation}
    \theta_c - \eta \frac{\partial}{\partial \theta_c} f(\hat{\*w})
    \approx \theta_c - \eta \pihat_{c} (f (\*w_c) - f (\hat{\*w}))
    = - \frac{\eta}{C} f (\*w_c) \underbrace{+ \frac{\eta}{C} f (\hat{\*w}) + \frac{1}{C}}_{\text{do not depend on $c$}}
    .
\end{equation}
This implies that $\pihat_c \propto \exp (-\frac{\eta}{C} f (\*w_c))$ since $\theta_c$ is shift-invariant, which is equal to \cref{eq:algorithm-router-update} with the learning rate multiplied by $C$.
In fact, we can remove router resetting, but it will then be related to the momentum-like router update $\pihat_{c,t+1} \propto \pihat_{c,t} \exp (-\eta f (\*w_{c,t}))$ for a properly scaled $\eta$ with respect to $\sum_{\tau=0}^t \pihat_{c,\tau}$. We leave this exposition for another work.

It should be noted that ignoring the second-order terms is not trivial. Nonetheless, it allowed us to make a direct connection between the updates we use in practice to the theory. We also understand now that the Softmax-parameterization is inferior when the curvature is high or when the mixed weights are far from the "active" clusters ($\*w_c$ with large $\pihat_c$). The second case happens, for example, when $\hat{\*w}$ is in the origin and $\*w_1$ and $\*w_2$ are far and opposite to each other with $\pihat_1=\pihat_2$, but this ambiguity is inherent. Thus, we conclude that the main difficulty that could face a Softmax-parameterized router trained with gradient descent is high curvature, which is a sound conclusion since the log-weights are linear approximations of the function objectives.

\section{Preconditioning LoRAs}\label{app:precond-lora}
Consider the gradient of a linear adaptive layer $\*W+\*L=\*W+\*U\*V^\top$. Let $\*G_{\*U} := \nabla_{\*U} f (\*W + \*U\*V^\top)$ be the gradient w.r.t.\ parameter $\*U$, and similarly for $\*V$ and $\*W$. Note that $\*G_{\*W} = \*G := \nabla f (\*W + \*U\*V^\top)$ because $\partial(\*W + \*U\*V^\top) / \partial\*W = \*I$.
\begin{align*}
    \*U\*V^\top
    &\gets (\*U - \eta_t \*G_{\*U}) (\*V - \eta_t \*G_{\*V})^\top
    \\
    &= \*U\*V^\top - \eta_t (\*U \*G_{\*V}^\top + \*G_{\*U} \*V^\top) + \mathcal{O}(\eta_t^2)
    \\
    &= \*U\*V^\top - \eta_t (\*U \*U^\top \*G + \*G \*V \*V^\top) + \mathcal{O}(\eta_t^2),
\end{align*}
where we used the chain rule, $\*G_{\*U} = \*G \*V$ and $\*G_{\*V} = \*G \*U$. For linear layers, we consider a specific preconditioner designed for low-rank estimation \citep{tong2021accelerating}.
\begin{equation}
    \label{eq:lora-precond}
    \*G_{\*U} \gets \*G_{\*U} (\*V^\top\*V + \epsilon \*I)^{-1},
    \quad\quad\quad
    \*G_{\*V} \gets \*G_{\*V} (\*U^\top\*U + \epsilon \*I)^{-1},
\end{equation}
for some small $\epsilon > 0$. We note that this idea has also been recently explored in the context of LoRAs \citep{zhang2024riemannian}.
The problem of learning a mixture of LoRAs can be ill-conditioned since they can learn at different rates, so we normalize their gradients to help them learn at the same rate \citep{chen2022towards}.
Note that, as $\epsilon \to 0$, the scale of the dynamics of $\*U\*V^\top$ follows that of $\*W$, i.e., $\*U\*V^\top - \eta_t (\*P_\*U \*G + \*G \*P_\*V)$,
where $\*P_\*U := \*U(\*U^\top\*U)^{-1}\*U^\top$ is the projection matrix onto the column space of $\*U$, and similarly for $\*V$.
\separator
For convolution layers, we scale by the Frobenius norm of the preconditioner instead, as the problem would otherwise involve finding the deconvolution of the preconditioner, which is out of the scope of this work. Since $\*U^\top\*U$ and $\*U\*U^\top$ have the same eigenvalues and thus the same norms, the change in $\*U\*V^\top$ will be proportional to $\frac{\*U \*U^\top}{\|\*U \*U^\top\|_F} \*G + \*G \frac{\*V \*V^\top}{\|\*V \*V^\top\|_F}$.

\section{Centering LoRAs}\label{app:center-lora}

The condition \cref{eq:zeta-bounded-heterogeneity} in \cref{ass:new-heterogeneity-bounds} is intuitive as a practical implementation detail.
Indeed, Suppose that we have $C=2$, and at synchronization, we have $\*u = 5$, $\*a_1=4$, and $\*a_2=6$. If the model is $\*u + \sum_c \piopt_c \*a_c$, then an equivalent parameterization is $\*u = 10$, $\*a_1=-1$, and $\*a_2=1$, which has less variation across $\*a_1$ and $\*a_2$. What we have done is simply the following
\begin{align*}
    \*u &\gets \*u + \E_c [\*a_c], \\
    \*a_c &\gets \*a_c - \E_c [\*a_c], \forall c \in [C],
\end{align*}
where $p(c)=1/2$. Since we have additive personalization, it is always possible to add and subtract arbitrary constants that will still yield the same parameterizations.
Choosing $\E_c[\*a_c]$ would simply center the adaptors around zero.

In case of LoRAs, this is not exactly as straightforward as it might seem.
Consider a LoRA $\*a_c = (\*U_c, \*V_c)$, for example. The update $\*a_c = (\*U_c - \E_c[\*U_c], \*V_c - \E_c[\*V_c])$ would not really preserve the parameterization. We should, in fact, have that $\*U_c\*V_c^\top \gets \*U_c\*V_c^\top - \E_c[\*U_c\*V_c^\top ]$.
It remains to get the values of $\*U_c$ and $\*V_c$ individually after the reparameterization.
We can take the closest such values by minimizing the quantity
\begin{equation*}
    \argmin_{\*U, \*V} \quad \norm{\*U\*V^\top - (\*U_c\*V_c^\top - \E_c[\*U_c\*V_c^\top])}^2
\end{equation*}
But the solution is straightforward, as it is precisely the truncated SVD of $\*U_c\*V_c^\top - \E_c[\*U_c\*V_c^\top ]$ (which is not unique). Namely,
\begin{equation}
    \*U, \*\Sigma, \*V^\top \gets \text{Trunc-SVD}_r(\*U_c\*V_c^\top - \E_c[\*U_c\*V_c^\top]), \quad
    \*U_c \gets \*U \*\Sigma^{1/p}, \quad
    \*V_c \gets \*V \*\Sigma^{1/q},
\end{equation}
where $r$ is the original rank of $\*U_c$ and $\*V_c$, and $p$ and $q$ are chosen such that $1/p+1/q=1$. The choice $p=2$ and $q=2$ is standard, but it is not exactly clear how to optimally choose $p$ and $q$ in case of LoRAs or in training FLoRAL models.

\section{Adaptors}\label{app:adaptors}
\subsection{Convolution layer}\label{app:conv-lora}

Here, we explain some of the implementations of ConvLoRAs. In our experiments, we choose the channel+filter ConvLoRA, also called Balanced 2D, because it is the most parameter-efficient and have the best performance as per \cref{tab:hp-convlora}.

\paragraph{Channel-wise} We define $\*U \in \RR^{c_{out} \times r \times k_1 \times k_2}$ and $\*V \in \RR^{r \times c_{in} \times 1 \times 1}$. Let us assume that $c_{out} \leq c_{in}$, without loss of generality. This could be seen as a linear transformation $\*U$ of the $c_{in}$ filters to $r$ filters, followed by the a convolution layer $\*V$ that is similar to the original one, except that it operates on $r$ filters instead. The order of the linear transformation and convolution can also be reversed adaptively so that the number of parameters is minimized. In general, the given construction is more economical in terms of added parameters when $c_{out} \leq c_{in}$. This operation can be written as
\begin{equation}
    \label{eq:lora-conv2d-channel}
    \*L_{i j a b}^\text{channel} := \sum_{k=1}^{r} \*U_{i k a b} \*V_{k j 1 1},
\end{equation}
and the number of its parameters is $(c_{out} k_1 k_2 + c_{in})r$.

\paragraph{Filter-wise}
The filter size of the convolution layer $(k_1, k_2)$ can be reduced to rank-1 filters by two consecutive convolutions with filter sizes $(k_1, 1)$ and $(1, k_2)$. Thus, for rank-$r$ filters, we define $\*U \in \RR^{c_{out} \times r c_{out} \times 1 \times k_2}$ and $\*V \in \RR^{r c_{out} \times c_{in} \times k_1 \times 1}$ as if we are decomposing the filter as a sum of rank-1 matrices. Thus, with some abuse of notation, we get the following low-rank layer
\begin{equation}
    \label{eq:lora-conv2d-filter}
    \*L_{i j a b}^\text{filter} := \sum_{k=1}^{r} \*U_{i (rj+k) 1 b} \*V_{(rj+k) j a 1}.
\end{equation}
It is understood here that the evaluation of what is between the parenthesis gives the index of a single dimension.
This adaptor has $(c_{out} k_2 + c_{in} k_1) c_{out} r$ parameters, which is significantly more than the channel-wise LoRA.

\paragraph{Channel+filter-wise}: In case we want to combine channel-wise and filter-wise low-rank adaptation for channel-wise low rank $r_c$ and filter-wise low rank $r_f$, we define $\*U \in \RR^{c_{out} \times r_f r_c \times 1 \times k_2}$ and $\*V \in \RR^{r_f r_c \times c_{in} \times k_1 \times 1}$, and the adaptive layer becomes
\begin{equation}
    \label{eq:lora-conv2d-channel+filter}
    \*L_{i j a b}^\text{mix}
        := \sum_{k_c=1}^{r_c} \sum_{k_f=0}^{r_f-1} \*U_{i (r_f k_c + k_f) 1 b} \*B_{(r_f k_c + k_f) j a 1}
        = \sum_{k=1}^{r_f r_c} \*U_{i k 1 b} \*V_{k j a 1}.
\end{equation}
Letting $r:=r_f r_c $, this formulation has $(c_{out} k_2 + c_{in} k_1) r$ parameters, which is an order of $c_{out}$ less parameters. In general, we always set $r_f=1$ as filters are usually small. It is sufficient to beat the channel-wise implementation as can will be seen in \cref{sec:conv-lora}.

\paragraph{Reshaped linear}: We can use a regular linear LoRA by stacking the filter dimension of the convolution layer on the input or output channels, adding the LoRA, and then reshaping the layer back into the original shape. 
In other words, we have $\*U \in \RR^{c_{out} k_1 k_2 \times r}$ and $\*V \in \RR^{r \times c_{in}}$, and the convolution LoRA would be
 \begin{equation}
     \*L^{\text{conv}}_{ijab} := \*L^{\text{linear}}_{(k_1 k_2 i + k_2 a + b)j}.
 \end{equation}
 This layer has $(c_{out} k_1 k_2 + c_{in})r$, exactly like the channel-wise LoRA.

In our implementation, we choose the channel+filter option as it is the most parameter-efficient. Indeed, let $c_{max} := \max(c_{in}, c_{out})$ and $c_{min} := \min(c_{in}, c_{out})$, and let $k_{max}$ and $k_{min}$ be defined similarly. Note that we can always construct a channel+filter-wise ConvLoRA such that it has $(c_{min} k_{max} + c_{max} k_{min})r$ parameters. Thus, one can check that this is less than $(c_{min} k_{max} k_{min} + c_{max})r$ only when we have $c_{max} / c_{min} \leq k_{max}$, which is likely satisfied as the standard for most architectures is to have $c_{max} / c_{min} \leq 2$, and clearly $k_{max} \geq 2$.
\separator
We can constrain the number of parameters similarly to the linear layer as $(c_{min} k_{max} + c_{max} k_{min})r \leq \rho c_{min} c_{max} k_{min} k_{max}$. Indeed, if $c_{max} = c_{min}$ and $k_{max} = k_{min}$, we have $r \leq \rho c_{max} k_{max} / 2$. The split of kernel sizes among the two layers can be done adaptively such that $r$ is maximized. In the experiment section, we refer to channel-wise ConvLoRAs methods where $r$ is maximized given $\rho$, and similarly for the channel+filter-wise ConvLoRAs methods where $r$ is maximized given $\rho$, which we denote as Balanced 2D ConvLoRA. We show the comparisons in \cref{fig:hp-convlora,tab:hp-convlora}.

\subsection{Normalization Layers}\label{app:norm-lora}
We consider adaptors to batch normalization, instance normalization, layer normalization, and group normalization. All of these normalization layers start by normalizing a hidden vector of some layer $\*h$ along specific dimensions to get $\hat{\*h}$ and then take a Hadamard product along the normalized dimension as $\bm{\gamma} \odot \hat{\*h}$ (ignoring bias). We propose a simple adaptor $\*L_{\bm{\gamma}}$ that has the same shape and works in exactly the same manner but is initialized to zero. The adaptive output will then be $(\bm{\gamma} + \*L_{\bm{\gamma}}) \odot \hat{\*h}$, which is initially equal to the non-adaptive output.

One normalization layer that requires a more thorough treatment is batch normalization. This is because it normalizes $\*h$ with respect to running statistics calculated from previous batches, so the adaptor would need to normalize with respect to the same running statistics if we want to maintain the same additive form of the output under the same scale.

We now show a simple reparameterization of the BatchNorA that normalizes $\*h$ with respect to the adaptor statistics but trains its parameters with respect to the main statistics. This ensures that the gradient of the adaptor has the same scale as the original gradient. This is useful because we are interested in the federated learning case where those parameters are federated, but \textit{the statistics are local}. Note that this is not the same as FedBN \citep{li2021fedbn}, where both the parameters and the statistics are local.

\paragraph{Batch NorA}\label{app:batchnormlora}
We will show here a batch norm adaptor that might be of interest to the readers, which is left here in the appendix as it is still in the exploratory stage.
Preliminary experiments show decent improvements, as can be seen from \cref{fig:ab-batchnormlora}.

First, recall batch normalization
\begin{equation*}
    \text{BN}(x;\gamma,\beta) = \frac{x - \hat{\mu}(x)}{\sqrt{\hat{\sigma}^2(x) + \epsilon}}\gamma + \beta,
\end{equation*}
where $x \in \RR^{B \times d}$ for batch size $B$ and dimension $d$, $\hat{\mu}(x) \in \RR^{d}$ and $\hat{\sigma}^2(x) \in \RR^{d}$ are the batch mean and variance (or statistics, for short), $\gamma \in \RR^{d}$ and $\beta \in \RR^{d}$ are learnable parameters, and $\epsilon$ is a small number for numerical stability. Here, it is understood that the operation is applied on $x$ batch-wise. Often, batch statistics are estimated with a running (exponential) average during training, and then fixed during evaluation.

When we are faced with multiple tasks or non-iid data distributions, batch normalization layers can actually hurt performance because the batch statistics can be inaccurate and might not necessarily converge \citep{wang2023batch}. We would like to introduce an adaptor for batch norm layers $L_i = [\gamma_i, \beta_i]$, so an intuitive implementation would be as follows:
\begin{equation*}
    \text{BN-Adaptor}_i(x;\gamma,\beta,\gamma_i,\beta_i) = \text{BN}(x;\gamma,\beta) + \text{BN}_i(x;\gamma_i,\beta_i),
    \label{eq:lora-batchnorm1}
\end{equation*}
where both $\gamma_i$ and $\beta_i$ are initialized to 0 so that it is equivalent to the original case at initialization.

However, we want to ensure that our choice of $\gamma_i$ and $\beta_i$ is invariant to the local batch statistics. In other words, we want $\gamma_i$ to behave as a perturbation to $\gamma$, and similarly for $\beta_i$. Let us set $\epsilon = 0$. Now, observe that
\begin{align*}
    \text{BN-Adaptor}_i(x)
    &= \frac{x - \hat{\mu}(x)}{\sqrt{\hat{\sigma}^2(x) }}\gamma
    + \frac{x - \hat{\mu}_i(x)}{\sqrt{\hat{\sigma}^2_i(x)}}\gamma_i
    + \beta + \beta_i \\
    &= \frac{x - \hat{\mu}(x)}{\sqrt{\hat{\sigma}^2(x)}} \gamma
    + \frac{\sqrt{\hat{\sigma}^2(x)}}{\sqrt{\hat{\sigma}^2_i(x)}} \frac{x - \hat{\mu}_i(x)}{\sqrt{\hat{\sigma}^2(x)}}\gamma_i
    + \beta + \beta_i \\
    &= \frac{x - \hat{\mu}(x)}{\sqrt{\hat{\sigma}^2(x)}} \gamma
    + \frac{\sqrt{\hat{\sigma}^2(x)}}{\sqrt{\hat{\sigma}^2_i(x)}}
        \left( \frac{x - \hat{\mu}(x)}{\sqrt{\hat{\sigma}^2(x)}}
        - \frac{\hat{\mu}_i(x) - \hat{\mu}(x)}{\sqrt{\hat{\sigma}^2(x)}} \right)
         \gamma_i
    + \beta + \beta_i
    .
\end{align*}
Let $\hat{m}_i := \frac{\hat{\mu}_i(x) - \hat{\mu}(x)}{\sqrt{\hat{\sigma}^2(x)}}$ be the (normalized) mean shift w.r.t.\ the global mean and $\hat{s}_i := \frac{\hat{\sigma}_i(x)}{\hat{\sigma}(x)}$ be the relative deviation w.r.t.\ the global deviation. We can rewrite the above expression as
\begin{align*}
    \text{BN-Adaptor}_i(x)
    &= \frac{x - \hat{\mu}(x)}{\sqrt{\hat{\sigma}^2(x)}} \gamma
    + \hat{s}_i^{-1} \left( \frac{x - \hat{\mu}(x)}{\sqrt{\hat{\sigma}^2(x)}} - \hat{m}_i \right) \gamma_i
    + \beta + \beta_i \\
    &= \frac{x - \hat{\mu}(x)}{\sqrt{\hat{\sigma}^2(x)}} (\gamma + \hat{s}_i^{-1} \gamma_i)
    - \hat{m}_i \hat{s}_i^{-1} \gamma_i
    + \beta + \beta_i
    .
\end{align*}
Thus, consider a reparameterization $\tilde{\gamma}_i := \hat{s}_i \gamma_i$  and $\tilde{\beta}_i := \beta_i + \hat{m}_i \gamma_i$ so that $\text{LoRA-BN}_i(x) = \text{BN}(x;\gamma,\beta) + \text{BN}_i(x;\tilde{\gamma}_i,\tilde{\beta}_i)$.
We would then have that
\begin{align*}
    \text{BN-Adaptor}_i(x)
    &= \frac{x - \hat{\mu}(x)}{\sqrt{\hat{\sigma}^2(x)}} (\gamma + \hat{s}_i^{-1} \tilde{\gamma}_i)
    + \hat{m}_i \hat{s}_i^{-1} \tilde{\gamma}_i
    + \beta + \tilde{\beta}_i \\
    &= \frac{x - \hat{\mu}(x)}{\sqrt{\hat{\sigma}^2(x)}} (\gamma + \gamma_i)
    + \beta + \beta_i
    .
\end{align*}

Therefore, a reparameterization that is invariant to local batch statistics would be as follows
\begin{equation}
    \gamma_i \longrightarrow  \frac{\hat{\sigma}_i(x)}{\hat{\sigma}(x)} \gamma_i,
    \quad\quad
    \beta_i \longrightarrow \beta_i + \frac{\hat{\mu}_i(x) - \hat{\mu}(x)}{\sqrt{\hat{\sigma}^2(x)}} \text{sg}(\gamma_i),
    \label{eq:lora-batchnorm-reparam}
\end{equation}
where we used the stop gradient operator $\text{sg}(\gamma_i)$ to emphasize that $\gamma_i$ is given in $\beta_i$'s parameterization (i.e., would not pass its gradients through $\beta_i$). Note that this $\gamma_i$ is \textbf{not} the reparameterized one. It is helpful to think of the expressions on the RHS of the arrows in \cref{eq:lora-batchnorm-reparam} as arguments to the batch norm function, and that $\gamma_i$ and $\beta_i$ are parameters to be optimized.

\paragraph{Experiment}
Consider the following small adjustment to the synthetic MLP task. For each client $k$, we first take a fixed sample of $\*x^k$, compute the hidden vectors, and then normalize them before feeding them to the activation function and final layer. The normalization is critically dependent on the sampled $\*x^k$ for each client. This construction makes the problem more amenable to a batch normalization layer after the first layer, so we use this model and consider Batch-NorA. In addition, we consider use batch normalization in the VGG-8 model we originally used for CIFAR-100.

The results in \cref{fig:ab-batchnormlora} are decent and show that the particular setting of reparameterized Batch-NorA \cref{app:batchnormlora} with local statistics can offer good improvements. We note that the reparameterization is equivalent to normalization with respect to the main batch norm and then rescaling and shifting with respect to the adaptor's parameters. The convenience of this reparameterization is that it does not require any adjustment to the batch norm layer in the adaptor, and the reparameterization can be seamlessly done with PyTorch's parameterization module.
\begin{figure}
    \centering
    \includegraphics[width=0.6\linewidth]{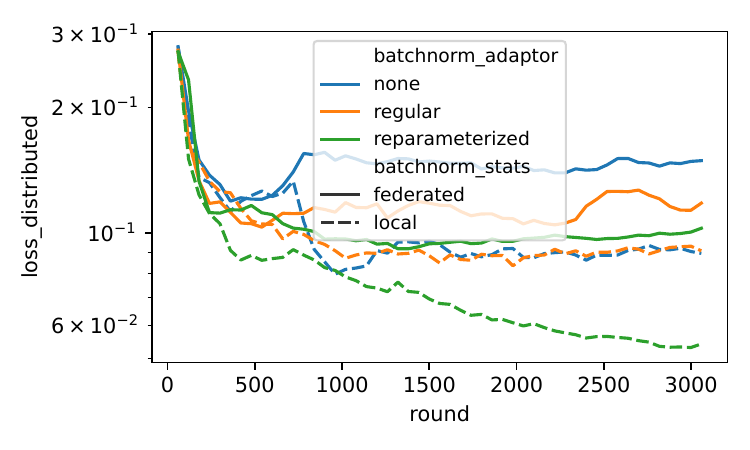}
    \caption{Loss on Synthetic MLP + BN dataset.}
    \label{fig:ab-batchnormlora}
\end{figure}

\section{Extra Experimental Details}\label{app:experiment}
In this section, we show extra experimental details and show missing tables and figures.

\subsection{Synthetic Linear}\label{app:synthetic}
Consider a regression task where we want to learn $\*y \in \RR^{d_y}$ given $\*x \in \RR^{d_x}$, where $\*x \sim \mathcal{N}(0,\*I_{d_x})$. We construct two versions of this regression task: one is based on a linear model plus a personalized LoRA, and the other is based on a similar setup on the first layer of a two-layer ReLU net.
For both problems, we sample the parameters of the dataset element-wise from the normal distribution $\mathcal{N}(0,\frac{1}{\sqrt{d_{in}}})$, where $d_{in}$ is the input dimension of the layer.
\separator
The target and the model are such that
\begin{equation}\label{eq:synthetic-linear-model}
    \*y^k(\*x) =  \sum_{c=1}^C \bm{\pi}^k_c (\*W + \alpha \*U_{c} \*V_{c}^\top) \*x,
    \quad\quad
    \hat{\*y}^k(\*x) = \sum_{c=1}^C \hat{\bm{\pi}}^k_c (\hat{\*W}^k + \hat{\*U}^k_c (\hat{\*V}^k_c)^\top) \*x,
\end{equation}
where $\*W \in \RR^{d_y \times d_x}$, $\*U_c \in \RR^{d_y \times r}$, $\*V_c \in \RR^{d_x \times r}$, and $\alpha \in \RR$, and similarly for the trained parameters. The ground-truth model is designed such that the clients share a common structure without making any assumption about the distances of the personal solutions to the solution of \cref{eq:fl}.
Notice that $\alpha$ can make the personal solutions arbitrarily far from $\*W$, yet they differ in rank $r$ only. For example, a simple construction would be $\*W = \*I$ and $\*U_c = \*V_c = \*e_c$, where $\*e_i$ is the standard basis vector of the $i$-th coordinate (e.g., $\*e_1 = (1, 0, \cdots)^\top$). As for the ground-truth router assignment, we consider a {\it diagonal} assignment such that $\bm{\pi}^k_c = \delta_{(k \mod C)c}$, so clients $mk$ are in the same cluster for positive integers $m$.
\separator
For each client $k$, we take a fixed sample of $\*x^k$ and $\*y^k$ of size $N^k$ such that $N^k < d$, but $\sum_{k=1}^K N^k > d$, where $d = d_y d_x$ is the original model size. This is to make it difficult for the model to fit $\hat{\*W}$ locally due to under-parameterization. Thus, collaboration is important to generalize well, but collaboration with the wrong clients can be detrimental. For this dataset, we chose $N^k \approx 0.25 d$. The objective for this regression task is the MSE loss $\frac{1}{2} \norm{\hat{\*y}^k(\*x) - \*y^k}^2.$.

\subsection{Synthetic MLP}\label{sec:synthetic-mlp-experiments}
Consider a 2-layer ReLU neural net, or multi-layer perceptron (MLP) for short\footnote{We write the ReLU function as $(\cdot)_+$.}
\begin{equation}\label{eq:synthetic-mlp-model}
    \*y^k := \*\Phi \left(\sum_{c=1}^C \bm{\pi}^k_{c} (\*W + \*U_c \*V_c^\top) \*x\right)_+,
\end{equation}
where now $\*W \in \RR^{d_h \times d_x}$, $\*U_c \in \RR^{d_h \times r}$, $\*V_c \in \RR^{d_x \times r}$, and $\*\Phi \in \RR^{d_y \times d_h}$ for some hidden dimension $d_h$, and a diagonal router assignment $\bm{\pi}^k_c = \delta_{(k \mod C)c}$. We use normal initialization with variance proportional to the input dimension of the layer.
\separator
The regression model has the exact same form. However, the hidden dimension is wider, i.e., it is $m d_h$ for some integer $m \geq 1$.
This is mainly because we want to control for the effect of not being able to fit the target model (we set $m=2$ in our experiments). We also have $N^k \approx 0.5 d$, which is twice as many data points than the linear task as this task is more difficult.



\subsection{Ablation and Hyperparameters}\label{sec:ab-floral}
\begin{table}
    \begin{minipage}{0.44\linewidth}
        \centering
        \caption{Ablation of ConvLoRAs.}
        \scalebox{0.8}{
\begin{tabular}{llrrr}
\toprule
\multirow{2}{*}{ConvLoRA} & \multicolumn{2}{c}{CIFAR-10} & \multirow{2}{*}{CIFAR-100} \\
                                &  R & LS        & \\
\midrule
Balanced 2D & {\bf 70.2} & {\bf 74.1} & {\it 51.7} \\
In Layer & 67.6 & 73.5 & 49.1 \\
Out Layer & {\it 68.5} & {\it 74.0} & {\bf 51.9} \\
None & 67.6 & 73.9 & 50.8 \\
\bottomrule
\end{tabular}
}
        \label{tab:hp-convlora}
    \end{minipage}
    \begin{minipage}{0.55\linewidth}
        \centering
        \caption{Ablation of adaptors.}
        \scalebox{0.8}{
\begin{tabular}{lllrrr}
\toprule
 \multirow{2}{*}{Adaptors} & \multirow{2}{*}{Bias} & \multicolumn{2}{c}{CIFAR-10} & \multirow{2}{*}{CIFAR-100} \\
                                &                                &    R & LS        &     \\
\midrule
\multirow{2}{*}{ConvLoRA} & \xmark & 69.8 & 72.7 & 45.1 \\
 & \cmark & 67.6 & 73.4 & 45.8 \\
\multirow{2}{*}{LoRA} & \xmark & 68.7 & 73.7 & 46.6 \\
 & \cmark & 67.6 & {\it 73.9} & {\it 50.8} \\
\multirow{2}{*}{Both} & \xmark & {\it 68.9} & 73.3 & 47.9 \\
 & \cmark & {\bf 70.2} & {\bf 74.1} & {\bf 51.7} \\
\multirow{2}{*}{None} & \xmark & 64.6 & 21.9 & 12.1 \\
 & \cmark & 64.6 & 21.9 & 12.1 \\
\bottomrule
\end{tabular}
}
        \label{tab:ab-floral}
    \end{minipage}
\end{table}
\textbf{Adaptors.} We study the effect of removing each of the adaptors introduced in \cref{sec:lora}. We chose the CIFAR-10 with both tasks and CIFAR-100 for the ablation study of the LoRAs, ConvLoRAs, and bias adaptors.
We show in \cref{fig:ab-floral} and \cref{tab:ab-floral} that the full combination of LoRA, ConvLoRA, and adaptive biases can consistently achieve the top accuracy.


\textbf{$\rho$ and $C$.}
In \cref{tab:hp-floral}, we see that choosing $C$ to be less than the number of ground-truth clusters can hurt performance.
On the other hand, using a significantly larger $C$ can hurt performance for smaller $\rho$,
but a larger $\rho$ fixes this by reaching similar accuracies to the case where we know the exact number of ground-truth clusters.
We can also see the plots in \cref{fig:hp-floral}.

\textbf{ConvLoRA.}\label{sec:hp-convlora}
We compare the different methods for implementing ConvLoRAs as proposed in \cref{sec:conv-lora}.
We propose to balance the channels and the kernel sizes such that we achieve the most parameter-efficient ConvLoRA, which we refer to as Balanced 2D as it is specific to the two dimensional case. On the other hand, we can balance only the channels and fix the kernel sizes to either the in layer or the out layer.
We show in \cref{tab:hp-convlora,fig:hp-convlora} that the Balanced 2D case is consistently the best option given a fixed $\rho$.
Recall that MNIST and CIFAR-10 have 4 ground-truth clusters, and CIFAR-100 have 10.


\subsection{Datasets Meta-data}\label{app:datasets-metadata}
See \cref{tab:datasets-info}.
\begin{table}
    \centering
    \caption{Metadata of the considered federated datasets ($K$ = \# of clients, $C$ = \# of clusters, $p$ = ratio of sampled clients per round).}
    \begin{tabular}{|l|c|c|c|c|c}
        \hline
        Dataset & $K$ & $C$ & $p$ & Model \\
        \hline \hline
        Synthetic Linear & 10 & 2 & 100\% & Linear \cref{eq:synthetic-linear-model} \\
        Synthetic MLP & 20 & 4 & 100\% & MLP \cref{eq:synthetic-mlp-model} \\
        MNIST & 300 & 4 & 10\% & MLP \\
        CIFAR-10 & 20 & 4 & 100\% & 2$\times$Conv$\to$MLP\\
        CIFAR-100 & 100 & 10 & 50\% & VGG-8 \\
        \hline
    \end{tabular}
    \label{tab:datasets-info}
\end{table}

\subsection{Missing Figures}\label{app:missing-figures}

In this section, we simply show missing figures from our experiments for completeness.
In particular, we show plots of the aggregated testing loss per client, which shows how the other methods overfit in comparison to FLoRAL, especially in the low-data regime.


\begin{figure}
    \centering
    \includegraphics[width=0.49\linewidth]{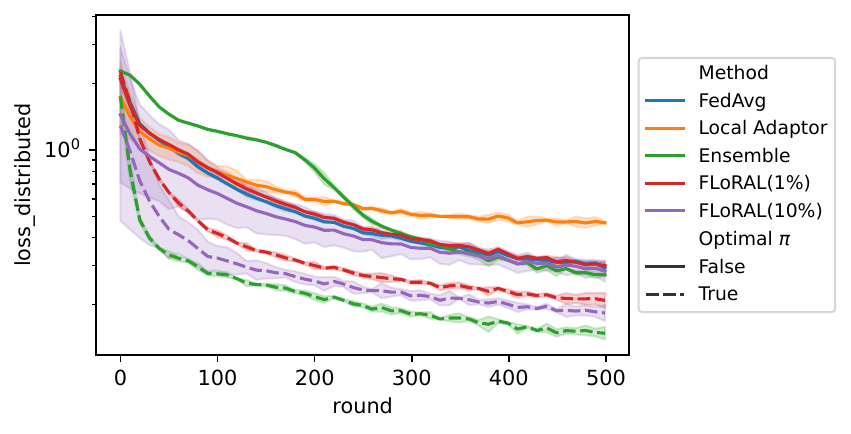}
    \includegraphics[width=0.49\linewidth]{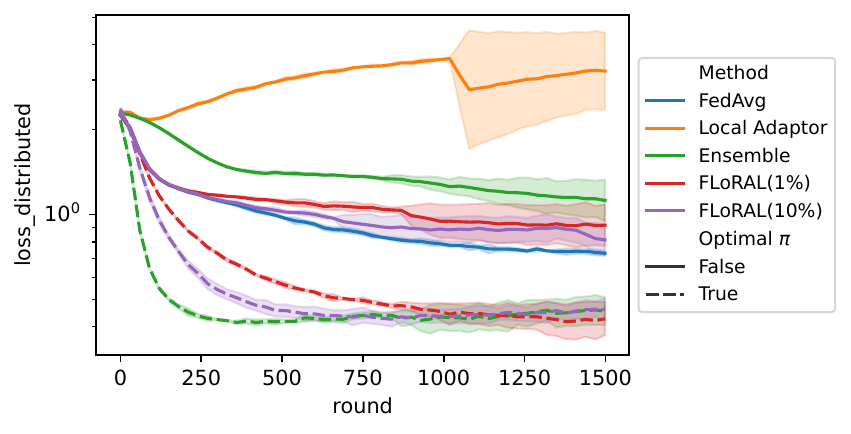}
    \caption{Test loss on MNIST-R (left = Full, right = Reduced).} 
    \label{fig:run-methods-mnist-rotate}
\end{figure}
\begin{figure}
    \centering
    \includegraphics[width=0.49\linewidth]{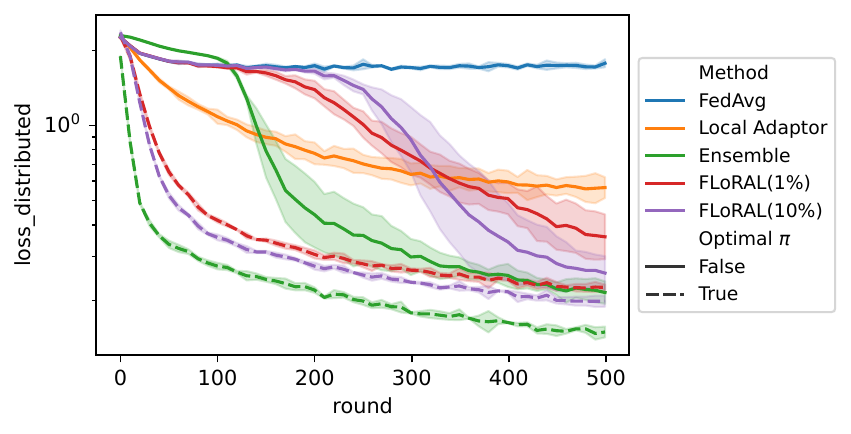}
    \includegraphics[width=0.49\linewidth]{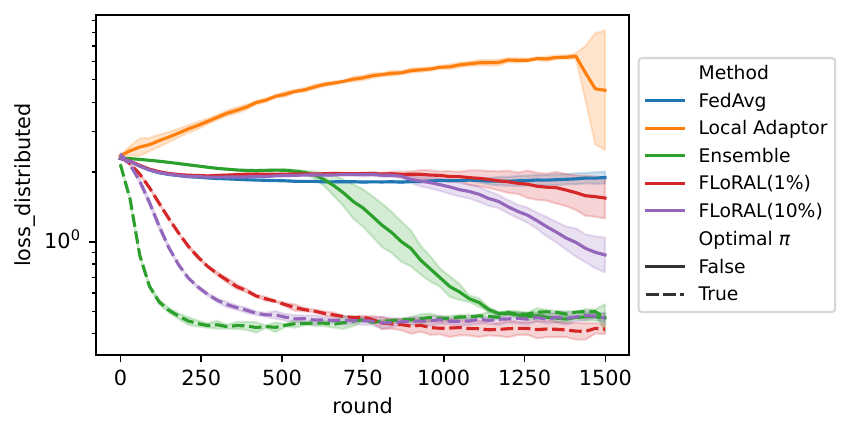}
    \caption{Test loss on MNIST-LS (left = Full, right = Reduced).} 
    \label{fig:run-methods-mnist-label-shift}
\end{figure}
\begin{figure}
    \centering
    \includegraphics[width=0.49\linewidth]{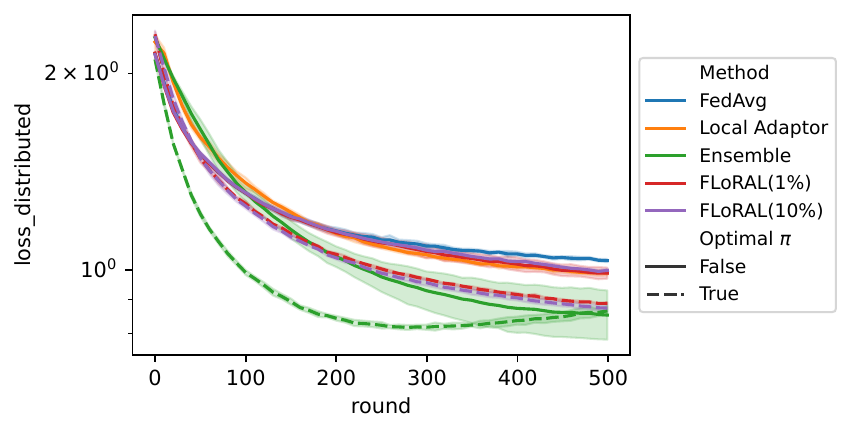}
    \includegraphics[width=0.49\linewidth]{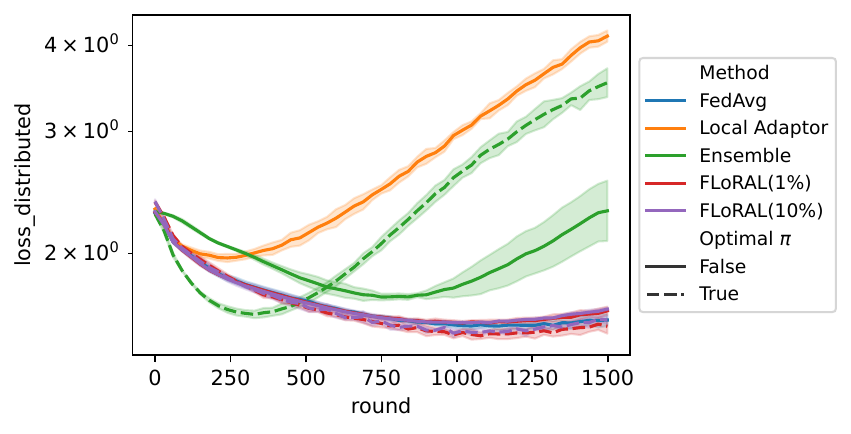}
    \caption{Test loss on CIFAR-10-R (left = Full, right = Reduced).} 
    \label{fig:run-methods-cifar10-rotate}
\end{figure}
\begin{figure}
    \centering
    \includegraphics[width=0.49\linewidth]{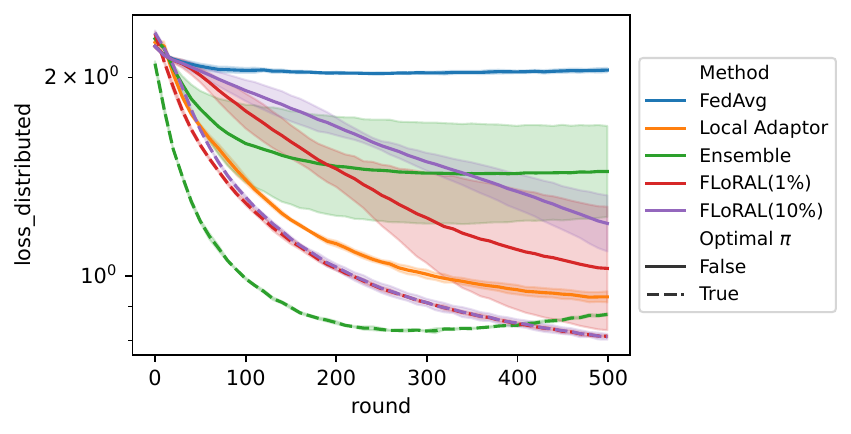}
    \includegraphics[width=0.49\linewidth]{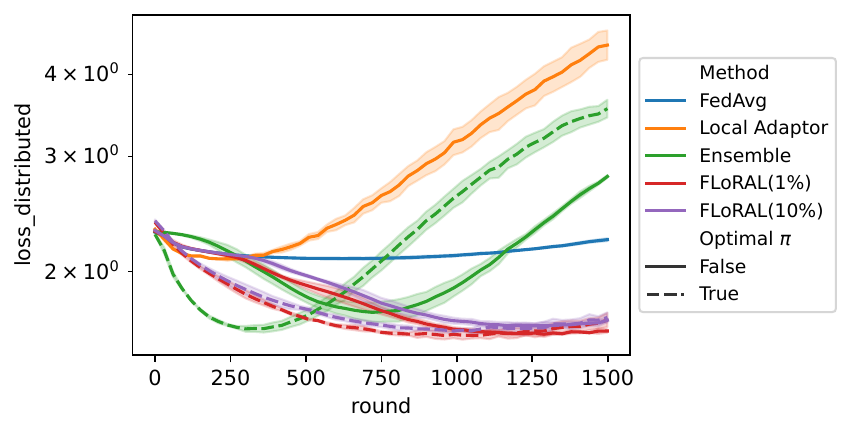}
    \caption{Test loss on CIFAR-10-LS (left = Full, right = Reduced).} 
    \label{fig:run-methods-cifar10-label-shift}
\end{figure}
\begin{figure}
    \centering
    \includegraphics[width=0.49\linewidth]{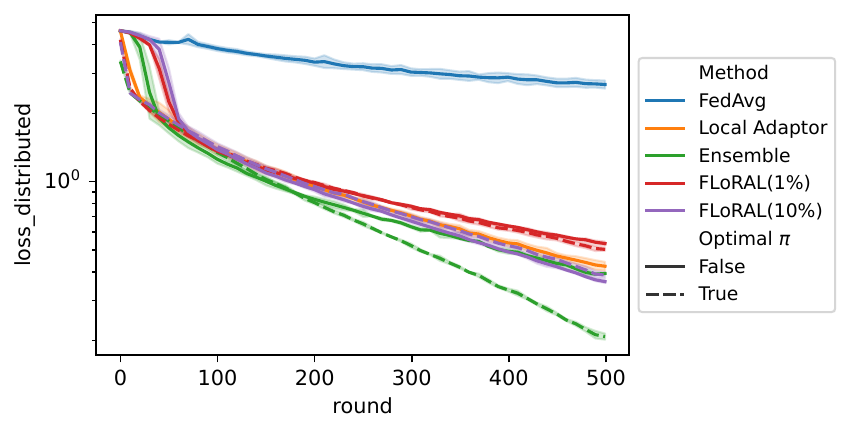}
    \includegraphics[width=0.49\linewidth]{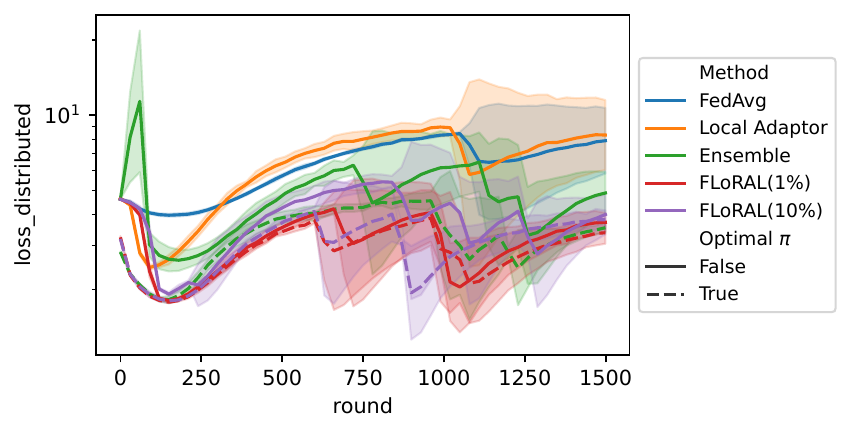}
    \caption{Test loss on CIFAR-100 (left = Full, right = Reduced).} 
    \label{fig:run-methods-cifar100}
\end{figure}


\begin{figure}
    \centering
    \includegraphics[width=0.32\linewidth]{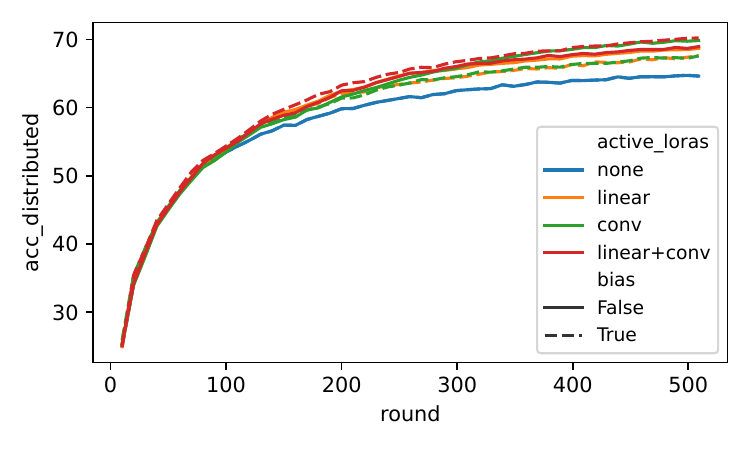}
    \includegraphics[width=0.32\linewidth]{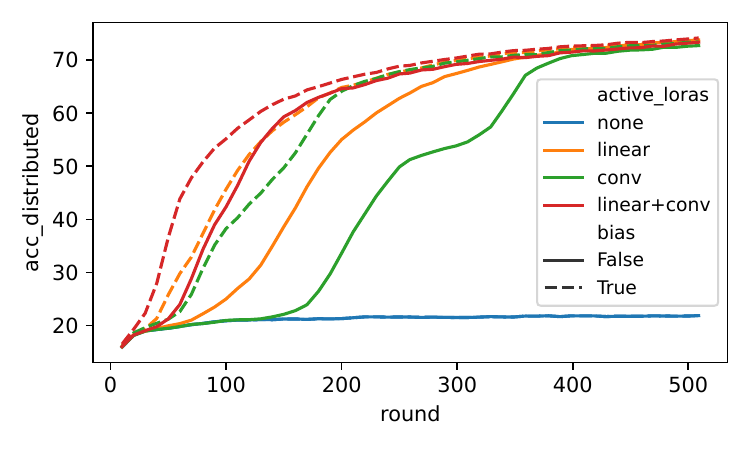}
    \includegraphics[width=0.32\linewidth]{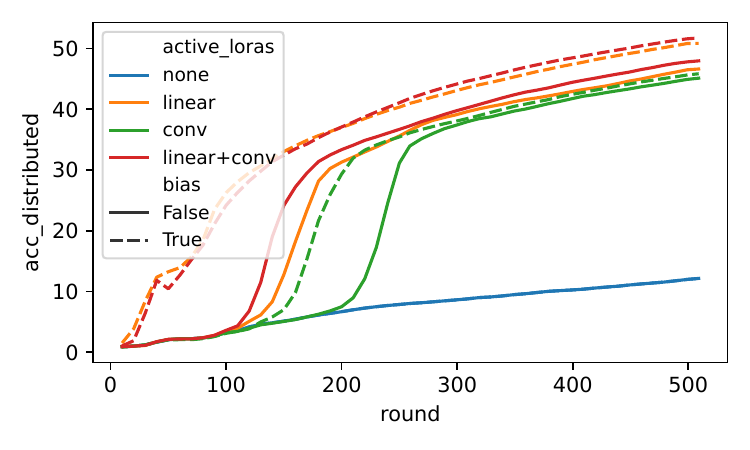}
    \caption{Ablation study of FLoRAL Adaptors (left: CIFAR-10-R, middle: CIFAR-10-LS, right: CIFAR-100).}
    \label{fig:ab-floral}
\end{figure}
\begin{figure}
    \centering
    \includegraphics[width=0.32\linewidth]{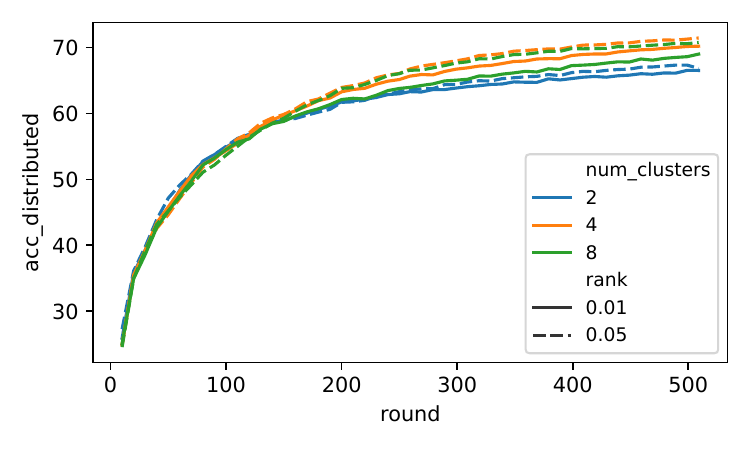}
    \includegraphics[width=0.32\linewidth]{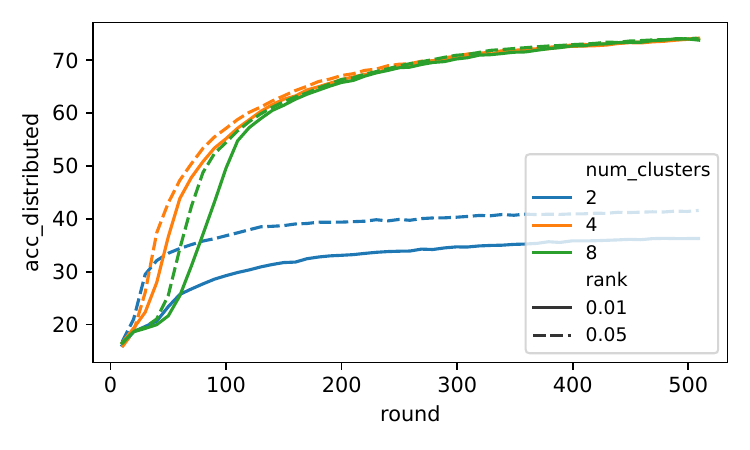}
    \includegraphics[width=0.32\linewidth]{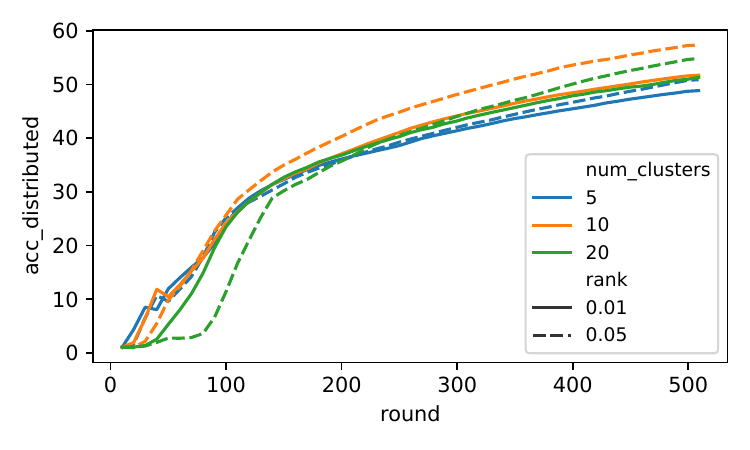}
    \caption{Varying $\rho$ and $C$ (left: CIFAR-10-R, middle: CIFAR-10-LS, right: CIFAR-100).} 
    \label{fig:hp-floral}
\end{figure}
\begin{figure}
    \centering
    \includegraphics[width=0.32\linewidth]{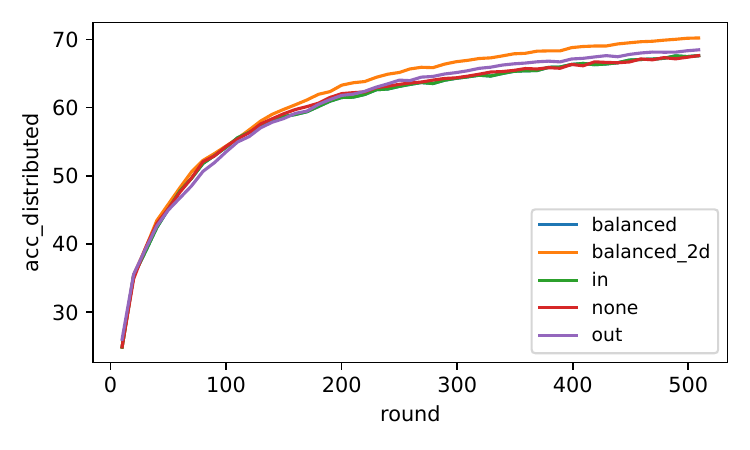}
    \includegraphics[width=0.32\linewidth]{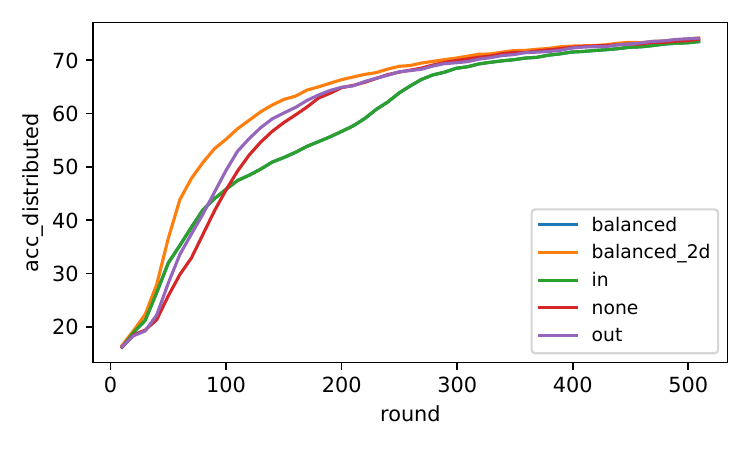}
    \includegraphics[width=0.32\linewidth]{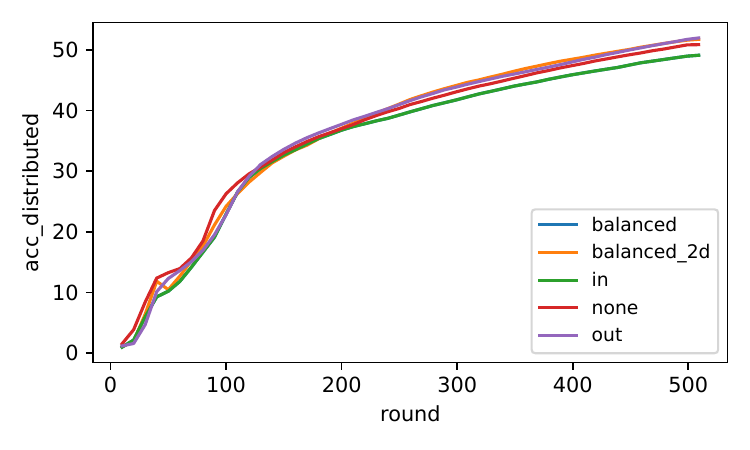}
    \caption{Accuracy of ConvLoRA as described in \cref{sec:hp-convlora}  (left: CIFAR-10-R, middle: CIFAR-10-LS, right: CIFAR-100).} 
    \label{fig:hp-convlora}
\end{figure}


\end{document}